\documentclass{article}

\usepackage{microtype}
\usepackage{graphicx}
\usepackage{subfigure}
\usepackage{booktabs}
\usepackage{multirow}

\usepackage{amsmath}
\usepackage{amssymb}
\usepackage{mathtools}
\usepackage{amsthm}
\usepackage[hypertexnames=false]{hyperref}

\usepackage[accepted]{icml2025}

\usepackage{physics}
\usepackage{bm}
\usepackage{enumerate}
\usepackage{enumitem}
\usepackage{color}
\usepackage{ascmac}
\usepackage{latexsym}
\usepackage{wrapfig}
\usepackage[noend]{algpseudocode} 
\usepackage{algorithm}
\usepackage{float}
\usepackage{url}

\usepackage[capitalize,noabbrev]{cleveref}

\crefname{equation}{}{} 
\crefname{prop}{Proposition}{Propositions}
\crefname{function}{Function}{Functions}
\Crefname{equation}{Eq.}{Eqs.}
\usepackage{autonum}

\hyphenpenalty=1000\relax
\exhyphenpenalty=1000\relax
\sloppy

\usepackage{comment}

\usepackage{soul}

\theoremstyle{plain}
\newtheorem{thm}{Theorem}[section]
\newtheorem{prop}[thm]{Proposition}
\newtheorem{lem}[thm]{Lemma}

\theoremstyle{definition}
\newtheorem{dfn}[thm]{Definition}

\theoremstyle{remark}

\floatstyle{ruled}
\newfloat{function}{htbp}{lof}
\floatname{function}{Function}

\usepackage[textsize=tiny]{todonotes}

\icmltitlerunning{Modified K-means Algorithm with Local Optimality Guarantees}

\begin{document}

\twocolumn[
\icmltitle{Modified K-means Algorithm with Local Optimality Guarantees}

\icmlsetsymbol{equal}{*}

\begin{icmlauthorlist}
\icmlauthor{Mingyi Li}{yyy}
\icmlauthor{Michael R. Metel}{comp}
\icmlauthor{Akiko Takeda}{yyy,xxx}
\end{icmlauthorlist}

\icmlaffiliation{yyy}{Department of Mathematical Informatics, The University of Tokyo, Tokyo, Japan}
\icmlaffiliation{comp}{Huawei Noah's Ark Lab, Montréal, Canada}
\icmlaffiliation{xxx}{Center for Advanced Intelligence Project, RIKEN, Tokyo, Japan}

\icmlcorrespondingauthor{Michael R. Metel}{michael.metel@h-partners.com}

\icmlkeywords{K-means algorithm, local optimality, Bregman divergence}

\vskip 0.3in
]

\printAffiliationsAndNotice{}

\begin{abstract}
The K-means algorithm is one of the most widely studied clustering algorithms in machine learning. While extensive research has focused on its ability to achieve a globally optimal solution, there still lacks a rigorous analysis of its local optimality guarantees. In this paper, we first present conditions under which the K-means algorithm converges to a locally optimal solution. Based on this, we propose simple modifications to the K-means algorithm which ensure local optimality in both the continuous and discrete sense, with the same computational complexity as the original K-means algorithm. As the dissimilarity measure, we consider a general Bregman divergence, which is an extension of the squared Euclidean distance often used in the K-means algorithm. Numerical experiments confirm that the K-means algorithm does not always find a locally optimal solution in practice, while our proposed methods provide improved locally optimal solutions with reduced clustering loss. Our code is available at \href{https://github.com/lmingyi/LO-K-means}{https://github.com/lmingyi/LO-K-means}.
\end{abstract}

\section{Introduction}
K-means clustering is one of the most widely recognized methods for partitioning datasets into homogeneous groups. The K-means clustering problem, formulated as a nonconvex optimization problem in (P1), is NP-hard \cite{aloise2009}. 
The K-means algorithm, also known as Lloyd's algorithm, was first proposed by \citet{lloyd1982least} as a quantization method for pulse-code modulation and has since been widely applied and extensively studied across various domains, including image recognition, text mining, deep learning model weight quantization, healthcare, marketing, and education \cite{csurka2004visual,steinbach2000comparison,han2015deep,kim2024squeezellm,liu2005,bharara2018application,moubayed2020student}.

It is well known that the solution generated by the K-means algorithm is highly sensitive to the algorithm's initialization. To mitigate this issue, various strategies have been introduced. These include methods for improving the initial cluster centers, such as K-means++ \cite{arthur2006k}, and approaches that involve running the algorithm multiple times with different initializations \cite{jain2010data}. Nevertheless, it still seems to be largely accepted from at least the 1980s that the K-means algorithm converges to a locally optimal solution: the scikit-learn documentation states that ``in practice, the K-means algorithm is very fast... but it falls in local minima'' \cite{scikit-learn2025kmeans}, while \citet{grunau2022adapting} confirm that ``Lloyd's algorithm converges only to a local optimum''. Similarly, \citet{balcan2018data} describe that ``Lloyd's method is used to converge to a local minimum''. For the local optimality guarantee,
\cite{selim1984k} is often cited, which 
attempted to prove that the K-means algorithm converges to a locally optimal solution, but their proof only demonstrated that the K-means algorithm converges within a finite number of iterations, and indeed, we show by counterexample that the K-means algorithm does not always converge to a locally optimal solution in \cref{cha:counter}.  

This prevalent assumption about local optimality has also shaped research on improving the K-means algorithm. Variants such as K-means{-}{-}, which is robust to outliers, and K-means clustering using Bregman divergences, are also claimed to converge to a locally optimal solution (\citealp[Theorem 4.1]{chawla2013k}, \citealp[Proposition 3]{banerjee2005clustering}). In these studies, the convergence to a locally optimal solution is justified by the fact that the loss function monotonically decreases after each iteration. However, there still appears to be a gap between these proofs and a guarantee of local optimality. 
Based on this,
the following question naturally arises: 
\begin{center}
{\it 
Can we modify the K-means algorithm to guarantee local optimality while maintaining the same order of computational complexity?}
\end{center}

The answer is “yes” for the K-means algorithm using the squared Euclidean distance, as well as the more general setting of weighted K-means using Bregman divergences, which we focus on in this work.

Besides \cite{selim1984k}, another paper related to our work is \cite{peng2005cutting}, which considers the K-means problem with squared Euclidean distance, and gives necessary and sufficient conditions for locally optimal solutions. An algorithm which converges to a D-local optimal solution is presented (see \cref{def:D-local_def}), as well as a method to find a global optimal solution. Recognizing the dominance of the K-means algorithm for solving the K-means problem, our focus was not to create a completely new algorithm, but to develop simple modifications to the K-means algorithm which can be easily added into already established codebases. With this goal in mind, our necessary and sufficient conditions for local optimality, given in \cref {cha:K-means}, were developed for verification of solutions generated by the K-means algorithm, while also being applicable for general Bregman divergences. 

\textbf{Our contribution}

\begin{itemize}
\item 
The K-means algorithm has long been assumed to converge to a locally optimal solution, even if it does not reach a globally optimal solution. 
To the best of our knowledge, this is the first study to point out this misconception.
We provide a counterexample in \cref{cha:counter} showing that the K-means algorithm does not always converge to a locally optimal solution. 
\item
We propose practical modifications to the K-means algorithm that introduce minimal computational and implementation overhead. It is proven that these modifications guarantee that the K-means algorithm converges to either a locally optimal solution of its continuous relaxation (called C-local, see \cref{def:C-local_def}) or to a discrete locally optimal solution (D-local), without increasing the per-iteration time complexity nor space complexity of the original K-means algorithm.
\item
By performing extensive experiments using both synthetic and real-world datasets using different Bregman divergences, we find that our modified K-means algorithms (LO-K-means) result in improved solutions with decreased clustering loss. In particular, one of our proposed variants, Min-D-LO, seems to strike the correct balance between improved accuracy and computation time, by being able to significantly decrease the clustering loss, while also being much faster than other tested D-local algorithms, 
including the algorithm proposed in \cite[Section 3.2.1]{peng2005cutting}. 
\end{itemize}

\textbf{Notation}
For an $N \in \mathbb{N}$, let $[N] \coloneq \{1, 2, \ldots, N\}$. Let $\mathbb{R}$, $\mathbb{R}_{+}$, and $\mathbb{R}_{++}$ denote the set of real numbers, nonnegative real numbers, and positive real numbers, respectively. The effective domain of a function $f: \mathbb{R}^d \to \mathbb{R} \cup \{+\infty\}$ is defined as
\text{dom}$(f) \coloneq \{x \in \mathbb{R}^d \mid f(x) < +\infty\}$. For a set $X\subset\mathbb{R}^d$, let $\operatorname{co}(X)$ denote its convex hull. In a Euclidean metric space $ (X, d) $, let  $B(a, \epsilon) \coloneq \{x \in X \mid d(a, x) < \epsilon\}$
represent an open ball of radius $ \epsilon>0 $ centered at $ a\in X $.
To describe the rate of convergence, we employ Landau notation. For functions $f$ and $g: \mathbb{R}_{+} \to \mathbb{R}_{+}$, the notation $f(x) = O(g(x))$ is used as $x \to \infty$ to indicate that there exists a constant $M > 0$ such that $\abs{f(x)} < M\abs{g(x)}$ for sufficiently large $x$. 

\section{Preliminaries}
\label{cha:pre}

\subsection{Problem Formulation}
\label{sec:formula}

In this work, we consider a generalized K-means clustering problem, the weighted K-means problem using Bregman divergences, where we are given a set of $N$ unique points in $\mathbb{R}^d$, $X = \{x_{i}\}_{i = 1}^{N}$, with corresponding positive weights $W = \{w_i\}_{i = 1}^{N}\subset\mathbb{R}_{++}$, and a number of clusters $K\in \mathbb{N}$ to partition the $N$ points of $X$. 

We use $ P \in \mathbb{R}^{K \times N} $ to denote the assignment of points to clusters, where $ p_{i, j} \in \{0, 1\} $, and $ p_{i, j} = 1 $ indicates that point $ j $ is assigned to cluster $ i $. The centers of the clusters are represented by $ C \in \mathbb{R}^{d \times K} $, where each column corresponds to a center $ c_i \in \mathbb{R}^d $ for $ i = 1, 2, \dots, K $. Furthermore, the sum of the weights of points belonging to each cluster $k$ is expressed as $s_k(P) = \sum_{n=1}^N p_{k, n} w_n$.
We denote the dissimilarity measure as $D(x, c)$, which we assume is a Bregman divergence (see  \cref{sec:Bregman}, e.g. $D(x, c) = \norm{x - c}_2^2$ for the squared Euclidean distance).\\

The weighted K-means problem is formulated as follows. 
\begin{align}
    \text{(P1)} \quad  \min_{P,C} \quad & f(P,\; C) = \sum_{k=1}^K \sum_{n=1}^N p_{k,n} w_n D(x_n, c_k) \\
    \text{s.t.} \quad & P \in S_1, \; c_k \in R\quad \forall k \in [K],
    \label{eq:formula}
\end{align}
where the feasible region $S_1$ of $P$ is given as 
\begin{align}
    \left\{ P \in \mathbb{R}^{K \times N} \;\middle|\;
    \begin{aligned}
        & \sum_{k=1}^K p_{k,n} = 1 \quad \forall n \in [N], \\
        & p_{k,n} \in \{0, 1\} \quad \forall k \in [K], n \in [N]
    \end{aligned}
    \right\}\label{def:S1}
\end{align}
and the feasible region $R\subseteq \mathbb{R}^d$\ of all $c_k$ is dependent on the choice of $D$ (when $D(x, c) = \norm{x - c}_2^2$, $R=\mathbb{R}^d$). The first constraint in \cref{def:S1} ensures that each point is assigned to one cluster. Throughout this paper, we refer to $f(P,C)$ as the clustering loss.

If the number of points in the dataset, $N$, is less than or equal to the number of clusters $K$, the global optimal value of (P1) is trivially equal to $0$. Therefore, this work focuses on situations where $N>K$. The assumed uniqueness of all $x\in X$ is also without a loss of generality, given that if there exists $x_i,x_j\in X$ such that $x_i=x_j$, the point $x_j$ can be removed from $X$, with $w_i$ updated to equal $w_i+w_j$. 

We also consider the following problem, where 
the domain $S_2$ of $P$ is a continuous relaxation of $S_1$ in (P1):
\begin{align}
    \text{(P2)} \quad  \min_{P,C} \quad & f(P,\; C) = \sum_{k=1}^K \sum_{n=1}^N p_{k,n} w_n D(x_n, c_k) \\
    \text{s.t.} \quad & P \in S_2, \; c_k \in R\quad \forall k \in [K], 
\end{align}
where
\begin{align}
S_2 \coloneqq \left\{ P \in \mathbb{R}^{K \times N} \;\middle|\;
    \begin{aligned}
        & \sum_{k=1}^K p_{k,n} = 1 \quad \forall n \in [N], \\
        & p_{k,n} \geq 0 \quad \forall k \in [K], n \in [N]
    \end{aligned}
    \right\}.
\end{align}

It is noted that $S_1$ is the set of all extreme points of $S_2$.

For a given $P \in S_2$, we define 
\begin{align}
    F(P) &\coloneq \min_{C\in R^K} f(P, C) ,\label{def:F(P)} \\
    C^*(P) &\coloneq \arg\min_{C \in R^K} f(P, C).
\end{align}
In general $C^*(P)$ is not a singleton. We denote an element of $C^*(P)$ as $C^*_P$, i.e., $C^*_P\in C^*(P)$, and denote its $k$-th column (cluster center) as $(c^*_P)_k$.

We can rewrite the optimization problem (P1) as simply  $\displaystyle\min_{P\in S_{1}} F(P)$, and similarly for $(P2)$. It is known \cite[Lemma 1]{selim1984k} that $F(P)$ is concave, which is easily derived from the linearity of the clustering loss with respect to $P$, irrespective of the choice of $D$.

In addition, we define $ A(C) $ as the set of optimal extreme points that minimize the clustering loss when $ C $ is fixed,
\begin{equation}
A(C) \coloneq \arg\min_{P \in S_1}f(P, C). \label{Optimal assignment set}
\end{equation}

\subsection{Bregman Divergences}
\label{sec:Bregman}

In this work we focus on dissimilarity measures $D$ belonging to the class of Bregman divergences, which generalize the squared Euclidean distance.

\begin{dfn}[{\citet{bregman1967relaxation}}]
\label{bregman divergence}
Let $ \phi : \text{dom} (\phi) \to \mathbb{R} $ be a strictly convex function defined on a convex set 
$\text{dom} (\phi)$ such that $\phi$ is differentiable on $\text{int dom} (\phi)$.
The Bregman divergence $ D_\phi: \text{dom} (\phi) \times \text{int dom} (\phi) \to \mathbb{R}_{+} $ is defined as
\[
D_\phi(x, y) = \phi(x) - \phi(y) - \langle x - y, \nabla \phi(y) \rangle .
\label{def:bregman}
\]
\end{dfn}

See \cref{sec:bregman} for common choices of Bregman divergence, their corresponding $\phi$, and $\text{dom} (\phi)$, which are frequently used in clustering. For more examples see \cite[Table 1]{banerjee2005clustering}, \cite[Example 1]{bauschke2017descent}, and \cite[Figure 2.2]{ackermann2009algorithms}.
These examples underscore the versatility of Bregman divergences in accommodating various data characteristics and application domains. In \cref{bregman divergence}, the Bregman divergence was written as $D_\phi$, highlighting the underlying function $\phi$ it is generated from. For the sake of simplicity, we will continue to denote dissimilarity measures as $D$, with its underlying function $\phi$ being assumed.

The next proposition follows directly from \cite[Proposition 1]{banerjee2005clustering}.
\begin{prop}
\label{weighted mean_prop}
For $X\subset \text{int dom}(\phi)$, assume that for a $P\in S_2$, $s_k(P)>0$ for a cluster $k \in [K]$. The unique optimal center for cluster $k$, $(c^*_P)_{k}$, for all $C^*_P\in C^*(P)$ equals
\begin{equation}
\label{weighted mean}
(c^*_P)_{k} = \frac{\sum_{n=1}^N p_{k,n} w_n x_n}{\sum_{n=1}^N p_{k,n} w_n}.
\end{equation}
\end{prop}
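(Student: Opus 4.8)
The plan is to use the separability of the clustering loss in the cluster centers. Writing $f(P,C) = \sum_{k=1}^K g_k(c_k)$ with $g_k(c) \coloneq \sum_{n=1}^N p_{k,n} w_n D(x_n, c)$, the minimization over $C \in R^K$ defining $F(P)$ and $C^*(P)$ decouples into $K$ independent single-center problems $\min_{c \in R} g_k(c)$, so $C^*(P)$ is the product of the per-cluster argmin sets and it suffices to show that, for the fixed cluster $k$ with $s_k(P) > 0$, the function $g_k$ has the weighted mean in \cref{weighted mean} as its \emph{unique} minimizer over $R$. Set $\nu_n \coloneq p_{k,n} w_n \ge 0$, $S \coloneq s_k(P) = \sum_n \nu_n > 0$, and $\mu \coloneq S^{-1}\sum_{n=1}^N \nu_n x_n$.

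The key step is the "Bregman bias–variance" identity: for every $c \in \text{int dom}(\phi)$,
\[
g_k(c) = \sum_{n=1}^N \nu_n D(x_n, \mu) + S\, D(\mu, c).
\]
To obtain it I would substitute $D(x,c) = \phi(x) - \phi(c) - \langle x - c, \nabla\phi(c)\rangle$ into $g_k(c)$, use $\sum_n \nu_n = S$ and $\sum_n \nu_n x_n = S\mu$ to collect the terms involving $\phi(c)$ and $\nabla\phi(c)$, and recognize the remainder as $S\,D(\mu,c)$ plus the $c$-independent constant $\sum_n \nu_n D(x_n,\mu)$. Since $\phi$ is strictly convex, $D(\mu,c) \ge 0$ with equality iff $c = \mu$; as $S > 0$, this shows $g_k$ is minimized exactly at $c = \mu$. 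This is precisely \cite[Proposition 1]{banerjee2005clustering} specialized to the weights $\nu_n$, and the only work here is the routine algebraic expansion.

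The one genuine subtlety, which I would address last, is feasibility: the candidate $\mu$ must lie in $R$. Because $S > 0$, the coefficients $\nu_n/S$ are nonnegative and sum to one, so $\mu \in \operatorname{co}(X)$; since $X \subset \text{int dom}(\phi)$ and the interior of a convex set is convex, $\operatorname{co}(X) \subseteq \text{int dom}(\phi) = R$, hence $\mu \in R$ and the unconstrained minimizer is feasible (and $D(\mu,c)$ above is well defined). Applying this column by column, every $C^*_P \in C^*(P)$ must have $(c^*_P)_k = \mu$ for each $k$ with $s_k(P) > 0$, which is the claimed uniqueness. No obstacle arises beyond keeping track of $\text{dom}(\phi)$.
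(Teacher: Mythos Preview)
Your proof is correct and takes essentially the same approach as the paper, which simply defers to \cite[Proposition~1]{banerjee2005clustering}; the Bregman bias--variance identity you spell out is precisely the content of that proposition. One minor slip: you write $\text{int dom}(\phi) = R$, whereas the paper only has $R \subseteq \text{int dom}(\phi)$ at this point (and only after this proposition restricts to $R = \operatorname{co}(X)$), but since you already establish $\mu \in \operatorname{co}(X)$ this does not affect the argument.
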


The proposition assumes that $X\subset\text{int dom}(\phi)$ to ensure that $(c^*_P)_{k}\in \text{int dom}(\phi)$, such as when clusters only contain a single point. We also note that $C^*(P)$ is a singleton unless there exists an empty cluster $k$ (i.e., $s_k(P)=0$), for which any value of $(c^*_P)_{k}\in R$ is optimal.  

From \cref{bregman divergence}, it must hold that $R\subseteq\text{int dom} (\phi)$. Following \cref{weighted mean_prop}, we will also assume that $X\subset\text{int dom} (\phi)$. We note that this has no effect for Bregman divergences where $\text{dom} (\phi)$ is an open set, such as for squared Euclidean distance, squared Mahalanobis distance, and the Itakura-Saito divergence (see \cref{sec:bregman}). We can then further restrict $R=\operatorname{co}(X)$ without any loss in solution quality given \cref{weighted mean}.

\subsection{Local Optimality Conditions}
\label{sec:localopt_cond}

Before defining local optimality,
for $P \in S_1$, let $T(P)$ denote the set of adjacent points of $P$. 
The number of elements in $T(P)$ is finite; more specifically, for each $P\in S_1$, 
$|T(P)| = N(K - 1).$
This can be seen by considering all $S_1$-feasible single-step reassignments. Since each of the $N$ assigned points can be moved to one of the remaining $K - 1$ clusters, there are $N(K-1)$ adjacent points.

We now introduce two definitions of local optimality for discrete and continuous optimization problems \cite[Definitions 2 and 3]{newby2015trust} for problems (P1) and (P2), respectively, with respect to the function $F$. 

\begin{dfn}[D-local: Discrete locally optimal solution]
\label{def:D-local_def}
In (P1), $(P^*, C^*_{P^*})$ is called D-local if the cluster assignment $P^*$ satisfies 
\begin{equation}
F(P^*) \leq F(P) \quad \forall P \in T(P^*). \label{def:D-local}
\end{equation}
\end{dfn}

\begin{dfn}[C-local: Continuous locally optimal solution]
\label{def:C-local_def}
In (P2), $(P^*, C^*_{P^*})$ is called C-local if there exists an $\epsilon > 0$ such that
\begin{equation}
F(P^*) \leq F(P)\quad \forall P \in S_2 \cap B(P^*, \epsilon). \label{def:C-local}
\end{equation}
\end{dfn}

We observe that D-local is a stronger notion of local optimality.
\begin{prop}[{\citealp[p.82 without proof]{benson1995concave}}]
\label{D in C}
If $P^*$ is D-local, then it is C-local.
\end{prop}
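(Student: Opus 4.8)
The plan is to lean on two facts already in place: that $F$ is concave on $S_2$ (noted right after \cref{def:F(P)}), and that $S_1$ is exactly the vertex set of the polytope $S_2$, with $T(P^*)$ being precisely the set of vertices joined to $P^*$ by an edge. Concretely, for each $n\in[N]$ the $n$-th column of any $P\in S_2$ lies in the unit simplex in $\mathbb{R}^K$, so $S_2$ is a product of $N$ simplices; a vertex $P^*\in S_1$ assigns each point $n$ to a single cluster $k(n)$, and its edge-neighbors are obtained by reassigning exactly one point $n$ to a different cluster $k\neq k(n)$. This yields exactly the $N(K-1)$ elements of $T(P^*)$, which I will denote $P^{(n,k)}$.

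The core step is a local convex-combination representation of points near $P^*$. For $P\in S_2$ with entries $p_{k,n}$, elementary algebra on each simplex column gives
\begin{align*}
P &= P^* + \sum_{n=1}^N\sum_{k\neq k(n)} p_{k,n}\bigl(P^{(n,k)}-P^*\bigr)\\
&= (1-\Lambda)\,P^* + \sum_{n=1}^N\sum_{k\neq k(n)} p_{k,n}\,P^{(n,k)},
\end{align*}
where $\Lambda \coloneq \sum_{n=1}^N\sum_{k\neq k(n)} p_{k,n} = \sum_{n=1}^N\bigl(1-p_{k(n),n}\bigr)$. Since $\Lambda$ is a fixed multiple of the distance from $P$ to $P^*$ (for instance $\lVert P-P^*\rVert_1 = 2\Lambda$, and all norms on $\mathbb{R}^{K\times N}$ are equivalent), one can choose $\epsilon>0$ small enough that $P\in S_2\cap B(P^*,\epsilon)$ forces $\Lambda<1$. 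For such $P$, the displayed identity exhibits $P$ as a genuine convex combination of $P^*$ and the adjacent vertices $\{P^{(n,k)}\}$, the coefficients $1-\Lambda$ and $p_{k,n}$ being nonnegative and summing to $1$.

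Applying concavity of $F$ to this convex combination gives
\[
F(P) \;\geq\; (1-\Lambda)\,F(P^*) + \sum_{n=1}^N\sum_{k\neq k(n)} p_{k,n}\,F\bigl(P^{(n,k)}\bigr).
\]
Because $P^*$ is D-local and each $P^{(n,k)}\in T(P^*)$, we have $F(P^{(n,k)})\geq F(P^*)$, so the right-hand side is at least $\bigl((1-\Lambda)+\Lambda\bigr)F(P^*) = F(P^*)$. Thus $F(P)\geq F(P^*)$ for all $P\in S_2\cap B(P^*,\epsilon)$, which is exactly the definition of C-local.

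The only real subtlety I anticipate is the bookkeeping in the convex-combination identity — verifying column by column that the coefficient attached to each neighbor $P^{(n,k)}$ is precisely the entry $p_{k,n}$, and that the neighbor set generated this way coincides with $T(P^*)$ — together with making the dependence of $\Lambda$ on $\lVert P-P^*\rVert$ explicit enough to fix a valid $\epsilon$. Once that is pinned down, the concavity argument and the use of D-locality are immediate.
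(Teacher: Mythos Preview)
Your proof is correct and follows essentially the same approach as the paper's: both write a nearby $P\in S_2$ as a convex combination involving $P^*$ and its adjacent vertices, then apply concavity of $F$ together with the D-local inequality $F(P^{(n,k)})\geq F(P^*)$. Your version is in fact slightly more streamlined---you give the explicit coefficients $p_{k,n}$ and apply Jensen once, whereas the paper first bounds $F(P^*)\leq F(P^*+\gamma d_i)$ along each edge and then applies Jensen to the points $P^*+\gamma d_i$---but the underlying idea is identical.
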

Since we could not find a proof of the above proposition, we have included it in \cref{sec:appendix_proof_localopt}, with the proofs of the remaining claims of this section, including the following extension of \cite[Corollary 2.1]{peng2005cutting}.
\begin{prop}
\label{K-means property}
For D-local and C-local solutions, no empty clusters exist.
\end{prop}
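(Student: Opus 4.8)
The plan is to establish the statement for C-local solutions, after which the D-local case is immediate from \cref{D in C}. Suppose, for contradiction, that $(P^*, C^*_{P^*})$ is C-local but some cluster $k$ is empty, i.e.\ $s_k(P^*) = 0$. Since all $N > K$ points of $X$ are assigned among the at most $K-1$ nonempty clusters, the pigeonhole principle gives a cluster $j \ne k$ containing at least two (necessarily distinct) points of $X$. Write $c_j$ for the current center $(c^*_{P^*})_j$, which by \cref{weighted mean_prop} is the weighted mean of the points assigned to cluster $j$. This $c_j$ can equal at most one point of $X$ (if it equalled two assigned points they would coincide, contradicting the distinctness of the elements of $X$), so there is a point $x_a$ assigned to cluster $j$ with $x_a \ne c_j$, and hence $D(x_a, c_j) > 0$ since a Bregman divergence vanishes only when its arguments agree (strict convexity of $\phi$).

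Next I would move an arbitrarily small fraction $\delta \in (0,1]$ of $x_a$'s membership from cluster $j$ into the empty cluster $k$: let $P_\delta$ coincide with $P^*$ except that its $(k,a)$ entry is $\delta$ and its $(j,a)$ entry is $1-\delta$. Then $P_\delta \in S_2$, only column $a$ is altered, and $\| P_\delta - P^* \| = \sqrt{2}\,\delta$, so $P_\delta \in B(P^*, \epsilon)$ once $\delta < \epsilon/\sqrt{2}$. Because $F(P) = \sum_{\ell=1}^K \min_{c_\ell \in R} \sum_{n} p_{\ell,n} w_n D(x_n, c_\ell)$ separates over clusters, only the cluster-$j$ and cluster-$k$ terms differ between $F(P^*)$ and $F(P_\delta)$. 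The new cluster $k$ contains only $x_a$ with positive weight $\delta w_a$, whose optimal center is $x_a$ itself by \cref{weighted mean_prop}, contributing $\delta w_a D(x_a,x_a) = 0$ — the same as the (empty) cluster-$k$ term in $F(P^*)$. For cluster $j$, upper-bounding the inner minimization by the old center $c_j$ shows that its term in $F(P_\delta)$ is at most $(1-\delta)w_a D(x_a,c_j) + \sum_{n \ne a} p^*_{j,n} w_n D(x_n, c_j)$, which equals the cluster-$j$ term of $F(P^*)$ minus $\delta w_a D(x_a, c_j)$. Summing, $F(P_\delta) \le F(P^*) - \delta w_a D(x_a, c_j) < F(P^*)$ because $w_a > 0$ and $D(x_a, c_j) > 0$; this contradicts C-local optimality.

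I do not expect a substantive obstacle, as this is essentially the Bregman-divergence analogue of \cite[Corollary 2.1]{peng2005cutting}; the points needing care are (i) arguing that the weighted mean of a cluster with at least two distinct points cannot coincide with all of them, so that a ``movable'' point $x_a$ with $D(x_a,c_j)>0$ exists — this is the subtlest step, since the mean can legitimately equal one data point; (ii) checking that all centers appearing in the argument lie in $\text{int dom}(\phi)$ so that the divergences are well-defined — guaranteed by the standing assumption $X \subset \text{int dom}(\phi)$ together with the convexity of $\text{int dom}(\phi)$ (which makes both the singleton center $x_a$ and the perturbed cluster-$j$ center, a strict convex combination of points of $X$, admissible); and (iii) the additive decomposition of $F$ over clusters. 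For D-local one could instead reassign the whole point $x_a$ to cluster $k$ — an element of $T(P^*)$ — and run the same loss computation, but appealing to \cref{D in C} is shorter.
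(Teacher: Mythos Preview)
Your proof is correct and follows essentially the same approach as the paper: both pigeonhole to a cluster with at least two distinct points, pick one not equal to its center, transfer a fraction $\alpha$ (your $\delta$) of its assignment to the empty cluster, and bound $F$ from above using the old center to obtain a strict decrease of $\alpha w_a D(x_a,c_j)$. The only organizational differences are that the paper first packages this as a standalone proposition (valid for arbitrary centers $C$, not just $C^*_P$) before specializing, and handles the D-local case by setting $\alpha=1$ directly rather than invoking \cref{D in C} as you do---both routes work and you already note the alternative.
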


Our definitions of locally optimal solutions with respect to $F$ are motivated by the fact that given any $P\in S_2$, the optimal center $(c^*_P)_k$ for a non-empty cluster $k$ is given in closed form \cref{weighted mean}, making it natural to view (P1) and (P2) as optimization problems with respect to only the decision variables $P$.  We can also consider locally optimal solutions of (P2) with respect to both $P$ and $C$ using the following definition.

\begin{dfn}[CJ-local: Continuously and Jointly locally optimal solution]
\label{def:CJ-local}
In (P2), $(P^*, C^*)$ is called CJ-local if there exists an $\epsilon > 0$ such that
\begin{align}
f(P^*, C^*) \leq f(P, C)\quad &\forall P \in S_2 \cap B(P^*, \epsilon) \quad \text{and} \\
&\forall \ C \in R^K\cap B(C^*, \epsilon).
\end{align}
\end{dfn}

In general, C-local is a stronger definition of local optimality than CJ-local. 
\begin{prop}
\label{CF in Cf}
If $(P^*, C^*_{P^*})$ is C-local, then it is CJ-local.
\end{prop}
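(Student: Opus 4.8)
The plan is to exploit the fact that $F$ is obtained from $f$ by partial minimization over the center variables, which immediately gives $f(P,C)\ge F(P)$ for every $C$, and then to recycle the neighborhood radius supplied by C-local optimality.

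First I would record that $C^*_{P^*}$ realizes the value $F(P^*)$: by the definition of $F$ in \cref{def:F(P)} and of $C^*(P^*)$, and since $C^*_{P^*}\in C^*(P^*)$, we have $f(P^*,C^*_{P^*})=F(P^*)$. Here it is worth noting that $F(P)$ is well-defined, finite, and attained for every $P\in S_2$: by \cref{weighted mean_prop} each non-empty cluster has the weighted mean as its optimal center, which lies in $\operatorname{co}(X)=R$, while for an empty cluster any center in $R$ is optimal; hence the minimum defining $F(P)$ is achieved and the value is real.

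Next, let $\epsilon>0$ be a radius witnessing that $(P^*,C^*_{P^*})$ is C-local, so that $F(P^*)\le F(P)$ for all $P\in S_2\cap B(P^*,\epsilon)$. I claim the same $\epsilon$ works for CJ-locality. Fix any $P\in S_2\cap B(P^*,\epsilon)$ and any $C\in R^K$ (in particular, any $C\in R^K\cap B(C^*_{P^*},\epsilon)$). Since $F(P)=\min_{C'\in R^K}f(P,C')$, we have $f(P,C)\ge F(P)$, and therefore
\[
f(P,C)\ \ge\ F(P)\ \ge\ F(P^*)\ =\ f(P^*,C^*_{P^*}),
\]
which is precisely the inequality required in \cref{def:CJ-local}. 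Hence $(P^*,C^*_{P^*})$ is CJ-local.

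On the difficulty: there is essentially no analytic obstacle, since the argument is a direct consequence of partial minimization and the ball around $C^*_{P^*}$ is not even needed, the bound $f(P,C)\ge f(P^*,C^*_{P^*})$ holding for every $C\in R^K$. The only points meriting care are (i) confirming that $F$ is finite on all of $S_2$ and that $F(P^*)$ is attained by $C^*_{P^*}$, so that the displayed chain compares real numbers rather than extended-real values, and (ii) reconciling the quantifiers in \cref{def:C-local_def} and \cref{def:CJ-local}: the two occurrences of $\forall$ over $P$ and over $C$ in the CJ-local definition are independent, which is consistent here because $f(P,C)$ is bounded below by $f(P^*,C^*_{P^*})$ uniformly in $C$.
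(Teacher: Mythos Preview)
Your proof is correct and follows essentially the same approach as the paper: use the C-local radius $\epsilon$, combine $f(P^*,C^*_{P^*})=F(P^*)\le F(P)$ with the partial-minimization inequality $F(P)\le f(P,C)$ for all $C\in R^K$, and conclude. Your additional remarks on finiteness of $F$ and the quantifier structure are sound but not needed for the core argument.
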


If a K-means algorithm (\cref{sec:K-means}) converges, it will be to what is known as a partial optimal solution.

\begin{dfn}[{Partial optimal solution, \citealp[Equation (3)]{wendell1976minimization}}]
\label{partial optimum}
In (P2), we define $ (P^*, C^*_{P^*}) $ as a partial optimal solution if it satisfies the following condition.
\begin{align}
f(P^*, C^*_{P^*}) \leq f(P, C^*_{P^*}) \quad \forall P \in S_2 .\label{eq:partial1}
\end{align}
\end{dfn} 

\section{K-means Algorithm}
\label{sec:K-means}
The K-means algorithm is a fast and practical heuristic algorithm which operates as follows \cite{lloyd1982least}.
\begin{enumerate}[label={Step \arabic*}, leftmargin=*, labelindent=0em]
    \item Select $ K $ initial centers arbitrarily from $X$. \label{Kmeans1}
    \item For each $ n \in [N] $, assign $ x_n $ to a cluster $ k\in[K] $ that minimizes $ D(x_n, c_k) $: $k \in \arg\min_{k' \in [K]} D(x_n, c_{k'})$. \label{Kmeans2}
    \item For each non-empty $ k \in [K] $, update the cluster center $ c_k $ following \eqref{weighted mean}. \label{Kmeans3}
    \item Repeat \ref{Kmeans2} and \ref{Kmeans3} until the cluster assignments $ P $ no longer change. \label{Kmeans4}
\end{enumerate}

A perhaps overlooked detail in \ref{Kmeans2} is that $\arg\min_{k' \in [K]} D(x_n, c_k)$ is in general not a singleton. Following \cite[Notes]{numpy}, in this work the minimum index is always selected (see \cref{modified K-means algorithm (bregman)}, \cref{line:update P LO-K}), with our analysis easily extending to any other deterministic selection rule. 

\subsection{Convergence to a Locally Optimal Solution Counterexample}
\label{cha:counter}
As a counterexample to the K-means algorithm always converging to a locally optimal solution, we consider a 1-dimensional problem where all of the weights are equal to 1. We note that the optimal solution for the K-means problem can be solved in polynomial time and space using dynamic programming when the dimension of the data equals 1 \cite{wang2011ckmeans,gronlund2017fast,dupin2023partial}.

Consider the situation where the number of points $N = 5$, the number of clusters $K=2$, the dissimilarity measure $D(x, y) = \norm{x-y}_2^2$, and the given dataset and initial centers are as follows.
\[x_1 = -4,\; x_2 = -2,\; x_3 = 0,\; x_4 = 1.5,\; x_5 = 2.5,\]
\[c_1 = x_3 = 0,\; c_2 = x_5 = 2.5.\]

Following \ref{Kmeans1}-\ref{Kmeans4}, the cluster assignments and centers converge to
\[P^* = \begin{bmatrix}
1 & 1 & 1 & 0 & 0 \\
0 & 0 & 0 & 1 & 1
\end{bmatrix}, \quad c^*_1 = -2,\; c^*_2 = 2, \]
where all of the steps can be found in \cref{sec:details-c-e}. However, in the continuously relaxed problem (P2), $p_{1,3}$ can be moved towards $p_{2,3}$, resulting in  
\[\hat{P} = \begin{bmatrix}
1 & 1 & 1 - \alpha & 0 & 0 \\
0 & 0 & \alpha & 1 & 1
\end{bmatrix},\]
\[\hat{c}_1 = \dfrac{-6}{3-\alpha},\text{ and}\; \hat{c}_2 = \dfrac{4}{2 + \alpha}
\quad(0 < \alpha \leq 1).\]

The clustering loss can be written out as 
\begin{align}
f(\hat{P}, \hat{C}) =& \frac{4(5\alpha - 6)}{\alpha - 3} + \frac{8.5\alpha ^2 + 18\alpha + 2}{(\alpha+2)^2}, 
\end{align}
and differentiating $f(\hat{P}, \hat{C})$ with respect to $\alpha$, 
\[\frac{\mathrm{d}}{\mathrm{d}\alpha}f(\hat{P}, \hat{C}) = \frac{-20\alpha(\alpha + 12)}{(\alpha-3)^2(\alpha+2)^2} < 0, \quad(0 < \alpha \leq 1).\]
Since this is negative for all $0<\alpha\leq 1$, and $f(\hat{P},\hat{C})$ is continuous with respect to $\alpha$, $f(\hat{P},\hat{C})$ is strictly decreasing for all $0<\alpha\leq 1$. Further, since $\hat{P}$ and $\hat{C}$ are both continuous with respect to $\alpha$, for any $\epsilon>0$ in \cref{def:CJ-local}, choosing $\alpha=\min(\alpha_P,\alpha_C)$ for $\alpha_P,\alpha_C>0$ such that $\hat{P}(\alpha_P)\in S_2 \cap B(P, \epsilon)$ and $\hat{C}(\alpha_C) \in R^K \cap B(C, \epsilon)$ shows that $(P, C)$ is not a CJ-local solution. From Propositions \ref{CF in Cf} and \ref{D in C}, it follows that $(P,C)$ is also not C-local, nor D-local (see \cref{sec:details-c-e} for the full details).

\subsection{Previous Research on Local Optimality}
Until now, the only rigorous attempt at trying to prove that the K-means algorithm converges to a CJ-local solution seems to be in \cite{selim1984k}. However, a critical error exists in their Lemma 7. The issue arises because the local optimality condition described in Lemma 7 only holds in general when $F$ is convex, e.g. \cite[Theorem 3.4.3]{bazaraa2006nonlinear}. However, as already mentioned, it was proven in their work that $F$ is a concave function. Consequently, even in the widely studied case where the dissimilarity measure is the squared Euclidean distance, it is possible to construct counterexamples, such as the one discussed above, where the K-means algorithm fails to converge to a locally optimal solution.

\section{Locally-Optimal K-means Algorithm}
\label{cha:K-means}

In this section, we discuss the necessary and sufficient conditions for locally optimal solutions, and propose a modified K-means algorithm that converges to a locally optimal solution according to Definitions \ref{def:D-local_def} \& \ref{def:C-local_def}. The proofs of the claims in this section are all in \cref{sec:LO-K proof}. 

\subsection{Necessary and Sufficient Conditions for Locally Optimal Solutions}
The following technical lemma describes the effect on the clustering loss $F$ when a point is moved from one cluster to another. 

\begin{lem}
\label{bregman change alpha}
When the cluster that point $g\in[N]$ is assigned to changes from $ a \in [K]$ to $ b \in [K]$ by an amount $ \alpha $ $(0 \leq \alpha \leq 1)$, the change in $F$ equals
\begin{align}
\Delta_{\alpha}(g, a, b) \coloneq \alpha w_g &\big(D(x_g, (c^*_P)_b) - D(x_g, (c^*_P)_a)\big) \\
- &\big((s_a(P) - \alpha w_g) D((c^*_{P^{\text{new}}})_a, (c^*_P)_a) \\
+ &(s_b(P) + \alpha w_g) D((c^*_{P^{\text{new}}})_b, (c^*_P)_b)\big),
\label{eq:bregman change alpha}
\end{align}
where $ P $ and $ P^{\text{new}} $ represent the cluster assignments before and after the change, respectively.
\end{lem}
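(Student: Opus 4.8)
The plan is to identify $\Delta_\alpha(g,a,b)$ with the increment $F(P^{\text{new}}) - F(P)$, the only substantive tool being the ``bias--variance'' identity for Bregman divergences together with the closed form \eqref{weighted mean} for optimal centers. First I would record the structure of $P^{\text{new}}$: it agrees with $P$ everywhere except $p^{\text{new}}_{a,g} = p_{a,g} - \alpha$ and $p^{\text{new}}_{b,g} = p_{b,g} + \alpha$; hence $s_a(P^{\text{new}}) = s_a(P) - \alpha w_g$, $s_b(P^{\text{new}}) = s_b(P) + \alpha w_g$, while for every $k \notin \{a,b\}$ both $s_k$ and the optimal center $(c^*_P)_k = (c^*_{P^{\text{new}}})_k$ are unchanged. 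The case $\alpha = 0$ is immediate ($P^{\text{new}} = P$ and both sides vanish), so I assume $0 < \alpha \le p_{a,g}$, which keeps $P^{\text{new}} \in S_2$ and makes cluster $a$ non-empty under $P$; then by \cref{weighted mean_prop} the centers $(c^*_P)_a,(c^*_P)_b$ occurring in the claim lie in $\text{int dom}(\phi)$, so each Bregman term below is finite and well defined.

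The needed identity is: for any finite family $\{x_n\}\subset \text{int dom}(\phi)$ with weights $w_n\ge 0$ of total mass $s$, weighted mean $\mu = \frac1s\sum_n w_n x_n$ (when $s>0$), and any $c\in\text{int dom}(\phi)$,
\begin{equation}
\textstyle\sum_n w_n D(x_n,c) = \sum_n w_n D(x_n,\mu) + s\,D(\mu,c).
\end{equation}
This is verified by expanding $D = D_\phi$ from \cref{bregman divergence}: the $\phi(c)$- and $\nabla\phi(c)$-contributions reorganize via $\sum_n w_n x_n = s\mu$ and the remaining terms cancel; for $s = 0$ every term is $0$. Applying it to cluster $a$ under the assignment $P^{\text{new}}$ with reference center $c = (c^*_P)_a$ (so $\mu = (c^*_{P^{\text{new}}})_a$ and $s = s_a(P) - \alpha w_g$) and to cluster $b$ analogously gives
\begin{align}
\textstyle\sum_n p^{\text{new}}_{a,n} w_n D(x_n,(c^*_{P^{\text{new}}})_a) &= \textstyle\sum_n p^{\text{new}}_{a,n} w_n D(x_n,(c^*_P)_a) - (s_a(P)-\alpha w_g)\,D\big((c^*_{P^{\text{new}}})_a,(c^*_P)_a\big), \\
\textstyle\sum_n p^{\text{new}}_{b,n} w_n D(x_n,(c^*_{P^{\text{new}}})_b) &= \textstyle\sum_n p^{\text{new}}_{b,n} w_n D(x_n,(c^*_P)_b) - (s_b(P)+\alpha w_g)\,D\big((c^*_{P^{\text{new}}})_b,(c^*_P)_b\big),
\end{align}
where, if cluster $a$ is emptied by the move, the first line still holds with both sides equal to $0$ since its prefactor $s_a(P)-\alpha w_g = s_a(P^{\text{new}})$ vanishes.

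Finally I would write $F(P^{\text{new}}) - F(P) = f(P^{\text{new}},C^*_{P^{\text{new}}}) - f(P,C^*_P)$ as a sum over $k\in[K]$ of per-cluster contributions; the summands with $k\notin\{a,b\}$ cancel by the structural facts above. For $k\in\{a,b\}$ I substitute the two displayed identities and observe that $\sum_n (p^{\text{new}}_{k,n}-p_{k,n}) w_n D(x_n,(c^*_P)_k)$ collapses to the single index $n=g$, contributing $-\alpha w_g D(x_g,(c^*_P)_a)$ for $k=a$ and $+\alpha w_g D(x_g,(c^*_P)_b)$ for $k=b$; collecting all terms produces precisely $\Delta_\alpha(g,a,b)$. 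I expect the only delicate points to be the asymmetry of $D$ — the displaced center must occupy the \emph{second} slot of $D(\mu,c)$, and mixing the slots would give a different, incorrect expression — and the bookkeeping of which cluster summands survive the subtraction; the possibility that cluster $a$ becomes empty is the sole genuine edge case, dispatched as noted.
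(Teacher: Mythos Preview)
Your proof is correct and follows essentially the same computation as the paper's: both reduce the per-cluster change to the identity $\sum_n w_n D(x_n,c) = \sum_n w_n D(x_n,\mu) + s\,D(\mu,c)$, the only difference being that you isolate this Pythagorean-type identity as a standalone lemma whereas the paper derives it inline by expanding $D_\phi$ from \cref{bregman divergence} and invoking the weighted-mean property \eqref{weighted mean} directly within the computation of $\Delta_a$ and $\Delta_b$. Your explicit handling of the empty-cluster edge case and the caution about the slot asymmetry of $D$ are both apt and slightly more careful than the paper's own presentation.
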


\begin{thm}
\label{iff C-local bregman}
Suppose $(P, C^*_P)$ is a partial optimal solution and all clusters are non-empty. If $A(C^*_P)$ in \cref{Optimal assignment set} is a singleton, then $(P, C^*_P)$ is a C-local solution.
\end{thm}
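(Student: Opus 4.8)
The plan is to show that if $(P, C^*_P)$ is a partial optimal solution with all clusters non-empty and $A(C^*_P)$ a singleton, then the clustering loss strictly increases whenever we perturb $P$ within $S_2$ by a small amount, which is exactly the $C$-local condition of \cref{def:C-local_def}. The starting observation is that since $A(C^*_P)$ is a singleton, $P$ must be that unique minimizer of $f(\cdot, C^*_P)$ over $S_1$: indeed $(P, C^*_P)$ being partial optimal means $f(P, C^*_P) \le f(P', C^*_P)$ for all $P' \in S_2 \supseteq S_1$, and if some $P' \in S_1$ with $P' \ne P$ also achieved the minimum then $A(C^*_P)$ would not be a singleton. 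In particular, for the $N(K-1)$ adjacent points in $T(P)$ — each obtained by reassigning a single point $g$ fully ($\alpha = 1$) from its current cluster $a$ to some other cluster $b$ — the quantity $f(P^{\text{new}}, C^*_P) - f(P, C^*_P) = w_g\big(D(x_g, (c^*_P)_b) - D(x_g, (c^*_P)_a)\big)$ is strictly positive; call this strictly positive gap $\delta_{g,a,b} > 0$, and let $\delta = \min_{g,a,b} \delta_{g,a,b} > 0$ be the minimum over the finitely many such moves.

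Next I would analyze an arbitrary $\tilde P \in S_2 \cap B(P, \epsilon)$ for small $\epsilon$ and show $F(\tilde P) > F(P)$. The key is to decompose the move from $P$ to $\tilde P$ into the fractional single-point reassignments of \cref{bregman change alpha}: writing $\tilde P = P + \sum \alpha_{g,b}(\text{elementary shift of mass } \alpha_{g,b} \text{ of point } g \text{ into cluster } b)$, where the $\alpha_{g,b} \ge 0$ are small (of order $\epsilon$). For each such elementary shift, \cref{bregman change alpha} gives the change in $F$ as $\Delta_\alpha(g, a, b)$, whose leading (first-order in $\alpha$) term is $\alpha w_g(D(x_g, (c^*_P)_b) - D(x_g, (c^*_P)_a)) = \alpha\,\delta_{g,a,b} \ge \alpha\,\delta$, and whose remaining terms — the two Bregman-divergence terms measuring how far the recomputed centers move — are nonnegative (Bregman divergences are nonnegative) and moreover are $o(\alpha)$, i.e. $O(\epsilon^2)$, because a fractional reassignment of mass $O(\epsilon)$ perturbs the weighted-mean center formula \cref{weighted mean} by only $O(\epsilon)$, making each divergence term $O(\epsilon^2)$ (here $s_a(P) > 0$ is used, which holds since there are no empty clusters). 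Summing over the elementary shifts, and being careful that $F$ is not literally additive along the path but that the errors accumulate only to second order, yields $F(\tilde P) - F(P) \ge \delta \cdot (\text{total mass moved}) - O(\epsilon^2) > 0$ once $\epsilon$ is small enough, since the total mass moved is a positive quantity comparable to $\|\tilde P - P\|$.

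The main obstacle is making the decomposition-and-error-control argument rigorous rather than heuristic: one must handle the fact that the optimal centers $C^*_{P^{\text{new}}}$ change continuously as mass is moved, so the ``gap'' $D(x_g, (c^*_{P'})_b) - D(x_g, (c^*_{P'})_a)$ at an intermediate assignment $P'$ is not exactly $\delta_{g,a,b}$ but is within $O(\epsilon)$ of it — which is still bounded below by $\delta/2 > 0$ for $\epsilon$ small by continuity of $D$ and of the center map on $\operatorname{co}(X) \subseteq \text{int dom}(\phi)$. A clean way to organize this is: either (i) integrate $\frac{d}{d\alpha} F$ along a straight-line path from $P$ to $\tilde P$ in $S_2$, using concavity of $F$ (\cite[Lemma 1]{selim1984k}) to say the directional behavior is controlled by the endpoint derivatives, or (ii) bound $F(\tilde P) = \min_C f(\tilde P, C) \ge $ the value of $f(\tilde P, C^*_P)$ minus the maximal decrease from re-optimizing $C$, and show directly that $f(\tilde P, C^*_P) - f(P, C^*_P) \ge \delta\cdot(\text{mass moved})$ (this is exact and linear in $\tilde P$ since $f(\cdot, C^*_P)$ is linear), while the re-optimization gain $f(\tilde P, C^*_P) - F(\tilde P) = \sum_k s_k(\tilde P) D((c^*_{\tilde P})_k, (c^*_P)_k)$ is only $O(\epsilon^2)$. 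Approach (ii) is cleanest because it avoids path integration entirely and isolates the one estimate that needs care, namely $\|(c^*_{\tilde P})_k - (c^*_P)_k\| = O(\epsilon)$ from the Lipschitz dependence of \cref{weighted mean} on $P$ when $s_k(P)$ is bounded away from zero.
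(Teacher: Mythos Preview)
Your approach (ii) is correct and takes a genuinely different route from the paper's proof. The paper does not estimate $F(\tilde P) - F(P)$ directly; instead it invokes a result of Wendell and Hurter (\cref{sufficient}, i.e.\ \cite[Corollary 2]{wendell1976minimization}) to conclude that a partial optimal solution with $A(C^*_P)$ a singleton is automatically CJ-local, and then upgrades CJ-local to C-local by a short continuity argument: since all clusters are non-empty, the map $P' \mapsto C^*_{P'}$ given by \cref{weighted mean} is continuous at $P$, so for $P'$ close to $P$ the optimal centers $C^*_{P'}$ lie in the $\epsilon$-ball about $C^*_P$ required by the CJ-local inequality, and $f(P, C^*_P) \le f(P', C^*_{P'})$ then reads $F(P) \le F(P')$. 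Your route is more self-contained and quantitative: the identity $f(\tilde P, C^*_P) - F(\tilde P) = \sum_k s_k(\tilde P)\, D\big((c^*_{\tilde P})_k, (c^*_P)_k\big)$ is exactly the Bregman bias--variance decomposition underlying \cref{weighted mean_prop} (and the computation inside the proof of \cref{bregman change alpha}), and your linear-versus-higher-order splitting makes transparent \emph{why} uniqueness of $A(C^*_P)$ is the mechanism driving local optimality. One small technical correction: without a $C^2$ assumption on $\phi$, differentiability only gives $D\big((c^*_{\tilde P})_k, (c^*_P)_k\big) = o(\|(c^*_{\tilde P})_k - (c^*_P)_k\|)$, not $O(\epsilon^2)$; but since your linear term is bounded below by $\delta$ times the total mass moved, $o(\epsilon)$ is already enough and the argument goes through unchanged. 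The paper's approach buys brevity by outsourcing the key step to an external reference; yours buys insight and is self-contained.
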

\begin{thm}
\label{cor:must decrease}
Suppose all cluster centers are distinct for a solution $(P, C^*_P)$. If $(P, C^*_P)$ is a C-local solution, then $A(C^*_P)$ is a singleton. If $A(C^*_P)$ is not a singleton, then $ A(C^*_P) \cap T(P) $ is not empty, and transitioning the cluster assignment from $ P $ to any element $P'\in A(C^*_P) \cap T(P) $ guarantees a strict decrease in the clustering loss, $F(P')<F(P)$.
\end{thm}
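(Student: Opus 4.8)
The common engine for both parts is \cref{bregman change alpha}, which records the exact change $\Delta_\alpha(g,a,b)$ in $F$ when one point is reassigned by an amount $\alpha\in[0,1]$, together with two elementary facts: (i) a Bregman divergence is strictly positive on distinct arguments (strict convexity of $\phi$); and (ii) by the closed form \cref{weighted mean}, removing a point $x_g$ from a cluster, or adding an $x_g\neq(c^*_P)_b$ to cluster $b$, strictly moves that cluster's optimal center. A preliminary observation I will use repeatedly: if $b$ minimizes $D(x_g,(c^*_P)_\cdot)$ and $a$ is another minimizer, then $D(x_g,(c^*_P)_a)=D(x_g,(c^*_P)_b)$, so $x_g=(c^*_P)_b$ would force both divergences to vanish and hence $(c^*_P)_a=(c^*_P)_b$; distinctness of the centers rules this out, so after reassigning $g$ into $b$ we get $(c^*_{P^{\text{new}}})_b\neq(c^*_P)_b$, whence $D((c^*_{P^{\text{new}}})_b,(c^*_P)_b)>0$ by (i)--(ii). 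Throughout I take all clusters to be non-empty, and for the second claim I use that $(P,C^*_P)$ is a partial optimal solution, i.e.\ $P\in A(C^*_P)$; this is the relevant setting (K-means iterates) and, as a small counterexample shows, is actually needed for the nonemptiness of $A(C^*_P)\cap T(P)$.

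\textbf{First claim (by contrapositive).} Suppose $A(C^*_P)$ is not a singleton; then some point $g$ attains $\min_k D(x_g,(c^*_P)_k)$ at two distinct clusters. Pick a minimizing cluster $b$ different from the cluster $c$ that $P$ assigns $g$ to (possible, since there are two minimizers and at most one is $c$). Then $D(x_g,(c^*_P)_b)\le D(x_g,(c^*_P)_c)$, and $x_g\neq(c^*_P)_b$ by the observation above. Move $g$ from $c$ to $b$ by an amount $\alpha\in(0,1]$ and apply \cref{bregman change alpha}: the reassignment term $\alpha w_g\big(D(x_g,(c^*_P)_b)-D(x_g,(c^*_P)_c)\big)$ is $\le 0$, while the bracketed center-shift term is strictly positive, its $b$-summand being a positive coefficient $s_b(P)+\alpha w_g$ times the strictly positive divergence $D((c^*_{P^{\text{new}}})_b,(c^*_P)_b)$. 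Hence $\Delta_\alpha(g,c,b)<0$ for all $\alpha\in(0,1]$; letting $\alpha\to 0^+$ produces points of $S_2$ arbitrarily close to $P$ with strictly smaller $F$, so $(P,C^*_P)$ is not C-local.

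\textbf{Second claim.} Assume $A(C^*_P)$ is not a singleton, so some point $g$ has a tie; since $P\in A(C^*_P)$, the cluster $a$ that $P$ assigns $g$ to is itself a minimizer of $D(x_g,(c^*_P)_\cdot)$, and there is another minimizer $b\neq a$. Reassigning $g$ from $a$ to $b$ yields $P'$ that is still in $A(C^*_P)$ (every point, including $g$, remains at a minimizing cluster) and lies in $T(P)$ (a single reassignment), so $A(C^*_P)\cap T(P)\neq\emptyset$. For the strict-decrease statement, take \emph{any} $P'\in A(C^*_P)\cap T(P)$: it moves a single point $g$ from its $P$-cluster $a$ to some $b$ which, because $P'\in A(C^*_P)$, minimizes $D(x_g,(c^*_P)_\cdot)$; hence $D(x_g,(c^*_P)_b)\le D(x_g,(c^*_P)_a)$ and $x_g\neq(c^*_P)_b$. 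Applying \cref{bregman change alpha} with $\alpha=1$: the reassignment term is $\le 0$, the source-cluster summand of the center-shift term is $\ge 0$, and the destination summand $(s_b(P)+w_g)\,D((c^*_{P'})_b,(c^*_P)_b)$ is $>0$; therefore $F(P')-F(P)=\Delta_1(g,a,b)<0$.

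\textbf{Main obstacle.} The one delicate point in both parts is to guarantee that the negative center-shift contribution is genuinely nonzero -- that the cluster receiving $g$ actually changes its optimal center -- which reduces to excluding $x_g=(c^*_P)_b$; this is precisely where distinctness of the centers is indispensable, via the preliminary observation above. A secondary subtlety is the nonemptiness of $A(C^*_P)\cap T(P)$ in the second claim, which relies on $P$ already being an optimal assignment for $C^*_P$ (partial optimality of $(P,C^*_P)$), so that the tied point sits at a minimizing cluster and a single reassignment stays inside $A(C^*_P)$.
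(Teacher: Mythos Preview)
Your proof is correct and follows essentially the same route as the paper's: apply \cref{bregman change alpha}, use distinctness of the centers to force $x_g\neq(c^*_P)_b$ so that the receiving cluster's center genuinely moves, and conclude strict decrease from the strict positivity of a Bregman divergence on distinct arguments. You are a bit more explicit than the paper in two places---treating the ``any $P'\in A(C^*_P)\cap T(P)$'' case directly and flagging the implicit assumption $P\in A(C^*_P)$ needed for the second claim---but the argument is the same in substance.
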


\cref{iff C-local bregman} states that if the K-means algorithm converges and all of its clusters are non-empty, then if $A(C^*_P)$ is unique, $(P, C^*_P)$ is a C-local solution. Furthermore, if the K-means algorithm were to converge to a solution such that $ A(C^*_P) $ is not a singleton and all of its cluster centers are distinct, \cref{cor:must decrease} shows that choosing any adjacent point in $A(C^*_P)$ results in a strictly lower clustering loss.

In contrast to the preceding theorems characterizing C-local solutions, the necessary and sufficient conditions for D-local solutions in our analysis are simply using \cref{def:D-local_def}, which can be verified directly over a given solution's finite adjacent points. 

\subsection{Modification to the K-means Algorithm}
\begin{function}[tb]
\caption{Guarantees Convergence to C-local}
\label{function:C-local}
\begin{algorithmic}[1]
    \Function{C-LO}{$X, W, P, C, D$}
        \For{$n = 1, 2, \dots, N$}
            \If{$|\arg\min_{k' \in [K]} D(x_n, c_{k'})| > 1$}\label{size_check} 
                \State $k_1 \gets \text{min}(\arg\min_{k' \in [K]} D(x_n, c_{k'}))$\label{F1_min} 
                \State $k_2 \gets \text{max}(\arg\min_{k' \in [K]} D(x_n, c_{k'}))$\label{F1_max} 
                \State $p_{k_1, n} \gets 0$, $p_{k_2, n} \gets 1$ \label{F1_updateP}
                \State Recalculate $c_{k_{1}}, c_{k_{2}} $\label{F1_centers}         
                \State \textbf{Return}
            \EndIf
        \EndFor
    \EndFunction
\end{algorithmic}
\end{function}
\begin{function}[tb]
\caption{Guarantees Convergence to D-local}
\label{function:D-LOCAL}
\begin{algorithmic}[1]
    \Function{D-LO}{$X, W, P, C, D$}
	\For{$n = 1, 2, \dots, N$}
		\State $k_1 \gets \text{min}(\arg\min_{k' \in [K]} D(x_n, c_{k'}))$\label{F2_min} 
                \For{$k_2 = 1, 2, \dots, k_1 - 1, k_1 + 1, \dots, K$}
    			\If{$\Delta_{1}(n, k_1, k_2) < 0$}\label{F2_delta} 
                        \State $p_{k_2, n} \gets 1$
                        \State Recalculate $c_{k_2}$ \label{F2_centers2}  
                        \If{$s_{k_1}(P) = w_{n}$}
                            \State $p_{k_1, n} \gets 0$                      
                        \Else
                            \State $p_{k_1, n} \gets 0$
                            \State Recalculate $c_{k_{1}}$\label{F2_centers1}                            
                        \EndIf        
                        \State \textbf{Return}
                    \EndIf                
                \EndFor
        \EndFor
    \EndFunction
\end{algorithmic}
\end{function}

Now we propose a modified,
Locally-Optimal K-means algorithm (LO-K-means) in \cref{modified K-means algorithm (bregman)}. 
Motivated by the above results, either Function \ref{function:C-local} or \ref{function:D-LOCAL} is invoked when the K-means algorithm converges. In Case 1, if $ |A(C^*_P)|>1$, \cref{function:C-local} strictly decreases the clustering loss and prevents the algorithm from converging to a non-C-local solution using \cref{cor:must decrease}. 

Similarly, in Case 2, \cref{function:D-LOCAL} prevents \cref{modified K-means algorithm (bregman)} from converging to a solution which is not D-local. We first note that for any cluster assignment $P$, any $P'\in T(P)$ can be generated as follows. For a $p_{a,g} = 1$, and an $a \neq b$, set 
\vskip -0.25in
\begin{align}
\begin{cases}
p'_{a,g} = 0, \quad p'_{b,g} = 1, \\
p'_{k,n} = p_{k,n} \text{ }\forall (k, n) \notin \{(a, g), (b, g)\}.
\end{cases}
\end{align}
\vskip -0.15in
In addition, the resulting change in the clustering loss, $ \Delta_1(g, a, b) $, can be computed using \cref{bregman change alpha}. \cref{function:D-LOCAL} therefore searches for a point in $T(P)$ which strictly decreases the clustering loss, while preventing the algorithm from converging to a non-D-local solution. 

\begin{algorithm}[tb]
  \caption{Locally-Optimal K-means Algorithm (LO-K-means)}
  \label{modified K-means algorithm (bregman)}
  \begin{algorithmic}[1]
    \Require {$X = \{x_n\}_{n \in [N]} \subset \text{int dom}(\phi) \subseteq \mathbb{R}^d$, $W = \{w_n\}_{n \in [N]} \subset \mathbb{R}_{++}$, number of clusters $K\in\mathbb{N}$}
    \State Sample without replacement $\{c_k\}_{k \in [K]} \subset \{x_n\}_{n \in [N]}$ \hfill \textbf{(\ref{Kmeans1})}
    \State Initialize $p_{k,n} \gets 0$ for all $k\in [K]$, $n \in [N]$
      \While {$P$ continues to change}
        \State $p_{k, n} \gets 0$ for all $k\in [K]$, $n \in [N]$\label{line:initialize}
        \State {$p_{k,n} \gets 1$ for all $n \in [N]$,\newline \hspace{1cm} $k =\min(\arg\min_{k' \in [K]} D(x_n, c_{k'}))$ \hfill \textbf{(\ref{Kmeans2})}} \label{line:update P LO-K}
        \If{an empty cluster $a\in[K]$ exists}\label{line:if_empty}
            \State Move a point $g\in[N]$ from a cluster $b\in[K]$ to $a$, such that $s_b(P) > w_g$ and $x_g\neq c_b$. \label{line:update empty cluster}
        \EndIf
        \State $c_k \gets \frac{\sum_{n=1}^N p_{k,n} w_n x_n}{\sum_{n=1}^N p_{k,n} w_n}$ for all $k \in [K]$ \hfill \textbf{(\ref{Kmeans3})} \label{line:update C LO-K}
		\If{$P$ has not changed} \hfill \textbf{(New Step)} \label{line:LO-K new step}
    		\State {\textbf{Case 1:} \cref{function:C-local} - C-LO ($X, W, P, C, D$) \newline \Comment{guarantees C-local}}\label{line:LO-K new step C}
    		\State {\textbf{Case 2:} \cref{function:D-LOCAL} - D-LO ($X, W, P, C, D$) \newline \Comment{guarantees D-local}}\label{line:LO-K new step D}
		\EndIf
      \EndWhile
     \Ensure $P$: \text{cluster assignment}, $C$: \text{cluster centers}
  \end{algorithmic}
\end{algorithm}

\begin{thm}
\label{proof local kmeans(bregman)}
\cref{modified K-means algorithm (bregman)} converges to a C-local or D-local solution in a finite number of iterations when Case 1 or Case 2 is chosen, respectively.
\end{thm}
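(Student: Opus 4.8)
The plan is to prove the two assertions separately — finite termination, and that the returned $(P,C^*_P)$ satisfies \cref{def:C-local_def} in Case 1 or \cref{def:D-local_def} in Case 2 — with monotonicity of $F$ along the trajectory of \cref{modified K-means algorithm (bregman)} as the common engine. I would begin by recording a few invariants. Every assignment $P$ produced (by \cref{line:update P LO-K}, by \cref{line:update empty cluster}, or inside \cref{function:C-local}/\cref{function:D-LOCAL}) is integral, hence lies in the finite set $S_1$; after the first pass through \cref{line:update C LO-K} the centers equal $C^*_P$; and no empty cluster survives an iteration. The last point uses that $N>K$ together with an empty cluster forces some cluster to hold at least two distinct points of $X$, so the repair at \cref{line:update empty cluster} always finds an admissible $(g,b)$; that \cref{function:D-LOCAL} never moves a singleton cluster's only point, since the test $\Delta_1<0$ fails there by the Bregman mean identity $\sum_n w_n D(x_n,c)=\sum_n w_n D(x_n,\mu)+\big(\sum_n w_n\big)D(\mu,c)$; and that any empty cluster created by \cref{function:C-local} changes $P$, so the while loop continues and the repair fires at the start of the next iteration.

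Next I would verify $F$ is non-increasing under each operation. The assignment step minimizes the separable objective $f(\cdot,C)$ over $S_1$ with the deterministic minimum-index tie-break, so $F(P^{\mathrm{new}})\le f(P^{\mathrm{new}},C)\le f(P^{\mathrm{old}},C)=F(P^{\mathrm{old}})$; the center update is non-increasing by definition of $C^*_P$; the repair step is non-increasing by the mean identity above (the moved point becomes a singleton contributing $0$, and the donor cluster's optimal loss cannot increase); and \cref{function:C-local}/\cref{function:D-LOCAL} reassign a point only when doing so \emph{strictly} lowers $F$ — for \cref{function:D-LOCAL} this is exactly the condition $\Delta_1<0$, with \cref{bregman change alpha} identifying $\Delta_1(g,a,b)$ with $F(P')-F(P)$ for the corresponding $P'\in T(P)$; for \cref{function:C-local} the reassigned point is tied between two centers, so by \cref{cor:must decrease} the resulting $P'$ lies in $A(C^*_P)\cap T(P)$ and $F(P')<F(P)$.

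For termination, since $F$ takes finitely many values on $S_1$ and is bounded below by $0$, it is eventually constant; once it is constant, \cref{function:C-local}/\cref{function:D-LOCAL} can no longer reassign a point (that would strictly decrease $F$), so the iteration reduces to ordinary Lloyd updates plus the repair — a deterministic self-map of the finite set $S_1$, hence eventually periodic. A cycle of length at least two forces, through the equality case of $F(P^{\mathrm{new}})\le f(P^{\mathrm{new}},C^*_{P^{\mathrm{old}}})\le F(P^{\mathrm{old}})$ and uniqueness of optimal centers for non-empty clusters (\cref{weighted mean_prop}), that all partitions on the cycle share the same optimal centers; the deterministic assignment step then reproduces the same partition, contradicting period $\ge 2$. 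So a Lloyd fixed point is reached within the constant regime, \cref{function:C-local}/\cref{function:D-LOCAL} are invoked and cannot reassign, and the while loop exits; combined with the strict decreases above, the total iteration count is finite. At the exit the last assignment step left $P$ fixed, so $(P,C^*_P)$ is a partial optimal solution (optimality of the min-index assignment over $S_1$ transfers to $S_2$ because $F$ is linear in $P$ and $S_1$ is the set of extreme points of $S_2$) with all clusters non-empty. In Case 1, exiting means \cref{function:C-local} saw no point $n$ with $|\arg\min_{k'}D(x_n,c_{k'})|>1$, so $A(C^*_P)$ is a singleton, and \cref{iff C-local bregman} gives C-local. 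In Case 2, exiting means $\Delta_1(n,k_1,k_2)\ge 0$ for every point $n$ — which at a Lloyd fixed point sits in cluster $k_1$ — and every $k_2\ne k_1$; since every $P'\in T(P)$ arises exactly this way, \cref{bregman change alpha} gives $F(P')\ge F(P)$ for all $P'\in T(P)$, i.e. \cref{def:D-local}.

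The main obstacle is the termination step's treatment of the cases where $P$ changes without $F$ strictly decreasing: ties under the min-index rule, coincident cluster centers, and singleton clusters centered at their own point can let an assignment or repair step alter $P$ while keeping $F$ constant, and the empty-cluster repair is itself not guaranteed to strictly decrease $F$. The resolution is that such configurations are transient under the deterministic dynamics, after which a Lloyd fixed point is reached and control passes to \cref{function:C-local}/\cref{function:D-LOCAL}; making this airtight requires careful use of the Bregman mean identity and strict convexity of $\phi$ (equivalently $D(x,y)=0\iff x=y$) to determine exactly when the center-shift terms of \cref{bregman change alpha} vanish. Everything else is routine bookkeeping around results already established in the paper.
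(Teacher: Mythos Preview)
Your outline is largely sound on the local-optimality side (Cases 1 and 2 at exit), but the termination argument is where you diverge from the paper and where your proposal has a genuine gap you yourself flag but do not close.

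The paper's route is direct: its \cref{monotonically decrease} shows that \emph{every} iteration in which $P$ changes yields a \emph{strict} decrease of $F$, so the branch at \cref{line:LO-K new step} is reached in finitely many steps with a partial optimal solution having distinct centers. The strict decrease for the empty-cluster repair is the key fact you miss: the algorithm at \cref{line:update empty cluster} explicitly requires $x_g\neq c_b$ (where $c_b$ is the \emph{current} center, i.e.\ $C^{(i-1)}$), and \cref{empty_cluster_prop} then gives $F(P^{\text{repaired}})<f(P',C^{(i-1)})\le f(P^{(i-1)},C^{(i-1)})=F(P^{(i-1)})$. So the repair is not merely non-increasing across iterations --- it is strictly decreasing, and your ``main obstacle'' evaporates. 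With this in hand the paper's case split (repair fires $\Rightarrow$ strict decrease; $P$ and $C$ both change $\Rightarrow$ strict decrease; otherwise the branch is entered within one or two steps) is short and complete.

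Your alternative --- drop to a constant-$F$ regime and argue period $1$ for the deterministic Lloyd-plus-repair map --- would also work, but only \emph{after} you establish that the repair cannot fire in that regime. Your cycle contradiction uses $f(P^{\text{new}},C^*_{P^{\text{old}}})\le F(P^{\text{old}})$, which holds for the Lloyd assignment but can fail once the repair has moved a point away from its nearest center; without ruling out the repair in the constant regime, the chain breaks. The cleanest fix is exactly the strict-decrease observation above, which you assert is unavailable. A second omission: you never verify that the centers are \emph{distinct} when the branch is entered, but this is a hypothesis of \cref{cor:must decrease}, which you invoke to get the strict decrease inside \cref{function:C-local}. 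The paper proves distinctness in \cref{monotonically decrease} via the min-index rule (equal centers would empty the higher-indexed cluster, forcing the repair and hence a change in $P$, contradicting entry into the branch).
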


\subsection{Method Variants and Comparisons}
\label{var_fd}
Both Functions \ref{function:C-local} and \ref{function:D-LOCAL} exit after finding the first adjacent point which guarantees a decrease in the clustering loss. Our convergence analysis (\cref{proof local kmeans(bregman)}) holds when any $(n,k_2)$ is chosen which would result in a decrease in the clustering loss. This is perhaps most interesting for Function \ref{function:D-LOCAL}, as the exact decrease $\Delta_{1}(n, k_1, k_2)$ is computed. In our experiments in \cref{cha:experiment}, we also test a different variant of \cref{function:D-LOCAL}, Min-D-LO, which finds an $(n,k_2)$ pair which minimizes $\Delta_1$, moving the cluster assignment to an adjacent vertex that minimizes the clustering loss. A detailed implementation of Min-D-LO can be found as \cref{min-d-lo algorithm} in \cref{sec:details-K-means}.  

The difference between our methods and the K-means algorithm occurs only after the K-means algorithm has converged. If the K-means algorithm has converged to a C or D-local solution, this can now be easily verified by a single call to Function \ref{function:C-local} or \ref{function:D-LOCAL}. 
If the K-means algorithm does not converge to a locally optimal solution, our methods will perform additional iterations which are all guaranteed to strictly decrease the clustering loss. In particular, for any fixed iteration budget, our methods will always perform as well or better than the K-means algorithm (see \cref{fig:fixed iteration}).

Given that our method is a simple modification of the K-means algorithm, it is highly compatible and complementary with the vast number of methods that either use the K-means algorithm as a subroutine or improve it in their own way. This includes, for example, works trying to find the optimal choice for $K$ \cite{pelleg2000,hamerly2003}, methods on how to initialize the $K$ cluster means \cite{KANUNGO200489,arthur2006k}, as well as techniques to accelerate the K-means algorithm, such as Elkan's algorithm \cite{elkan2003}.

\subsection{Computational Complexity Analysis}
\label{subsec:TL complex}

For the K-means algorithm using squared Euclidean distance, the per-iteration time complexity is $O(NKd)$. The computation of $D$ is dependent on the specific Bregman divergence. Let $O(\Gamma_\phi(d))$ denote the time complexity of computing $D$ based on its underlying function $\phi$. Considering the examples given in \cref{sec:bregman}, it can be shown that $\Gamma_\phi(d)=d$ for the squared Euclidean distance, KL divergence, and Itakura-Saito divergence, while for the squared Mahalanobis distance, $\Gamma_\phi(d) = d^2$. In our analysis we will assume that $d=O(\Gamma_\phi(d))$. For general Bregman divergences, the per-iteration time complexity of the K-means algorithm becomes $O(NK\Gamma_\phi(d))$.

The following theorem considers the time complexity of the new steps introduced in \cref{modified K-means algorithm (bregman)}, and verifies that \cref{modified K-means algorithm (bregman)} maintains a per-iteration time complexity of $O(NK\Gamma_\phi(d))$ (empirical analysis of computation time can be found in \cref{cha:experiment}). This is achieved by using the following implementation details to precompute $s_k(P)=\sum_{n=1}^N p_{k,n} w_n$ for $k\in[K]$, which is used to recalculate $c_{k_1}$ and $c_{k_2}$ in \cref{function:C-local,function:D-LOCAL}.

In \cref{modified K-means algorithm (bregman)} at \cref{line:initialize}, $s_k(P)$ is initialized as $s_k(P)\gets 0$ for all $k\in[K]$, and at \cref{line:update P LO-K}, for all $n\in N$ and $k =\min(\arg\min_{k' \in [K]} D(x_n, c_{k'}))$, $s_k(P)$ is updated as $s_k(P)\gets s_k(P)+p_{k,n}w_n$. 

\begin{thm}
\label{comp_comp}
The per-iteration time complexity of \cref{modified K-means algorithm (bregman)} equals $O(NK\Gamma_\phi(d))$ when calling C-LO or 
\mbox{(Min-)D-LO}.
\end{thm}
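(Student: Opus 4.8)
The plan is to bound the cost of one pass through the \textbf{while} loop of \cref{modified K-means algorithm (bregman)} and show it matches, up to constants, the $O(NK\Gamma_\phi(d))$ per-iteration cost of the original K-means algorithm. First I would recall why that baseline holds: the assignment step at \cref{line:update P LO-K} evaluates $D(x_n,c_k)$ for every pair $(n,k)$ at cost $O(\Gamma_\phi(d))$ each, which dominates; recomputing all $K$ centers at \cref{line:update C LO-K} costs only $O(Nd)=O(N\Gamma_\phi(d))$ since each point contributes to exactly one cluster sum; and re-initializing $P$ at \cref{line:initialize} is $O(NK)$. I would then dispose of the cheap new bookkeeping: maintaining $\{s_k(P)\}$ (initialized in $O(K)$, incremented alongside \cref{line:update P LO-K} in $O(N)$ total) and the empty-cluster repair at \cref{line:update empty cluster} (a scan for an admissible pair $(g,b)$ with $s_b(P)>w_g$ and $x_g\neq c_b$, costing $O(NK)$) are all $O(NK\Gamma_\phi(d))$, because $d=O(\Gamma_\phi(d))$ and $d\geq 1$ force $\Gamma_\phi(d)=\Omega(1)$. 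It then remains only to bound the new calls to C-LO, D-LO, and Min-D-LO.

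For \cref{function:C-local}, I would observe that its loop merely recomputes $\arg\min_{k'\in[K]}D(x_n,c_{k'})$ and tests whether this set is a singleton — exactly the work already performed in the assignment step — so it costs $O(NK\Gamma_\phi(d))$. Once a tie is detected, extracting $k_1,k_2$ and flipping two entries of $P$ is $O(1)$, and recomputing the two affected centers is $O(d)$: using the precomputed $s_{k_1}(P),s_{k_2}(P)$ and the current means, the updated centers are obtained incrementally as $\frac{s_{k_1}(P)c_{k_1}-w_nx_n}{s_{k_1}(P)-w_n}$ and $\frac{s_{k_2}(P)c_{k_2}+w_nx_n}{s_{k_2}(P)+w_n}$ (or, if one prefers, recomputed from scratch in $O(Nd)$, still within budget since only two centers change). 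Hence C-LO runs in $O(NK\Gamma_\phi(d))$.

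For \cref{function:D-LOCAL}, the crux is that each evaluation of $\Delta_{1}(n,k_1,k_2)$ costs $O(\Gamma_\phi(d))$. For fixed $n$, computing $k_1$ is $O(K\Gamma_\phi(d))$ and is done once; for each $k_2\neq k_1$, the expression of \cref{bregman change alpha} with $\alpha=1$ requires only the two "new" centers $(c^{\text{new}})_{k_1}$ and $(c^{\text{new}})_{k_2}$ — each formed in $O(d)$ by the incremental identity above using the precomputed $s_k(P)$, with the case $s_{k_1}(P)=w_n$ simply omitting the $k_1$ term — together with the scalar coefficients $s_{k_1}(P)\pm w_n$ and $s_{k_2}(P)+w_n$ (available in $O(1)$) and a fixed number (four) of Bregman-divergence evaluations, each $O(\Gamma_\phi(d))$. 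So the inner loop over $k_2$ is $O(K\Gamma_\phi(d))$ per $n$ and $O(NK\Gamma_\phi(d))$ overall, and the single pair of center recomputations triggered by a decreasing move is $O(d)$. For Min-D-LO (\cref{min-d-lo algorithm}) I would make the same argument: the identical scan is performed without the early exit, maintaining a running minimizer of $\Delta_1$ with $O(1)$ overhead per pair, followed by one update of at most two centers; hence again $O(NK\Gamma_\phi(d))$.

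Summing the contributions — $O(NK)$ for bookkeeping and the empty-cluster repair, $O(NK\Gamma_\phi(d))$ for the assignment step, $O(N\Gamma_\phi(d))$ for the center update, and $O(NK\Gamma_\phi(d))$ for whichever of C-LO / D-LO / Min-D-LO is invoked — every iteration of the \textbf{while} loop is $O(NK\Gamma_\phi(d))$, which proves the claim; I would also note in passing that the only extra storage is the $O(K)$ array $\{s_k(P)\}$, so the space complexity is unchanged as well. I expect the only non-routine step to be verifying that $\Delta_1$ can really be evaluated in $O(\Gamma_\phi(d))$ time: this rests on the incremental center-update identities (precisely what precomputing $s_k(P)$ buys) and on reading off from \cref{bregman change alpha} that only a constant number of Bregman evaluations plus $O(d)$ vector arithmetic are involved; everything else is careful accounting.
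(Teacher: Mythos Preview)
Your proposal is correct and follows essentially the same decomposition as the paper: bound each line of the \textbf{while} loop separately, exploit the precomputed weights $s_k(P)$ to get $O(d)$ incremental center updates via the identity you wrote (which is exactly the paper's \cref{cluster_diff}), and read off from \cref{bregman change alpha} that each $\Delta_1$ evaluation needs only $O(d)$ arithmetic plus a constant number of divergence calls. One small slip worth tightening: the empty-cluster repair is not $O(NK)$, since the test $x_g\neq c_b$ is an $O(d)$ vector comparison and there may be up to $K-1$ empty clusters; the paper handles this with a pigeonhole argument (after scanning $K$ candidate points at least two share a cluster, so a valid $g$ is found in $O(Kd)$ per empty cluster, giving $O(K^2d)$ total) and then invokes $N>K$ to bound this by $O(NK\Gamma_\phi(d))$---but since you only need this coarser final bound, your conclusion is unaffected.
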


The space complexity of the K-means algorithm is $O((N+K)d)$ when using the squared Euclidean distance.
\cref{modified K-means algorithm (bregman)} continues to use the cluster assignment matrix $P=O(NK)$ introduced in \cref{cha:pre}. Following, for example, the implementation in scikit-learn, $P$ can be replaced by an array of labels of size $N$  where $P_n \in [K]$ indicates which cluster point $n$ is assigned to, e.g. Line 6 in \cref{function:C-local} would become $P_n=k_2$. Using a more efficient method of representing cluster assignments, and noting that the precomputed array $s(P)=O(K)$, \cref{modified K-means algorithm (bregman)} can be implemented with $O((N+K)d)$ space complexity. Using a general Bregman divergence also does not increase the space complexity when excluding its parameters, which seems to only concern the squared Mahalanobis distance, with its matrix $A\in\mathbb{R}^{d\times d}$, with all other examples of Bregman divergences within the references given in \cref{sec:Bregman} requiring at most a single scalar parameter.

\section{Experiments}
\label{cha:experiment}

We conducted experiments using both synthetic and real-world datasets to validate the performance of the LO-K-means algorithm (\cref{modified K-means algorithm (bregman)}). The experiments primarily used the squared Euclidean distance as the dissimilarity measure $D$, while experiments with other Bregman divergences (KL divergence \& Itakura-Saito divergence), are presented in \cref{subsec:other}.

\begin{figure*}[tb]
    \centering
    \includegraphics[width=0.5\columnwidth]{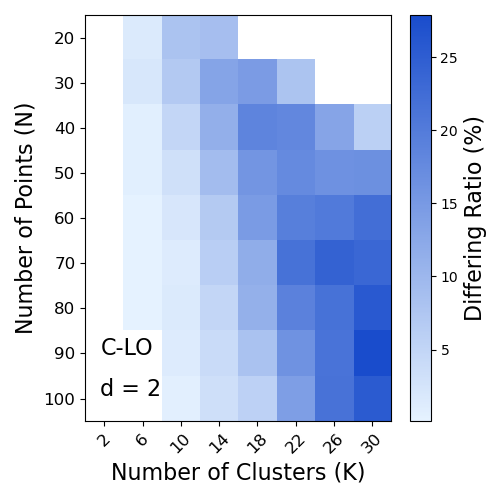}
    \includegraphics[width=0.5\columnwidth]{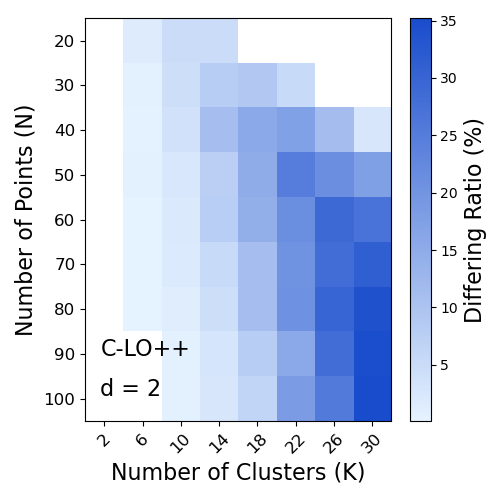}
    \includegraphics[width=0.5\columnwidth]{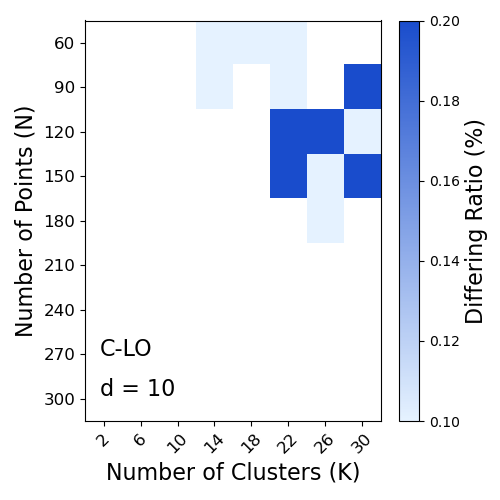}
    \includegraphics[width=0.5\columnwidth]{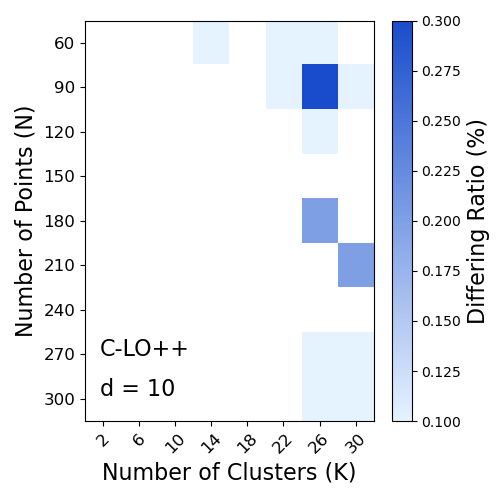}
    \vskip -0.05in
    \caption{The proportion of cases where the clustering loss is improved over K-means by using C-LO across two different initialization methods and dimensions, with $D$ equal to the squared Euclidean distance. These plots indicate the proportion of instances where K-means did not converge to a C-local solution. Each $N, K$ cell represents the results from 1{,}000 runs of both algorithms. Darker colors indicate a higher frequency of clustering loss improvement.}
    \label{fig:diff ratio}
    
    \vskip 0.2in
    
    \includegraphics[width=0.5\columnwidth]{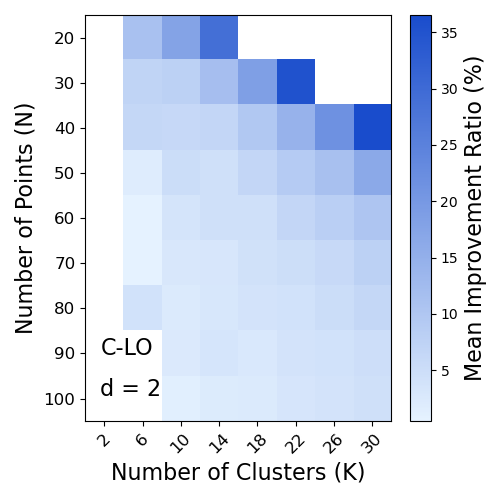}
    \includegraphics[width=0.5\columnwidth]{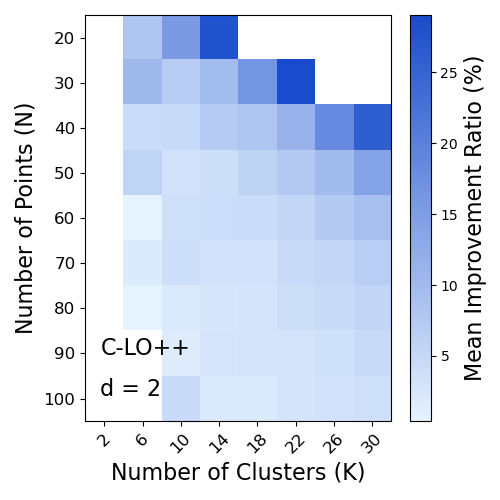}
    \includegraphics[width=0.5\columnwidth]{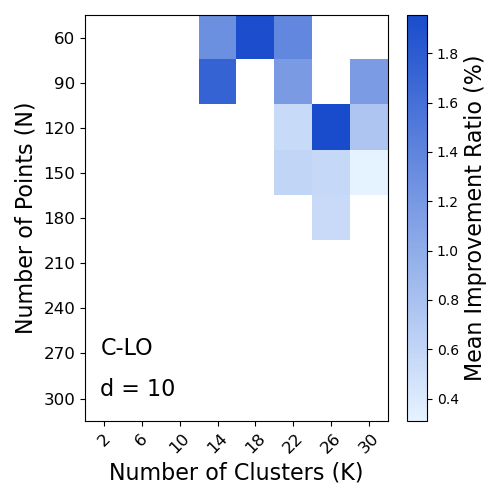}
    \includegraphics[width=0.5\columnwidth]{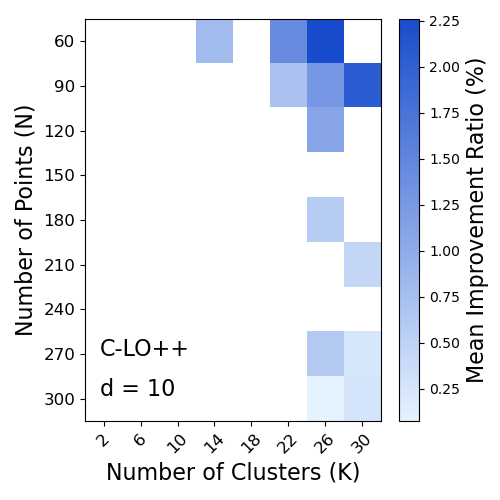}
    \vskip -0.05in
    \caption{The average improvement ratio of the clustering loss when C-LO improves over K-means across two different initialization methods and dimensions, with $D$ equal to the squared Euclidean distance. 
    Each $N, K$ cell represents the results from 1{,}000 runs of both algorithms. Darker colors indicate a higher percentage of clustering loss improvement.}
    \label{fig:imp ratio}
    \vskip -0.05in
\end{figure*}

\begin{table*}[tb]
    \centering
    \caption{Mean, variance, and minimum of the clustering loss, along with the average computation time and average number of iterations over 20 runs for each initialization method and number of clusters,  for K-means, D-LO, and Min-D-LO, with $D$ chosen as the squared Euclidean distance, for three real-world datasets.}
    \label{tab:real dataset1}
    \vskip 0.15in
    \resizebox{\textwidth}{!}{
    \begin{tabular}{|c|c|cccc|cccc|cccc|}
        \hline
        & Dataset & \multicolumn{4}{|c|}{Iris ($N = 150, d = 4$)} & \multicolumn{4}{|c|}{Wine Quality ($N = 6{,}497, d = 11$)} & \multicolumn{4}{|c|}{News20 - \ref{dataset: news1} ($N = 2{,}000, d = 1{,}089$)}\\
        \hline
        $K$ & Algorithm & Mean $\pm$ Variance & Minimum & Time(s) & Num Iter & Mean $\pm$ Variance & Minimum & Time(s) & Num Iter & Mean $\pm$ Variance & Minimum & Time(s) & Num Iter \\
        \hline
        \multirow{6}{*}{$5$}   
                                & K-means & $57.54 \pm 9.78$ & $\mathbf{46.54}$ & $ < 0.001$ & $9$ & $2{,}393{,}869 \pm 67$ & $2{,}393{,}751$ & $0.005$ & $40$ & $984{,}655 \pm 55{,}781$ & $864{,}073$ & $0.31$ & $31$  \\
                                & D-LO & $\mathbf{57.32} \pm 9.83$ & $\mathbf{46.54}$ & $ < 0.001$ & $ 13$ & $\mathbf{2{,}393{,}759} \pm 12$ & $\mathbf{2{,}393{,}742}$ & $0.006$ & $ 49$ & $808{,}391 \pm 0$ & $808{,}391$ & $1.94$ & $ 168$ \\
                                & Min-D-LO & $\mathbf{57.32} \pm 9.83$ & $\mathbf{46.54}$ & $ < 0.001$ & $ 13$ & $\mathbf{2{,}393{,}759} \pm 12$ & $\mathbf{2{,}393{,}742}$ & $0.006$ & $ 49$ & $\mathbf{808{,}032} \pm 1{,}566$ & $\mathbf{801{,}207}$ & $0.97$ & $ 81$\\ 
        \cline{2-14} 
                                & K-means++ & $50.58 \pm 4.74$ & $\mathbf{46.54}$ & $ < 0.001$ & $8$ & $2{,}499{,}611 \pm 211{,}445$ & $2{,}393{,}754$ & $0.005$ & $40$ & $870{,}899 \pm 80{,}013$ & $808{,}450$ & $0.20$ & $18$ \\
                                & D-LO++ & $\mathbf{50.30} \pm 4.70$ & $\mathbf{46.54}$ & $ < 0.001$ & $ 13$ & $\mathbf{2{,}393{,}753} \pm 11$ & $\mathbf{2{,}393{,}742}$ & $0.008$ & $ 63$ & $\mathbf{806{,}236} \pm 3{,}292$ & $ \mathbf{801{,}207} $ & $0.82$ & $74$ \\   
                                & Min-D-LO++ & $\mathbf{50.30} \pm 4.70$ & $\mathbf{46.54}$ & $ < 0.001$ & $ 13$ & $\mathbf{2{,}393{,}753} \pm 11$ & $2{,}393{,}746$ & $0.008$ & $ 62$ & $806{,}595 \pm 3{,}111$ & $ \mathbf{801{,}207} $ & $0.53$ & $48$ \\
        \hline
        \multirow{6}{*}{$10$}   & K-means & $31.55 \pm 5.42$ & $26.78$ & $ < 0.001$ & $8$ & $1{,}378{,}087 \pm 6{,}569$ & $\mathbf{1{,}367{,}222}$ & $0.01$ & $51$ & $919{,}058 \pm 73{,}594$ & $734{,}571$ & $0.71$ & $35$ \\
                                & D-LO  & $\mathbf{30.53} \pm 5.00$ & $\mathbf{26.18}$ & $ < 0.001$ & $ 20$ & $\mathbf{1{,}377{,}711} \pm 6{,}571$ & $\mathbf{1{,}367{,}222}$ & $0.02$ & $ 74$ & $\mathbf{637{,}004} \pm 4{,}319$ & $\mathbf{625{,}281}$ & $19.52$ & $ 870$ \\
                                & Min-D-LO & $30.55 \pm 4.99$ & $\mathbf{26.18}$ & $ < 0.001$ & $ 19$ & $\mathbf{1{,}377{,}711} \pm 6{,}571$ & $\mathbf{1{,}367{,}222}$ & $0.02$ & $ 74$ & $642{,}980 \pm 7{,}605$ & $637{,}400$ & $6.65$ & $ 317$\\ 
        \cline{2-14} 
                                & K-means++  & $29.57 \pm 2.97$ & $26.01$ & $ < 0.001$ & $7$ & $1{,}381{,}737 \pm 9{,}516$ & $1{,}365{,}020$ & $0.01$ & $52$ & $697{,}527 \pm 32{,}211$ & $643{,}583$ & $0.48$ & $23$ \\
                                & D-LO++ & $\mathbf{28.92} \pm 3.00$ & $\mathbf{25.94}$ & $ < 0.001$ & $ 17$ & $\mathbf{1{,}380{,}941} \pm 9{,}674$ & $\mathbf{1{,}364{,}944}$ & $0.02$ & $ 87$ & $\mathbf{634{,}216} \pm 5{,}596$ & $\mathbf{625{,}467}$ & $6.18$ & $288$ \\
                                & Min-D-LO++ & $28.93 \pm 3.00$ & $\mathbf{25.94}$ & $ < 0.001$ & $ 17$ & $\mathbf{1{,}380{,}941} \pm 9{,}674$ & $\mathbf{1{,}364{,}944}$ & $0.02$ & $ 84$ & $634{,}293 \pm 6{,}477$ & $625{,}468$ & $2.55$ & $125$ \\
        \hline
        \multirow{6}{*}{$25$}   & K-means & $15.98 \pm 1.64$ & $13.67$ & $ < 0.001$ & $7$ & $681{,}397 \pm 24{,}123$ & $659{,}902$ & $0.04$ & $86$ & $790{,}822 \pm 88{,}437$ & $650{,}038$ & $1.65$ & $34$ \\
                                & D-LO  & $14.59 \pm 1.68$ & $\mathbf{11.89}$ & $< 0.001$ & $ 36$ & $\mathbf{665{,}882} \pm 7{,}059$ & $\mathbf{654{,}126}$ & $0.12$ & $ 235$ & $\mathbf{481{,}983} \pm 5{,}198$ & $475{,}651$ & $155.39$ & $ 3{,}016$ \\     
                                & Min-D-LO & $\mathbf{14.49} \pm 1.76$ & $12.02$ & $<0.001$ & $ 30$ & $666{,}641 \pm 7{,}082$ & $\mathbf{654{,}126}$ & $0.10$ & $ 202$ & $485{,}809 \pm 6{,}874$ & $\mathbf{473{,}159}$ & $40.17$ & $ 787$ \\
        \cline{2-14} 
                                & K-means++   & $13.73 \pm 0.68$ & $12.70$ & $ < 0.001$ & $6$ & $654{,}219 \pm 15{,}173$ & $631{,}244$ & $0.03$ & $52$ & $529{,}028 \pm 32{,}301$ & $487{,}823$ & $1.25$ & $26$ \\
                                & D-LO++ & $\mathbf{12.58} \pm 0.45$ & $\mathbf{11.83}$ & $ < 0.001$ & $ 31$ & $649{,}046 \pm 14{,}736$ & $\mathbf{630{,}546}$ & $0.06$ & $ 121$ & $475{,}299 \pm 3{,}831$ & $468{,}201$ & $35.96$ & $705$ \\  
                                & Min-D-LO++ & $12.61 \pm 0.43$ & $12.07$ & $ < 0.001$ & $27$ & $\mathbf{649{,}001} \pm 14{,}774$ & $631{,}157$ & $0.05$ & $ 107$ & $\mathbf{474{,}431} \pm 4{,}508$ & $\mathbf{467{,}745}$ & $15.77$ & $316$\\
        \hline
        \multirow{6}{*}{$50$}   & K-means   & $8.65 \pm 1.27$ & $7.05$ & $ < 0.001$ & $5$ & $434{,}056 \pm 14{,}094$ & $402{,}580$ & $0.11$ & $63$ & $731{,}980  \pm 91{,}535$ & $552{,}717$ & $2.72$ & $28$ \\
                                & D-LO& $7.13 \pm 1.50$ & $5.66$ & $0.002$ & $ 48$ & $\mathbf{424{,}122} \pm 20{,}475$ & $\mathbf{381{,}856}$ & $0.28$ & $ 259$ & $\mathbf{400{,}826} \pm 2{,}920$ & $\mathbf{395{,}833}$ & $314.36$ & $ 3{,}133$ \\   
                                & Min-D-LO & $\mathbf{7.09} \pm 1.35$ & $\mathbf{5.62}$ & $0.002$ & $ 39$ & $424{,}435 \pm 20{,}610$ & $382{,}197$ & $0.29$ & $ 229$ & $402{,}716 \pm 2{,}469$ & $398{,}453$ & $103.30$ & $ 1{,}040$\\ 
        \cline{2-14} 
                                & K-means++   & $6.40 \pm 0.34$ & $5.52$ & $ < 0.001$ & $5$ & $376{,}544 \pm 8{,}469$ & $367{,}108$ & $0.05$ & $45$ & $439{,}029 \pm 10{,}015$ & $418{,}754$ & $3.02$ & $31$ \\
                                & D-LO++ & $\mathbf{5.36} \pm 0.24$ & $\mathbf{5.04}$ & $0.002$ & $ 37$ &  $\mathbf{372{,}716} \pm 5{,}701$ & $365{,}447$ & $0.21$ & $ 188$ & $\mathbf{392{,}016} \pm 1{,}513$ & $\mathbf{388{,}746}$ & $157.97$ & $1{,}228$ \\
                                & Min-D-LO++ & $5.40 \pm 0.23$ & $\mathbf{5.04}$ & $0.002$ & $ 30$ & $373{,}075 \pm 5{,}877$ & $\mathbf{364{,}596}$ & $0.18$ & $ 164$ & $392{,}146 \pm 2{,}080$ & $388{,}990$ & $60.41$ & $533$ \\   
        \hline
    \end{tabular}
    }
\end{table*}

\begin{figure*}[tb]
    \centering
    \includegraphics[width=0.5\columnwidth]{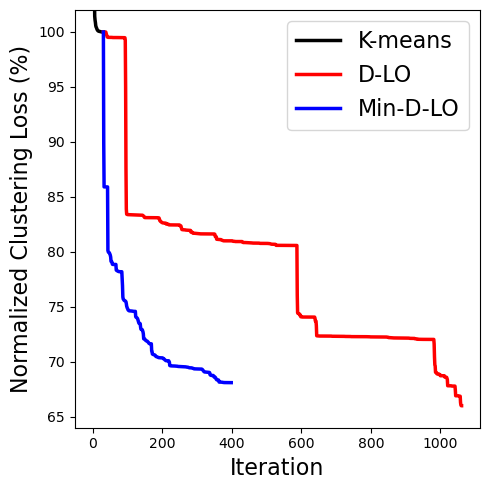}
    \includegraphics[width=0.5\columnwidth]{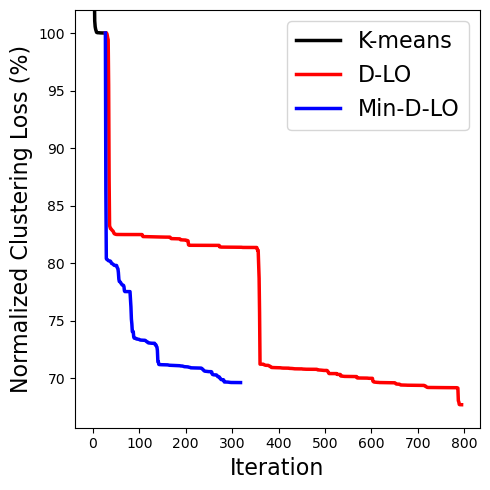}
    \includegraphics[width=0.5\columnwidth]{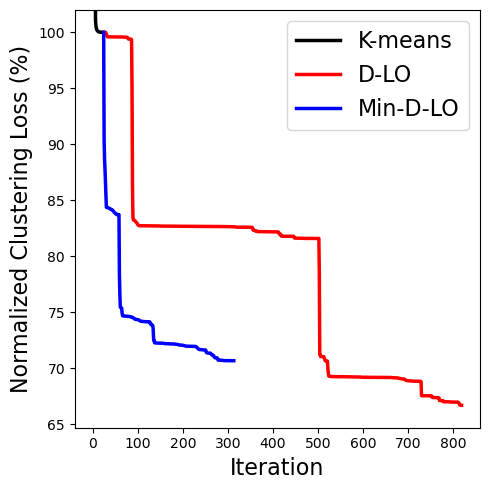}
    \includegraphics[width=0.5\columnwidth]{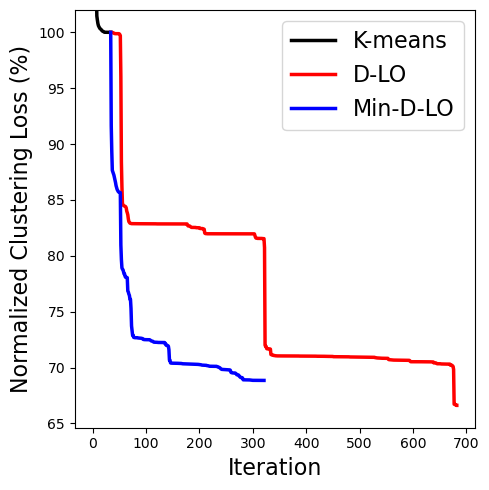}
    \caption{The clustering loss progression for four runs of D-LO, Min-D-LO, and their common K-means iterations on the News20 dataset with $(N = 2,000, d = 1,089)$ with $K = 10$. The clustering losses are normalized such that the clustering loss achieved by the K-means algorithm upon convergence is set to $100\%$.}
    \label{fig:fixed iteration}
    \vskip -0.2in
\end{figure*}

\subsection{Algorithms}
We compared the following algorithms.
\begin{itemize}[
    topsep=0.1pt,
    itemsep=0.1pt,
]
    \item K-means: LO-K-means algorithm (\cref{modified K-means algorithm (bregman)}) without the new step (\cref{line:LO-K new step,line:LO-K new step C,line:LO-K new step D}) (see \cref{ordinary K-means algorithm} in \cref{sec:details-K-means}).
    \item C-LO: Case 1 of the LO-K-means algorithm calling \cref{function:C-local}. 
    \item D-LO: Case 2 of the LO-K-means algorithm calling \cref{function:D-LOCAL}. 
    \item Min-D-LO: Case 2 of the LO-K-means algorithm calling a different variant of \cref{function:D-LOCAL} (see \cref{min-d-lo algorithm} in \cref{sec:details-K-means}). 
\end{itemize}

For the initialization of cluster centers in the above algorithms, we use two methods: uniform random sampling of $K$ points without replacement and the K-means++ method \cite{arthur2006k}, as implemented in scikit-learn. When using K-means++ as the initialization method, we call 
the above algorithms K-means++, C-LO++, D-LO++, and Min-D-LO++, respectively, while for algorithms with the uniform random sampling, we use the above algorithm names. A comparison with the D-local algorithm proposed by \citet{peng2005cutting} is presented in \cref{subsec:peng}.

For both the synthetic and real-world datasets, controlled experiments were conducted when comparing algorithms, guaranteeing that the dataset and initial cluster centers remained identical across all algorithms in each trial.

\subsection{Synthetic Datasets}
The synthetic datasets were generated as follows. 
We uniformly sampled $N$ data points from the space $[1, 10]^{d}$, restricted to integer values. If the same point is selected multiple times, 
the number of selections
is assigned as its weight.

The experimental results comparing K-means with C-LO, and K-means++ with C-LO++ are shown in \cref{fig:diff ratio,fig:imp ratio} (see the additional experiments in \cref{subsec:add syn}).
Each figure presents the results when varying the number of data points $N$ and the number of clusters $K$, with 1{,}000 runs conducted for each experiment. 

As shown in \cref{fig:diff ratio}, in many cases, particularly in low-dimensional settings, K-means does not converge to a C-local solution, regardless of the initialization.
Additionally, when the number of clusters $K$ is large relative to the number of data points $N$, or when data points are densely distributed in the space, K-means often fails to converge to a C-local solution. In contrast, the C-LO algorithm consistently achieves a C-local solution with lower clustering loss. 

\cref{fig:imp ratio} illustrates the improvement ratio of the clustering loss, given by $(F(P) - F(P_{LO})) / F(P)$, where $P_{\text{LO}}$ is the output of C-LO and $P$ is the output of K-means. It can be observed that the improvement tends to be more significant in lower-dimensional settings.

\subsection{Real-World Datasets}
\label{subsec:real}

\subsubsection{Metric}
We also did experiments using real-world datasets (see the details of each dataset used in \cref{subsec:dataset}). Given the randomness in the initial cluster center selection, each algorithm was executed 20 times. The performance metrics used for comparison include the average, variance, and minimum values of the clustering loss, as well as the average computation time and the average number of iterations of the algorithm. The evaluation was conducted for cluster sizes of $ K = 5 $, $ 10 $, $ 25 $, and $ 50 $.

\subsubsection{Results}
In real-world datasets, where features are real numbers, it is rare for $ A(C) $ in \cref{Optimal assignment set} to consist of multiple elements. Consequently, C-LO did not yield improvements in the clustering loss (details in \cref{tab:Iris dataset,tab:Wine Quality dataset,tab:Yeast dataset,tab:Predict dataset,tab:Iris dataset (KL),tab:Yeast dataset (KL),tab:Iris dataset (Itakura),tab:Yeast dataset (Itakura),tab:news20-1 dataset,tab:news20-2 dataset} in \cref{sec:realworld_res_appendix,subsec:other,subsec:peng}).
These results suggest that although the K-means algorithm was not explicitly designed to guarantee this property, it seems to be able to converge to a C-local solution in most real-world datasets.
However, it is observed that the time to verify that the K-means algorithm has converged to a C-local solution using C-LO is negligible, with no measured increase in computation time using C-LO compared to K-means in the vast majority of experiments. 

On the other hand, as shown in \cref{tab:real dataset1}, D-LO and Min-D-LO consistently achieve a lower clustering loss compared to K-means. We also observe that in general, achieving a D-local optimal solution requires more computation time. Although the per-iteration time complexity remains unchanged for all algorithms, D-LO and Min-D-LO require more iterations to converge. In order to mitigate this problem, we first observe that D-LO and Min-D-LO
are highly compatible with initializing cluster centers using K-means++, reducing their number of iterations in the majority of instances similarly to K-means. 

The slower convergence time of D-LO compared to K-means is also what motivated the initial search for other variants of \cref{function:D-LOCAL}, which resulted in Min-D-LO. As both D-LO and Min-D-LO converge to D-local optimal solutions, we observe that their accuracy is similar, while the number of iterations Min-D-LO requires is never greater than, and at times much smaller than required of D-LO, resulting in a significantly reduced runtime. 

A common way to balance accuracy and computation time is to enforce an iteration limit. We note that in all of our experiments, we ran the algorithms until convergence, which may have resulted in excessive time spent for only marginally better solutions.
\cref{fig:fixed iteration} illustrates the clustering loss after each iteration of D-LO and Min-D-LO, including their common K-means iterations before Function \ref{function:D-LOCAL} or \ref{min-d-lo algorithm} was first called. We observe that limiting the number of iterations to 200--300 leads to substantial improvements in computational efficiency, with a speedup of up to 5$\times$ for D-LO and 2$\times$ for Min-D-LO, while still always ensuring an accuracy improvement of over 
15\% for D-LO and over 25\% for Min-D-LO compared to K-means. We also note that this analysis is in line with the default iteration limit of 300 for the K-means algorithm used in scikit-learn.

Our final experiments compared our methods with the D-local algorithm proposed in \cite[Section 3.2.1]{peng2005cutting} (D-LO-P\&X) in \cref{subsec:peng}. We observe that all D-local algorithms achieved similar values for the clustering loss which were significantly lower than the clustering loss of K-means, while D-LO-P\&X was significantly slower than all of the tested methods, with Min-D-LO being in general much faster than the other tested D-local algorithms. 

\section{Conclusion}
\label{cha:conclusion}
This work focused on the local optimality properties of the K-means algorithm. Even though it seems to be widely understood that the K-means algorithm converges to a locally optimal solution, we showed by simple counterexample that this is in fact not true. 
Motivated by this finding, we analyzed two definitions of local optimality, discrete (D-local) and continuous (C-local), suited for the K-means problem and its continuous relaxation. Considering a generalized weighted K-means problem using Bregman divergences, and guided by our theoretical analysis, a modified K-means algorithm was developed, LO-K-means, consisting of simple improvements to the K-means algorithm, guaranteeing it to converge to either a C-local or D-local solution. It was shown that LO-K-means is an efficient algorithm, matching the K-means algorithm's computational complexity. Improved empirical performance in terms of solution quality was also observed, in particular for our Min-D-LO algorithm variant, which consistently found solutions with a lower clustering loss on both synthetic and real-world datasets compared to the K-means algorithm, while also being fast relative to other D-local algorithms. 

\clearpage

\section*{Impact Statement}
This paper presents work whose goal is to advance the field
of Machine Learning. There are many potential societal
consequences of our work, none which we feel must be
specifically highlighted here.

\bibliography{reference}
\bibliographystyle{icml2025}

%%%%%%%%%%%%%%%%%%%%%%%%%%%%%%%%%%%%%%%%%%%%%%%%%%%%%%%%%%%%%%%%%%%%%%%%%%%%%%%
%%%%%%%%%%%%%%%%%%%%%%%%%%%%%%%%%%%%%%%%%%%%%%%%%%%%%%%%%%%%%%%%%%%%%%%%%%%%%%%
% APPENDIX
%%%%%%%%%%%%%%%%%%%%%%%%%%%%%%%%%%%%%%%%%%%%%%%%%%%%%%%%%%%%%%%%%%%%%%%%%%%%%%%
%%%%%%%%%%%%%%%%%%%%%%%%%%%%%%%%%%%%%%%%%%%%%%%%%%%%%%%%%%%%%%%%%%%%%%%%%%%%%%%
\newpage
\appendix
\onecolumn

\section{Examples of Bregman Divergences}
\label{sec:bregman}
We list common choices of Bregman divergence and their corresponding $\phi$, frequently used in clustering. For more examples see \cite[Table 1]{banerjee2005clustering}, \cite[Example 1]{bauschke2017descent}, and \cite[Figure 2.2]{ackermann2009algorithms}.

\begin{itemize}
\item $\phi(x) = \|x\|_{2}^{2}$ with $\mathrm{dom}(\phi) = \mathbb{R}^{d}$. This represents the squared Euclidean distance, a fundamental and commonly applied dissimilarity measure ideal for spherical clustering applications \cite{steinbach2000comparison,berthold2016clustering,capo2017efficient}.

\item $\phi(x) = x^{\top}Ax$ with $\mathrm{dom}(\phi) = \mathbb{R}^{d}$. This function corresponds to the squared Mahalanobis distance, which extends the squared Euclidean distance by incorporating variable correlations. It is widely used in big data and data mining domains \cite{brown2022mahalanobis,martino2019k}.

\item $\phi(x) = \sum_{i=1}^{d}x_{i} \log x_{i}$ with $\mathrm{dom}(\phi) = \mathbb{R}_{+}^{d}$, where $0 \log 0 = 0$ by convention. This is the Kullback-Leibler (KL) divergence \cite{kullback1951information}, frequently applied in information theory and probabilistic clustering, particularly for image and text data \cite{wang2021kullback,dhillon2003divisive}.

\item $\phi(x) = -\sum_{i=1}^{d}\log x_{i}$ with $\mathrm{dom}(\phi) = \mathbb{R}_{++}^{d}$. Known as the Itakura-Saito divergence \cite{itakura1968analysis}, this measure is especially valued in signal processing and audio analysis for its scale-invariant properties. It finds applications in areas such as speech coding \cite{linde1980algorithm,battenberg2012toward}.
\end{itemize}

These examples underscore the versatility of Bregman divergences in accommodating various data characteristics and application domains.

\section{Proofs Concerning Some Local Optimality Relations}
\label{sec:appendix_proof_localopt}

\begin{proof}[\bfseries{Proof of Proposition \ref{D in C}}]
There are $ E_T \coloneq N(K - 1)$ extreme points in $T(P^*)$ as discussed in \cref{sec:localopt_cond},
denoted as $P_i \ (1 \leq i \leq E_T)$. Let $d_i\coloneq P_i-P^*$ for all $P_i\in T(P^*)$, which is a feasible direction in $S_2$. Since $P^*$ is D-local, according to \cref{def:D-local}, $F(P^*) \leq F(P_i)$ holds.\\

Since $F(P)$ is a concave function, by Jensen's inequality, for any $z_1, \dots, z_{E_T} \in S_2$,
\begin{align}
a_1 F(z_1) + \dots + a_{E_T} F(z_{E_T}) \leq F(a_1 z_1 + \dots + a_{E_T} z_{E_T})\quad (a_1 + \dots + a_{E_T} = 1, \ a_1, \dots, a_{E_T} \geq 0). \label{concave Jensen}
\end{align}

From the fact that $F(P^*) \leq F(P_i)$ and $F$ is concave, for any $0 \leq \beta \leq 1$, it holds that
\begin{align}
F(P^*) \leq (1-\beta)F(P^*)+\beta F(P^*+d_i) \leq F((1-\beta)P^*+\beta (P^*+d_i))=F(P^* + \beta d_i).
\end{align}
Now, we show that $F(P^*) \leq F(P)$ for all $P \in S_2 \cap B(P^*, \epsilon)$. Any $P$ can be expressed as $P = P^* + \sum_{i = 1}^{E_T} \beta_i d_i$ with some $\beta_i \geq 0$.

Letting $\gamma = \sum_{i=1}^{E_T} \beta_i$ (where $\gamma \leq 1$ holds for sufficiently small $\epsilon$), substituting $z_i = P^* + \gamma d_i$ and $a_i = \frac{\beta_i}{\gamma}$ into \cref{concave Jensen}, we obtain
\[
F(P^*) \leq \frac{\beta_1}{\gamma} F(P^* + \gamma d_1) + \dots + \frac{\beta_{E_T}}{\gamma} F(P^* + \gamma d_{E_T}) \leq F(P),
\]
which satisfies \cref{def:C-local}, proving that $P^*$ is C-local.

Thus, $P^*$ being D-local is a sufficient condition for it to also be C-local.
\end{proof}

The following proposition is a general result which will be used to prove \cref{K-means property} as well as \cref{monotonically decrease}.

\begin{prop}
\label{empty_cluster_prop}
Assume a cluster assignment $P$, with an empty cluster $a$, and cluster centers $C$ are given, where $(P,C)$ is feasible in (P1). There exists a cluster $b$ containing at least two points, with one of the points, $x_g\neq c_b$. If the cluster assignment of $x_{g}$ is shifted from cluster $b$ to $a$ by a factor of 
$0<\alpha\leq 1$, and denoting the new cluster assignment as $P^{\text{new}}$, it holds that $F(P^{\text{new}})<f(P,C)$.
\end{prop}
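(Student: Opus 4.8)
The plan is to bound $F(P^{\text{new}})$ from above by evaluating $f(P^{\text{new}},\cdot)$ at one conveniently chosen feasible center configuration, rather than computing the optimal centers $C^*_{P^{\text{new}}}$ explicitly. Since $F(P^{\text{new}})=\min_{C\in R^K}f(P^{\text{new}},C)$, it suffices to produce a single $\tilde C\in R^K$ with $f(P^{\text{new}},\tilde C)<f(P,C)$.

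First I would settle the existence claims. Because cluster $a$ is empty, the $N$ points of $X$ are distributed among at most $K-1$ clusters; since $N>K>K-1$, the pigeonhole principle forces some cluster $b$ to contain at least two points. As the points of $X$ are pairwise distinct, at most one of the (at least two) points assigned to $b$ can equal $c_b$, so there is a point $x_g$ assigned to $b$ with $x_g\neq c_b$, which is exactly the point moved.

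Next I would take $\tilde C$ to agree with $C$ in every column except column $a$, setting $\tilde c_a\coloneq x_g$; since $x_g\in X\subseteq\operatorname{co}(X)=R$ while all other columns of $\tilde C$ equal those of the feasible $C$, we have $\tilde C\in R^K$ and hence $F(P^{\text{new}})\le f(P^{\text{new}},\tilde C)$. I would then compute $f(P^{\text{new}},\tilde C)-f(P,C)$ directly from the definition of the clustering loss. Only the columns of clusters $a$ and $b$ in the assignment change (point $x_g$ loses weight $\alpha w_g$ from $b$ and gains $\alpha w_g$ in $a$), and only $\tilde c_a$ differs from $C$. The contribution of cluster $a$ is $0$ before the move (it is empty) and $\alpha w_g D(x_g,\tilde c_a)=\alpha w_g D(x_g,x_g)=0$ afterward, while the contribution of cluster $b$ decreases by exactly $\alpha w_g D(x_g,c_b)$. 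Therefore $f(P^{\text{new}},\tilde C)-f(P,C)=-\alpha w_g D(x_g,c_b)$.

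Finally I would invoke strict convexity of $\phi$: a Bregman divergence satisfies $D(x,y)>0$ whenever $x\neq y$, so $D(x_g,c_b)>0$, and together with $\alpha>0$, $w_g>0$ this gives $f(P^{\text{new}},\tilde C)<f(P,C)$, hence $F(P^{\text{new}})\le f(P^{\text{new}},\tilde C)<f(P,C)$. There is no real obstacle here; the only points needing care are the pigeonhole step (which is where $N>K$ is used), verifying $\tilde C$ is feasible, i.e. $x_g\in R$, and recalling strict positivity of the Bregman divergence off the diagonal. The one mild conceptual move is resisting the temptation to work with the optimal centers of $P^{\text{new}}$ and instead using the upper bound at $\tilde C$.
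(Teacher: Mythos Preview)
Your proof is correct and follows essentially the same approach as the paper. The paper carries out the computation at the optimal centers $C^*_{P^{\text{new}}}$ and then invokes their optimality to bound the $k\neq a$ terms by $0$, whereas you short-circuit that step by evaluating at the explicit comparison configuration $\tilde C$ (which coincides with the paper's choice $(c^*_{P^{\text{new}}})_a=x_g$ on cluster $a$ and with $C$ elsewhere); the resulting inequality and its justification via $D(x_g,c_b)>0$ are identical.
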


\begin{proof}
Since there are at least $K+1$ unique points to be assigned to $K$ clusters, there exists a cluster $b$ containing at least two distinct points $x_{g}$ and $x_{h}$, such that $x_{g} \neq c_{b}$. 
After shifting $x_{g}$ from cluster $b$ to $a$ by a factor of $0 < \alpha \leq 1$, the optimal center of cluster $a$ becomes $(c^*_{P^{\text{new}}})_a = x_{g}$. Writing out $f(P,C)$ and $F(P^{\text{new}})$,

\begin{align}
f(P,C) = &\sum_{k\notin \{a,b\}}\sum_{n=1}^{N} p_{k, n}w_{n}D(x_{n}, c_k)+ \sum_{k\in \{a,b\}}\sum_{n=1}^{N} p_{k, n}w_{n}D(x_{n}, c_k)\\
= &\sum_{k\notin \{a,b\}}\sum_{n=1}^{N} p^{\text{new}}_{k, n}w_{n}D(x_{n}, c_k)+ \sum_{n=1}^{N} p_{b, n}w_{n}D(x_{n}, c_b)\\
= &\sum_{k\notin \{a,b\}}\sum_{n=1}^{N} p^{\text{new}}_{k, n}w_{n}D(x_{n}, c_k)+\sum_{n\neq g}p^{\text{new}}_{b, n}w_{n}D(x_{n}, c_b)+ ((1-\alpha) + \alpha)w_{g}D(x_{g}, c_b)\\
= &\sum_{k\notin \{a,b\}}\sum_{n=1}^{N} p^{\text{new}}_{k, n}w_{n}D(x_{n}, c_k)+\sum_{n = 1}^Np^{\text{new}}_{b, n}w_{n}D(x_{n}, c_b)+ \alpha w_{g}D(x_{g}, c_b)\\
= &\sum_{k\neq a}\sum_{n=1}^{N} p^{\text{new}}_{k, n}w_{n}D(x_{n}, c_k)+\alpha w_{g}D(x_{g}, c_b), \text{ and}\\
F(P^{\text{new}}) = &\sum_{k=1}^K\sum_{n=1}^{N} p^{\text{new}}_{k, n}w_{n}D(x_{n}, (c^*_{P^{\text{new}}})_{k})
= \sum_{k\neq a}\sum_{n=1}^{N} p^{\text{new}}_{k, n}w_{n}D(x_{n}, (c^*_{P^{\text{new}}})_{k}). 
\end{align}

The change in the clustering loss is then equal to 
\begin{align}
F(P^{\text{new}}) - f(P,C) = \sum_{k\neq a}\sum_{n=1}^{N} p^{\text{new}}_{k, n}w_{n}(D(x_{n}, (c^*_{P^{\text{new}}})_k) - D(x_{n}, c_k)) - \alpha w_{g}D(x_{g}, c_b). \label{eq:empty cluster change}
\end{align}

Since $C^*_{P^{\text{new}}}$ minimizes the clustering loss for the cluster assignment $P^{\text{new}}$, 
\[
\sum_{k\neq a}\sum_{n=1}^{N} p^{\text{new}}_{k, n}w_{n}(D(x_{n}, (c^*_{P^{\text{new}}})_k) - D(x_{n}, c_k)) \leq 0.
\]

Moreover, since $x_{g} \neq c_{b}$, it follows that $D(x_{g}, c_{b}) > 0$ from \cref{breg_prty}. Thus, $F(P^{\text{new}}) - f(P,C) < 0$ for any $\alpha>0$. 

\end{proof}

\begin{proof}[\bfseries{Proof of \cref{K-means property}}]
Considering $(P,C)$ in \cref{empty_cluster_prop}, with $C=C^*_P$, the clustering loss can be strictly decreased for any $\alpha>0$, indicating that $P$ is not C-local. If we set $\alpha = 1$, it further shows that $P$ is not D-local either.    
\end{proof}

\begin{proof}[\bfseries{Proof of \cref{CF in Cf}}]
By \cref{def:C-local}, there exists an $\epsilon > 0$ such that 
\begin{equation}
F(P^*) = f(P^*, C^*_{P^*}) \leq F(P), \quad \forall P \in S_2 \cap B(P^*, \epsilon).
\end{equation}
From \cref{def:F(P)}, we know that $F(P) \leq f(P, C)$ for all $C \in R^K$. Therefore, it follows that
\begin{equation}
f(P^*, C^*_{P^*}) \leq f(P, C), \quad \forall P \in S_2 \cap B(P^*, \epsilon)\text{ and }\forall C \in R^K.
\end{equation}

\end{proof}

\section{Detailed Counterexample of the K-means Algorithm Converging to a Locally Optimal Solution}
\label{sec:details-c-e}

Consider the situation where the number of points $N = 5$, the number of clusters $K=2$, the dissimilarity measure $D(x, y) = \norm{x-y}_2^2$, and the given dataset and initial centers are as follows.
\[x_1 = -4,\; x_2 = -2,\; x_3 = 0,\; x_4 = 1.5,\; x_5 = 2.5,\]
\[c_1 = x_3 = 0,\; c_2 = x_5 = 2.5.\]
\textbf{Iteration 1} By calculating the dissimilarity measures $d_{k,n} = D(x_n, c_k)$ for all $n$ $\in$ [$N$] and $k \in$ [$K$],
\[
\begin{aligned}
d_{1,1} &= 16, & d_{1,2} &= 4,  & d_{1,3} &= 0,  & d_{1,4} &= 2.25,  & d_{1,5} &= 6.25, \\
d_{2,1} &= 42.25, & d_{2,2} &= 20.25,  & d_{2,3} &= 6.25,  & d_{2,4} &= 1,  & d_{2,5} &= 0.
\end{aligned}
\]
Based on $\{d_{k,n}\}$, points are assigned to the nearest center,
\[P = \begin{bmatrix}
1 & 1 & 1 & 0 & 0 \\
0 & 0 & 0 & 1 & 1
\end{bmatrix}.\]
Based on $P$, the clusters are recomputed as
\[c_1 = -2,\; c_2 = 2.\]
\textbf{Iteration 2} Repeating the same steps again, 
\[
\begin{aligned}
d_{1,1} &= 4,  & d_{1,2} &= 0,  & d_{1,3} &= 4,  & d_{1,4} &= 12.25,  & d_{1,5} &= 20.25, \\
d_{2,1} &= 36, & d_{2,2} &= 16, & d_{2,3} &= 4,  & d_{2,4} &= 0.25,  & d_{2,5} &= 0.25.
\end{aligned}
\]
Based on $\{d_{k,n}\}$, points are assigned to the nearest center,
\[P = \begin{bmatrix}
1 & 1 & 1 & 0 & 0 \\
0 & 0 & 0 & 1 & 1
\end{bmatrix}.\]
Based on $P$, the clusters are recomputed as
\[c_1 = -2,\; c_2 = 2, \]
and the K-means algorithm 
has converged. However, in the continuously relaxed problem (P2), $p_{1,3}$ can be moved towards $p_{2,3}$, resulting in  
\[\hat{P} = \begin{bmatrix}
1 & 1 & 1 - \alpha & 0 & 0 \\
0 & 0 & \alpha & 1 & 1
\end{bmatrix},\]
\[\hat{c}_1 = \dfrac{-6}{3-\alpha},\text{ and}\; \hat{c}_2 = \dfrac{4}{2 + \alpha}
\quad(0 < \alpha \leq 1).\]

The clustering loss can be written out as 
\begin{align}
f(\hat{P}, \hat{C}) =& \left(-4 - \dfrac{-6}{3-\alpha}\right)^2 + \left(-2 - \dfrac{-6}{3-\alpha}\right)^2 + (1 - \alpha)\left(\dfrac{-6}{3-\alpha}\right) ^ 2 \\
&+ \alpha \left(\dfrac{4}{2 + \alpha}\right)^2 + \left(1.5 - \dfrac{4}{2 + \alpha}\right)^2 + \left(2.5 - \dfrac{4}{2 + \alpha}\right)^2\\
=& \frac{4(5\alpha - 6)}{\alpha - 3} + \frac{8.5\alpha ^2 + 18\alpha + 2}{(\alpha+2)^2}.
\end{align}
Differentiating $f(\hat{P}, \hat{C})$ with respect to $\alpha$, 
\[\frac{\mathrm{d}}{\mathrm{d}\alpha}f(\hat{P}, \hat{C}) = \frac{-20\alpha(\alpha + 12)}{(\alpha-3)^2(\alpha+2)^2} < 0 \quad(0 < \alpha \leq 1).\]
Since this is negative for all $0<\alpha\leq 1$, and $f(\hat{P},\hat{C})$ is continuous with respect to $\alpha$, $f(\hat{P},\hat{C})$ is strictly decreasing for all $0<\alpha\leq 1$. Further, since $\hat{P}$ and $\hat{C}$ are both continuous with respect to $\alpha$, for any $\epsilon>0$ in \cref{def:CJ-local}, choosing $\alpha=\min(\alpha_P,\alpha_C)$ for $\alpha_P,\alpha_C>0$ such that $\hat{P}(\alpha_P)\in S_2 \cap B(P, \epsilon)$ and $\hat{C}(\alpha_C) \in R^K\cap B(C, \epsilon)$ shows that $(P, C)$ is not a CJ-local solution. From Propositions \ref{CF in Cf} and \ref{D in C}, it follows that $(P,C)$ is also not C-local, nor D-local.

\section{Proofs in \cref{cha:K-means}}
\label{sec:LO-K proof}

\subsection{Supporting Lemmas}

In our analysis the following lemmas will be required to prove our main results.

\begin{lem}[{\citealp[Page 200]{bregman1967relaxation}}]
\label{breg_prty}
For all $(x,c)\in \text{dom} (\phi) \times \text{int dom} (\phi)$ $D(x,c) \geq 0$, and $D(x, c) = 0$ if and only if $x = c$.
\end{lem}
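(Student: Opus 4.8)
The plan is to reduce the statement to a one-dimensional fact about strictly convex functions restricted to the segment joining $c$ and $x$. Fix $(x,c)\in\text{dom}(\phi)\times\text{int dom}(\phi)$ and set $g(t)\coloneq\phi\big(c+t(x-c)\big)$ for $t\in[0,1]$. Since $\text{dom}(\phi)$ is convex and $c\in\text{int dom}(\phi)$, the point $c+t(x-c)$ lies in $\text{int dom}(\phi)$ for every $t\in[0,1)$ (and for small negative $t$ as well), so $g$ is finite on $[0,1]$, convex as the restriction of the convex function $\phi$ to a line segment, and differentiable at $t=0$ with $g'(0)=\langle\nabla\phi(c),\,x-c\rangle$ by the chain rule. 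Unwinding the definition of the Bregman divergence in \cref{bregman divergence} gives the identity $D(x,c)=g(1)-g(0)-g'(0)$, so it suffices to analyse $g$.

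First I would prove nonnegativity: by convexity of $g$, the difference quotient $t\mapsto\frac{g(t)-g(0)}{t}$ is nondecreasing on $(0,1]$ and its infimum, attained in the limit $t\to 0^{+}$, equals $g'(0)$; hence $g(1)-g(0)=\frac{g(1)-g(0)}{1}\geq g'(0)$, i.e. $D(x,c)\geq 0$. For the equality characterization, the case $x=c$ is immediate from the definition, since $D(c,c)=\phi(c)-\phi(c)-0=0$. Conversely, if $x\neq c$ the segment is nondegenerate, so $g$ is \emph{strictly} convex; then $g(t)<(1-t)g(0)+tg(1)$ for $t\in(0,1)$, which rearranges to $\frac{g(t)-g(0)}{t}<g(1)-g(0)$, and combining with $g'(0)\leq\frac{g(t)-g(0)}{t}$ yields $D(x,c)=g(1)-g(0)-g'(0)>0$. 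Thus $D(x,c)=0$ if and only if $x=c$.

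The only delicate point is that $\phi$ is assumed differentiable only on $\text{int dom}(\phi)$, so $\nabla\phi(x)$ need not exist when $x$ lies on the boundary of $\text{dom}(\phi)$; this is precisely why the argument is run through the one-variable function $g$ and its derivative at the interior parameter value $t=0$, never invoking a gradient at $x$. Establishing that the segment stays in $\text{int dom}(\phi)$ and that strict convexity is inherited by $g$ are the steps to state carefully, but both are routine; everything else is the standard monotonicity of difference quotients of (strictly) convex functions. Alternatively, the statement may simply be cited from \citet[Page 200]{bregman1967relaxation}.
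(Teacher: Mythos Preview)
Your proof is correct. The paper does not actually provide a proof of this lemma; it simply states the result with a citation to \citet[Page 200]{bregman1967relaxation} and moves on. You even anticipate this in your final sentence. So there is no ``paper's proof'' to compare against beyond the bare citation.

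That said, your one-variable reduction via $g(t)=\phi(c+t(x-c))$ is the standard, clean way to establish this classical fact, and you handle the one genuinely delicate point correctly: differentiability of $\phi$ is only assumed on $\text{int dom}(\phi)$, so you only invoke $g'(0)$ (at the interior endpoint $c$) and never a gradient at $x$. The observation that $c+t(x-c)\in\text{int dom}(\phi)$ for $t\in[0,1)$, which underpins the strict-convexity step, is also stated and justified. If you want to keep the writeup tight for inclusion in the paper, the citation alone suffices; if you prefer self-containment, your argument is perfectly suitable.
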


\begin{lem}
\label{cluster_diff}
For any two non-empty cluster assignments for a cluster $k\in [K]$, $\{p_{k,n}\}$ and $\{p'_{k,n}\}$, it holds that  
\begin{align}
    (c^*_{P'})_k &= (c^*_P)_k + \frac{\sum_{n=1}^N (p'_{k,n} - p_{k,n}) w_n (x_n - (c^*_P)_k)}{\sum_{n=1}^N p'_{k,n} w_n}. \label{eq: new center}
\end{align}
\end{lem}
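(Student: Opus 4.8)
The plan is to prove \cref{cluster_diff} by a direct computation starting from the closed-form expression for optimal cluster centers given in \cref{weighted mean}. Write $s_k \coloneq \sum_{n=1}^N p_{k,n} w_n$ and $s'_k \coloneq \sum_{n=1}^N p'_{k,n} w_n$ for the cluster weights under the two assignments, both positive by the non-emptiness hypothesis. By \cref{weighted mean_prop}, $(c^*_P)_k = \frac{1}{s_k}\sum_{n=1}^N p_{k,n} w_n x_n$ and likewise $(c^*_{P'})_k = \frac{1}{s'_k}\sum_{n=1}^N p'_{k,n} w_n x_n$.

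The main step is to express $(c^*_{P'})_k - (c^*_P)_k$ and show it equals $\frac{1}{s'_k}\sum_{n=1}^N (p'_{k,n} - p_{k,n}) w_n (x_n - (c^*_P)_k)$. First I would expand the claimed right-hand side numerator: $\sum_n (p'_{k,n} - p_{k,n}) w_n x_n - (c^*_P)_k \sum_n (p'_{k,n} - p_{k,n}) w_n = \big(\sum_n p'_{k,n} w_n x_n - \sum_n p_{k,n} w_n x_n\big) - (c^*_P)_k (s'_k - s_k)$. Using $\sum_n p_{k,n} w_n x_n = s_k (c^*_P)_k$, the bracketed difference becomes $s'_k (c^*_{P'})_k - s_k (c^*_P)_k$, so the numerator is $s'_k (c^*_{P'})_k - s_k (c^*_P)_k - (c^*_P)_k s'_k + (c^*_P)_k s_k = s'_k\big((c^*_{P'})_k - (c^*_P)_k\big)$. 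Dividing by $s'_k$ gives exactly $(c^*_{P'})_k - (c^*_P)_k$, which rearranges to \cref{eq: new center}.

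I do not anticipate a serious obstacle here; the only thing to be careful about is invoking \cref{weighted mean_prop} correctly, which requires $X \subset \text{int dom}(\phi)$ (assumed throughout in \cref{sec:Bregman}) and $s_k, s'_k > 0$ (the non-emptiness hypothesis in the lemma statement), so that both optimal centers are the stated weighted means and lie in $R$. Everything else is elementary algebra over $\mathbb{R}^d$, componentwise, and holds for any Bregman divergence since \cref{weighted mean} is independent of the particular $\phi$.
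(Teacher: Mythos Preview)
Your proof is correct and follows essentially the same approach as the paper: both invoke \cref{weighted mean_prop} to write each center as a weighted mean and then perform the same elementary algebra, the only cosmetic difference being that you verify the identity by expanding the claimed right-hand side while the paper manipulates $(c^*_{P'})_k$ forward into the desired form.
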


\begin{proof}
From \cref{weighted mean_prop}, 
\begin{align}
    (c^*_P)_k &= \frac{\sum_{n=1}^N p_{k,n} w_n x_n}{\sum_{n=1}^N p_{k,n} w_n} \text{ and} \\
    (c^*_{P'})_k &= \frac{\sum_{n=1}^N p'_{k,n} w_n x_n}{\sum_{n=1}^N p'_{k,n} w_n} \\
    &= \frac{\sum_{n=1}^N p_{k,n} w_n x_n + \sum_{n=1}^N (p'_{k,n} - p_{k,n}) w_n x_n}{\sum_{n=1}^N p'_{k,n} w_n}\\
    &= \frac{\sum_{n=1}^N p_{k,n} w_n}{\sum_{n=1}^N p'_{k,n} w_n}(c^*_P)_k + \frac{\sum_{n=1}^N (p'_{k,n} - p_{k,n}) w_n x_n}{\sum_{n=1}^N p'_{k,n} w_n} \\
    &= \left(1 + \frac{\sum_{n=1}^N (p_{k,n} - p'_{k,n}) w_n}{\sum_{n=1}^N p'_{k,n} w_n}\right)(c^*_P)_k + \frac{\sum_{n=1}^N (p'_{k,n} - p_{k,n}) w_n x_n}{\sum_{n=1}^N p'_{k,n} w_n} \\
    &= (c^*_P)_k + \frac{\sum_{n=1}^N (p'_{k,n} - p_{k,n}) w_n (x_n - (c^*_P)_k)}{\sum_{n=1}^N p'_{k,n} w_n}. 
\end{align}
\end{proof}

\subsection{Proof of \cref{bregman change alpha}}
\begin{proof}[\bfseries{Proof of \cref{bregman change alpha}}]
The new cluster assignment $ P^{\text{new}}_{k,n} $ is defined as
\begin{equation}
\label{bregman new cluster}
    p^{\text{new}}_{k, n} = 
    \begin{cases} 
    p_{k, n} - \alpha & \text{if } (k, n) = (a, g), \\
    p_{k, n} + \alpha & \text{if } (k, n) = (b, g), \\
    p_{k, n} & \text{otherwise.}
    \end{cases} \quad (0 \leq \alpha \leq 1)
\end{equation}
In this case, the contributions from points belonging to clusters other than $ a $ and $ b $ remain unchanged. Let the changes in the function values within clusters $ a $ and $ b $ be denoted by $ \Delta_a $ and $ \Delta_b $, respectively. Beginning with cluster $a$, 
\begin{align}
\Delta_a 
&= \sum_{n = 1}^N p^{\text{new}}_{a,n} w_n D(x_n, (c^*_{P^{\text{new}}})_a) 
   - \sum_{n = 1}^N p_{a,n} w_n D(x_n, (c^*_P)_a).
\end{align}
From \cref{bregman new cluster}, we know that $ p_{a, g} = p^{\text{new}}_{a, g} + \alpha $. Substituting this, we have
\begin{align}
\Delta_a
&= \sum_{n = 1}^N p^{\text{new}}_{a,n} w_n D(x_n, (c^*_{P^{\text{new}}})_a) 
   - \sum_{n = 1}^N p^{\text{new}}_{a,n} w_n D(x_n, (c^*_P)_a) 
   - \alpha w_g D(x_g, (c^*_P)_a).
\end{align}
From \cref{bregman divergence}, we expand the terms as
\begin{align}
\Delta_a
&= \sum_{n = 1}^N p^{\text{new}}_{a,n} w_n \big(\phi(x_n) - \phi(c^*_{P^{\text{new}}})_a) 
   - \langle x_n - (c^*_{P^{\text{new}}})_a, \nabla \phi((c^*_{P^{\text{new}}})_a) \rangle\big) \\
&\quad - \sum_{n = 1}^N p^{\text{new}}_{a,n} w_n \big(\phi(x_n) - \phi((c^*_P)_a) 
   - \langle x_n - (c^*_P)_a, \nabla \phi((c^*_P)_a) \rangle\big) 
   - \alpha w_g D(x_g, (c^*_P)_a) \\
&= \sum_{n = 1}^N p^{\text{new}}_{a,n} w_n \big(\phi((c^*_P)_a) - \phi((c^*_{P^{\text{new}}})_a) 
   - \langle x_n - (c^*_{P^{\text{new}}})_a, \nabla \phi((c^*_{P^{\text{new}}})_a) \rangle \\
& \quad + \langle x_n - (c^*_P)_a, \nabla \phi((c^*_P)_a) \rangle\big) - \alpha w_g D(x_g, (c^*_P)_a).
\end{align}
From \cref{weighted mean}, we know that $ \sum_{n=1}^N p^{\text{new}}_{a,n} w_n x_n = \sum_{n=1}^N p^{\text{new}}_{a,n} w_n (c^*_{P^{\text{new}}})_a $. Applying this property two times, 
\begin{align}
\Delta_a 
&= \sum_{n = 1}^N p^{\text{new}}_{a,n} w_n \big(\phi((c^*_P)_a) - \phi((c^*_{P^{\text{new}}})_a) 
   + \langle (c^*_{P^{\text{new}}})_a - (c^*_P)_a, \nabla \phi((c^*_P)_a) \rangle\big) 
   - \alpha w_g D(x_g, (c^*_P)_a).
\end{align}
Finally, substituting $ s_a(P) - \alpha w_g= \sum_{n=1}^N p^{\text{new}}_{a,n} w_n $, where $s_{a}(P) = \sum_{n = 1}^{N}p_{a,n}w_{n}$, we obtain
\begin{align}
\Delta_a 
&= (s_a(P) - \alpha w_{g}) (-D((c^*_{P^{\text{new}}})_a, (c^*_P)_a)) - \alpha w_g D(x_g, (c^*_P)_a). \label{change 1 to 1 - e}
\end{align}
Similarly, for $ \Delta_b $, we have
\begin{align}
\Delta_b 
&= \sum_{n = 1}^N p^{\text{new}}_{b,n} w_n D(x_n, (c^*_{P^{\text{new}}})_b) - \sum_{n = 1}^N p_{b,n} w_n D(x_n, (c^*_P)_b) \notag\\
&= \sum_{n = 1}^N p^{\text{new}}_{b,n} w_n D(x_n, (c^*_{P^{\text{new}}})_b) - \sum_{n = 1}^N p^{\text{new}}_{b,n} w_n D(x_n, (c^*_P)_b) + \alpha w_g D(x_g, (c^*_P)_b)\notag\\
&= (s_b(P) + \alpha w_g) (-D((c^*_{P^{\text{new}}})_b, (c^*_P)_b)) + \alpha w_g D(x_g, (c^*_P)_b). \label{change 0 to e}
\end{align}

Therefore, the change in the clustering loss equals
\begin{align}
F(P^{\text{new}}) - F(P)
&= \Delta_a + \Delta_b\\
&= (s_a(P) - \alpha w_g) (-D((c^*_{P^{\text{new}}})_a, (c^*_P)_a)) - \alpha w_g D(x_g, (c^*_P)_a) \\
&\quad + (s_b(P) + \alpha w_g) (-D((c^*_{P^{\text{new}}})_b, (c^*_P)_b)) + \alpha w_g D(x_g, (c^*_P)_b) \notag\\
&= \alpha w_g (D(x_g, (c^*_P)_b) - D(x_g, (c^*_P)_a)) \\
&\quad - \left((s_a(P) - \alpha w_g) D((c^*_{P^{\text{new}}})_a, (c^*_P)_a) + (s_b(P) + \alpha w_g) D((c^*_{P^{\text{new}}})_b, (c^*_P)_b)\right). \label{eq:bregman delta}
\end{align}
\end{proof}

\subsection{Proofs of \cref{iff C-local bregman} \& \cref{cor:must decrease}}
The following \cref{sufficient} is used in the proof of \cref{iff C-local bregman}.

\begin{prop}[{\citealp[Corollary 2]{wendell1976minimization}}]
\label{sufficient}
    For problem (P2), suppose $(P^*, C^*_{P^*})$ is a partial optimal solution. If $A(C^*_{P^*})$ is a singleton, then $(P^*, C^*_{P^*})$ is a CJ-local solution.
\end{prop}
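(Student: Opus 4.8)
The plan is to exploit the separate structure of $f$ in its two block variables: $f(P,C)$ is linear in $P$ for fixed $C$, while $\bar C := C^*_{P^*}$ is by definition the global minimizer of $f(P^*,\cdot)$ over $R^K$. I will produce a single radius $\epsilon>0$ for which $f(P^*,\bar C)\le f(P,C)$ holds throughout the joint neighborhood appearing in \cref{def:CJ-local}, by factoring the desired inequality through the intermediate quantity $f(P^*,C)$.

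First I would record that $P^*$ is a strict minimizer of the fixed-center problem at $\bar C$, quantitatively. Partial optimality gives $f(P^*,\bar C)\le f(P,\bar C)$ for every $P\in S_2\supseteq S_1$, so $P^*\in A(\bar C)$; as $A(\bar C)$ is a singleton, $A(\bar C)=\{P^*\}$ and $P^*\in S_1$. Since $S_1$ is finite and $P^*$ is its unique minimizer of $f(\cdot,\bar C)$, the gap
\[
\delta := \min_{P\in S_1\setminus\{P^*\}} \big(f(P,\bar C)-f(P^*,\bar C)\big)
\]
is strictly positive.

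Next I would propagate this strict minimality to nearby centers. For each of the finitely many $P\in S_1$ the map $C\mapsto f(P,C)$ is continuous, so I can pick $\epsilon>0$ small enough that every such map varies by less than $\delta/2$ on $B(\bar C,\epsilon)$; the positive gap $\delta$ at $\bar C$ then dominates the perturbations, giving $f(P^*,C)\le f(P,C)$ for all $P\in S_1$ and all $C\in B(\bar C,\epsilon)$. Because $f(\cdot,C)$ is linear in $P$ and $S_1$ is exactly the set of extreme points of the polytope $S_2$, a linear objective on $S_2$ attains its minimum at a vertex, so this vertexwise inequality upgrades to $f(P^*,C)\le f(P,C)$ for all $P\in S_2$ and all $C\in B(\bar C,\epsilon)$.

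Finally, since $\bar C=C^*_{P^*}=\arg\min_{C\in R^K}f(P^*,C)$ by definition, $f(P^*,\bar C)\le f(P^*,C)$ for every $C\in R^K$. Chaining the two bounds yields $f(P^*,\bar C)\le f(P^*,C)\le f(P,C)$ for all $P\in S_2$ and all $C\in R^K\cap B(\bar C,\epsilon)$, which in particular holds on the joint neighborhood and certifies that $(P^*,\bar C)$ is CJ-local. The step demanding the most care — and the main obstacle — is the propagation: making rigorous that the strict discrete argmin survives small center perturbations, where the finiteness of $S_1$ (so a single uniform $\epsilon$ works) and the extreme-point description of $S_2$ (to return from the discrete minimum to the continuous one) are exactly the tools that make it go through.
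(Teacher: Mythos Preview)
Your argument is correct and takes a different route from the paper. The paper does not prove the result directly: it instead verifies that (P2) fits the framework of \cite[Corollary 2]{wendell1976minimization} by rewriting $f$ in Wendell's form $h(u)+\langle g(u),v\rangle$, checking that $S_2$ is compact, and establishing the continuity of $C\mapsto D(x_n,c_k)$ via \cite[Corollary 25.5.1]{rockafellar1970} (convex and differentiable implies continuously differentiable), then simply invokes Wendell's corollary. Your proof is self-contained, essentially reconstructing Wendell's argument in this specific setting: the finiteness of $S_1$ gives a strict gap $\delta>0$, continuity propagates the strict argmin to nearby centers, and linearity together with $S_1=\operatorname{ext}(S_2)$ lifts the vertex inequality back to all of $S_2$. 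What your approach buys is independence from an external reference; what the paper's approach buys is brevity. Two small points you may want to tighten: the line ``so $P^*\in A(\bar C)$'' presumes $P^*\in S_1$, which does not follow from partial optimality alone --- it does follow once you combine linearity with the singleton hypothesis (any minimizer of a linear functional over $S_2$ lies in the convex hull of the minimizing vertices, here a single point), but say so explicitly; and you assert continuity of $C\mapsto f(P,C)$ without proof, which is precisely the step the paper justifies carefully and which relies on $\nabla\phi$ being continuous on $\operatorname{int}\operatorname{dom}\phi$.
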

\begin{proof}
    We verify that the conditions of \cite[Corollary 2]{wendell1976minimization} are satisfied for problem (P2). In \cite[Section 2]{wendell1976minimization}, the optimization problem 
    \begin{align}
    \inf \quad & h(u)+\langle g(u),v\rangle \\
    \text{s.t.} \quad & u \in G, \; v \in H 
    \end{align}
    is studied \cite[Equation 4]{wendell1976minimization}, where it is assumed that $G\subset\mathbb{R}^m$ is an arbitrary subset and $H\subset\mathbb{R}^n$ takes the form of the standard polytope,

    \begin{align}
    H \coloneqq \left\{ v \in \mathbb{R}^{n} \;\middle|\;
    \begin{aligned}
        & Av = b, \\
        & v_i \geq 0 \;\;\; \forall i \in [n]
    \end{aligned}
    \right\}.
    \end{align}

    Relating their problem to (P2), we can set $u=\operatorname{vec}(C)$ and $v=\operatorname{vec}(P)$, where $\operatorname{vec}$ denotes the transformation of a matrix into a vector. In addition, let $j : [K] \times [N] \to [KN]$ maps indices of $P$ to $\operatorname{vec}(P)$. It follows that we can set $h(u)=0$, $v_{j(k,n)}=p_{k,n}$, $g_{j(k,n)}(u)=w_nD(x_n,c_k)$, $G=\operatorname{vec}(R^K)$, and $H=S_2$ with the appropriate choice of $A$ and $b$. 
    
    Besides what is stated in this proposition, \cite[Corollary 2]{wendell1976minimization} requires that $S_2$ is compact, which holds, and that $g(u)$ is continuous over $G$, which translates to $w_nD(x_n,c_k)$ being continuous with respect to $c_k\in R$. Given that $\phi$ in \cref{bregman divergence} is convex and differentiable on $\text{int dom} (\phi)$, it holds that it is continuously differentiable on $\text{int dom} (\phi)$ \cite[Corollary 25.5.1]{rockafellar1970}, hence $w_nD(x_n,c_k)$ is continuous with respect to $c_k\in R$ (and more broadly, $f$ is continuous with respect to $C$ over $R^K$). This shows that (P2) satisfies the conditions of \cite[Corollary 2]{wendell1976minimization} for this proposition to be true.   
\end{proof}

\begin{proof}[\bfseries{Proof of \cref{iff C-local bregman}}]
From the assumptions that $(P^*, C^*_{P^*})$ is a partial optimal solution and that $ A(C^*_P) $ is a singleton, $(P^*, C^*_{P^*})$ is CJ-local (\cref{sufficient}). Given that in addition all clusters are assumed to be non-empty, it will now be proven that $(P, C^*_P)$ is C-local.

From equation \cref{weighted mean}, $C^*_P$ is unique and continuous at $P$. Since $(P^*, C^*_{P^*})$ is CJ-local, there exists an $\epsilon_1 > 0$ such that
\begin{equation}
    f(P^*, C^*_{P^*}) \leq f(P', C') \quad \forall P' \in S_2 \cap B(P^*, \epsilon_1)
    \text{ and } \forall C' \in R^K\cap B(C^*_{P^*}, \epsilon_1). \label{pr:CJ-local}
\end{equation}

Since $C^*_P$ is continuous at $P^*$, for any $\epsilon_1>0$ there exists an $\epsilon_2 > 0$ such that for all $P \in S_2 \cap B(P^*, \epsilon_2)$, it holds that $C^*_P \in B(C^*_{P^*}, \epsilon_1)$. Therefore, there exists an $\epsilon = \min(\epsilon_1, \epsilon_2) > 0$ such that 
\[
    F(P^*)=f(P^*, C^*_{P^*}) \leq f(P, C^*_P)=F(P) \quad \forall P \in S_2 \cap B(P^*, \epsilon),
\]
satisfying the condition of C-local optimality.
\end{proof}

\begin{proof}[\bfseries{Proof of \cref{cor:must decrease}}]
When $A(C^*_P)$ consists of multiple elements, there exists a point $x_g$, assigned to a cluster $a$, and a different cluster $b$ such that $D(x_g, (c^*_P)_a) = D(x_g, (c^*_P)_b)$, with $x_g \neq (c^*_P)_a$ and $x_g \neq (c^*_P)_b$, since all cluster centers are distinct by assumption. By moving $x_g$ from cluster $a$ 
to cluster $ b $ by an amount $ \alpha $ $(0 < \alpha \leq 1)$, the difference in the clustering loss equals 
\begin{align}
&F(P^{\text{new}}) - F(P) \\
&= \alpha w_g (D(x_g, (c^*_P)_b) - D(x_g, (c^*_P)_a)) \\
&\quad - \left((s_a(P) - \alpha w_g) D((c^*_{P^{\text{new}}})_a, (c^*_P)_a) + (s_b(P) + \alpha w_g) D((c^*_{P^{\text{new}}})_b, (c^*_P)_b)\right)
\end{align}
from \cref{bregman change alpha}. Since $x_g \neq (c^*_P)_a$, \cref{weighted mean_prop} implies that $s_a(P) - \alpha w_g > 0$. From \cref{cluster_diff}, $(c^*_{P^{\text{new}}})_a$ and $(c^*_{P^{\text{new}}})_b$ can be written as 
\begin{align}
    (c^*_{P^{\text{new}}})_a = (c^*_P)_a - \frac{\alpha w_g (x_g - (c^*_P)_a)}{s_a(P) - \alpha w_g} \text{ and }
    (c^*_{P^{\text{new}}})_b = (c^*_P)_b + \frac{\alpha w_g (x_g - (c^*_P)_b)}{s_b(P) + \alpha w_g}. \label{new center}
\end{align}
Therefore, $(c^*_{P^{\text{new}}})_a \neq (c^*_P)_a$ and $(c^*_{P^{\text{new}}})_b \neq (c^*_P)_b$ holds, which leads to $D((c^*_{P^{\text{new}}})_a, (c^*_P)_a) > 0$ and $D((c^*_{P^{\text{new}}})_b, (c^*_P)_b) > 0$ from \cref{breg_prty}. Thus, 
\[(s_a(P) - \alpha w_g) D((c^*_{P^{\text{new}}})_a, (c^*_P)_a) + (s_b(P) + \alpha w_g) D((c^*_{P^{\text{new}}})_b, (c^*_P)_b) > 0.\]
From this, we have
\begin{equation}
F(P^{\text{new}}) - F(P) < 0, \label{C-local decrease}
\end{equation}
which implies that $F(P)$ is not a C-local solution. Taking the contrapositive, if $F(P)$ is a C-local solution, $A(C^*_P)$ must consist of just a single element. In addition, choosing $\alpha = 1$ results in the cluster assignment $P^{\text{new}}$ being an element of $A(C^*_P) \cap T(P)$, thus, from \cref{C-local decrease}, transitioning to any cluster assignment in $A(C^*_P) \cap T(P)$ guarantees a decrease in the clustering loss.
\end{proof}

\subsection{Proof of \cref{proof local kmeans(bregman)} \& \cref{comp_comp}}
After proving the following lemma which is necessary for \cref{proof local kmeans(bregman)}, we will prove \cref{proof local kmeans(bregman)} and \cref{comp_comp}.

\begin{lem}
\label{monotonically decrease}
The LO-K-means algorithm (\cref{modified K-means algorithm (bregman)}) enters the branch in \cref{line:LO-K new step} within a finite number of iterations, and at that point, $(P, C)$ is a partial optimal solution satisfying \cref{eq:partial1} with distinct cluster centers.
\end{lem}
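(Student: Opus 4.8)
The plan is to route everything through the fact that the clustering loss $F(P)$ is non-increasing along \cref{modified K-means algorithm (bregman)} and then to examine the first iteration at which $P$ fails to change. Write $P^{(t)},C^{(t)}$ for the assignment and centers at the end of iteration $t$; by \cref{weighted mean_prop} we have $C^{(t)}=C^*_{P^{(t)}}$ and hence $f(P^{(t)},C^{(t)})=F(P^{(t)})$, provided no cluster of $P^{(t)}$ is empty, which is the reason \cref{line:update empty cluster} is in the loop. First I would establish monotonicity in three steps: (i) the reassignment at \cref{line:update P LO-K} sends each $x_n$ to a minimizer of $D(x_n,\cdot)$, so the resulting $\tilde P$ satisfies $f(\tilde P,C^{(t-1)})\le f(P^{(t-1)},C^{(t-1)})=F(P^{(t-1)})$; (ii) whenever the empty-cluster repair at \cref{line:update empty cluster} is invoked it strictly decreases $F$, by \cref{empty_cluster_prop} (if several clusters are empty, the repair is applied successively, each move leaving its source cluster non-empty and creating no new empty cluster); (iii) the center update at \cref{line:update C LO-K} gives $F(P^{(t)})=f(P^{(t)},C^{(t)})\le f(P^{(t)},C^{(t-1)})$. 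Chaining these yields $F(P^{(t)})\le F(P^{(t-1)})$, with strict inequality whenever a repair occurred.

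Next I would prove the following dichotomy for every iteration $t$ in which the branch at \cref{line:LO-K new step} is \emph{not} entered, i.e.\ $P^{(t)}\neq P^{(t-1)}$: either $F(P^{(t)})<F(P^{(t-1)})$, or $F(P^{(t)})=F(P^{(t-1)})$. In the equality case all the inequalities above are equalities, so no repair occurred (hence $\tilde P=P^{(t)}$ has no empty clusters) and $f(P^{(t)},C^{(t-1)})=f(P^{(t)},C^{(t)})$; since $C^{(t)}=C^*_{P^{(t)}}$ is the \emph{unique} minimizer of $f(P^{(t)},\cdot)$ when all clusters are non-empty (\cref{weighted mean_prop} and the remark after it), this forces $C^{(t-1)}=C^{(t)}$. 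Because the reassignment rule is deterministic and iteration $t+1$ reassigns using $C^{(t)}=C^{(t-1)}$, it reproduces $\tilde P=P^{(t)}$ with still no empty cluster, so $P^{(t+1)}=P^{(t)}$ and the branch \emph{is} entered at iteration $t+1$. Consequently every iteration strictly before the first entry must strictly decrease $F$; since $F$ takes values in the finite set $\{F(P):P\in S_1\}$, there are only finitely many such iterations, which proves that the branch is entered within finitely many iterations.

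For the structural claims, let $t$ be an iteration where the branch is entered, so $P^{(t)}=P^{(t-1)}$ and hence $F(P^{(t)})=F(P^{(t-1)})$; by step (ii) no repair was invoked at iteration $t$, so $P^{(t)}$ is the pure reassignment output and has no empty clusters, and by the argument above $C^{(t)}=C^{(t-1)}=C^*_{P^{(t)}}$. Partial optimality (\cref{eq:partial1}) is then immediate: $f(P^{(t)},C^{(t)})=\sum_{n}w_n\min_{k'}D(x_n,(c^{(t)})_{k'})\le\sum_n w_n\sum_{k'}p'_{k',n}D(x_n,(c^{(t)})_{k'})=f(P',C^{(t)})$ for every $P'\in S_2$, using $w_n>0$, $\sum_{k'}p'_{k',n}=1$ and $p'_{k',n}\ge 0$. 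Distinctness of the centers uses the minimum-index tie-break at \cref{line:update P LO-K}: if $(c^{(t)})_i=(c^{(t)})_j$ with $i<j$, then whenever $j$ attains $\min_{k'}D(x_n,(c^{(t)})_{k'})$ so does $i$, so the rule never selects index $j$, making cluster $j$ empty --- contradicting that all clusters of $P^{(t)}$ are non-empty.

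The main obstacle I expect is the bookkeeping around \cref{line:update empty cluster}: one must verify the hypotheses of \cref{empty_cluster_prop} are available (the $N>K$ assumption guarantees a source cluster with two distinct points, one unequal to the relevant center, and this must be re-checked to survive each successive repair within a single iteration), that each repair is \emph{strictly} improving for $F$ so that ``no strict decrease of $F$'' genuinely implies ``no repair occurred'', and that the repaired assignment is exactly what the deterministic reassignment rule produces at the next iteration, so the reproduction argument in the dichotomy is valid. Once the monotonicity and the ``equality $\Rightarrow$ centers frozen $\Rightarrow$ $P$ stabilizes next iteration'' mechanism are nailed down, the remaining arguments are routine.
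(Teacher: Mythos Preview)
Your proof is correct and follows essentially the same route as the paper's: monotonicity of $F$ along iterations, the observation that when $F$ does not strictly decrease the centers are unchanged (the paper's Case~3) so determinism of the reassignment rule forces $P$ to stabilize at the next iteration, finiteness of $S_1$ to bound the number of strict decreases, and the min-index tie-break argument for distinctness of centers. One small over-claim to tidy up: the iteration immediately preceding the first entry may itself be the equality case, so it is not literally true that \emph{every} iteration before the first entry strictly decreases $F$---but since your dichotomy shows at most one such equality iteration can occur, finiteness still follows.
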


\begin{proof}
We first prove that the algorithm enters the branch in \cref{line:LO-K new step} in a finite number of steps.

{\bf Case 1:} If $P$ is updated in \cref{line:update empty cluster}, from \cref{empty_cluster_prop} with $\alpha=1$, the clustering loss strictly decreases after the cluster centers are updated on \cref{line:update C LO-K}. 

When $P$ is not updated in \cref{line:update empty cluster}, let the cluster assignment and the cluster centers after the $i$-th iteration ($i \geq 1$) be denoted as $P^{(i)}$ and $C^{(i)}$, respectively. 
     
{\bf Case 2:} If $P^{(i+1)}=P^{(i)}$, the algorithm enters the branch in \cref{line:LO-K new step}, so assume that $P^{(i+1)}\neq P^{(i)}$.

{\bf Case 3:} If $C^{(i+1)}=C^{(i)}$, then $P^{(i+2)}=P^{(i+1)}$, and the algorithm enters the branch in \cref{line:LO-K new step}, given that the selection of $P$ is a deterministic function of $C$.
    
{\bf Case 4:} If $P^{(i+1)}\neq P^{(i)}$ and $C^{(i+1)}\neq C^{(i)}$, given that $C^{(i+1)}$ is the unique minimizer of $f(P^{(i+1)},\cdot)$ from \cref{weighted mean_prop}, it holds that $F(P^{(i+1)})<F(P^{(i)})$. Given that the feasible solutions in $ S_1 $ are finite, with a total of $ K^N $ possible assignments, and that the algorithm cannot return to a previous value of $P$, due to $F$ strictly monotonically decreasing, the algorithm can only be in this state for a finite number of iterations before ultimately satisfying Cases 2 or 3 and entering the branch in \cref{line:LO-K new step}. The same holds for Case 1, with the algorithm entering the branch in \cref{line:LO-K new step} after a finite number of iterations.

The algorithm enters the branch in \cref{line:LO-K new step} for Cases 2 or 3, which we now verify is with a partial optimal solution following \cref{partial optimum}. For Case 2, given that $P^{(i+1)}=P^{(i)}$, it follows that $C^{(i+1)}=C^*_{P^{(i+1)}}=C^{(i)}$, and from \cref{line:update P LO-K} of \cref{modified K-means algorithm (bregman)}, 
$f(P^{(i+1)},C^{(i)})\leq f(P,C^{(i)}) \text{ }\forall P\in S_2$, showing that inequality \cref{eq:partial1} holds. For Case 3, the same argument can be made given that $P^{(i+2)}=P^{(i+1)}$.

We finally prove that, when entering the branch in \cref{line:LO-K new step}, the cluster centers are distinct. Assume that the branch in \cref{line:LO-K new step} is entered at iteration $i$, such that $P^{(i)} = P^{(i-1)}$ and $C^{(i)} = C^{(i-1)}$.  
Suppose that $C^{(i-1)}$ contains identical centers $c^{(i-1)}_a$ and $c^{(i-1)}_b$ ($a < b$).  
In \ref{Kmeans2} the cluster assignment $P^{(i)}$ for each $n$ is determined as $\min\left(\arg\min_{k' \in [K]} D(x_n, c_{k'})\right)$, resulting in cluster $b$ becoming empty. This implies that  
Case 1 will occur, contradicting that the algorithm entered the branch in \cref{line:LO-K new step} at iteration $i$.
\end{proof}

\begin{proof}[\bfseries{Proof of \cref{proof local kmeans(bregman)}}]
By \cref{monotonically decrease}, the branch at \cref{line:LO-K new step} is guaranteed to be entered within a finite number of iterations, and at that point, $(P, C)$ is a partial optimal solution with distinct cluster centers. Further, by \cref{line:update empty cluster} all clusters are non-empty.

For Case 1, if \cref{function:C-local} updates the cluster assignment to a new value $P'$, it follows that $A(C)$ was not a singleton and from \cref{cor:must decrease} the clustering loss strictly decreases, $F(P')<F(P)$. Since $S_1$ is finite and cluster assignments cannot return to previous values of $P$, the algorithm can only enter the branch in \cref{line:LO-K new step} and \cref{function:C-local} can only improve the solution a finite number of times, hence the algorithm must converge after a finite number of iterations. At convergence, the cluster assignment is not updated in \cref{function:C-local}. Thus, from \cref{cor:must decrease}, $A(C)$ is unique, and by \cref{iff C-local bregman}, \cref{modified K-means algorithm (bregman)} has converged to a C-local solution. Furthermore, this also implies that the solution is CJ-local by \cref{CF in Cf}.

For Case 2, when \cref{function:D-LOCAL} updates the cluster assignment, the clustering loss strictly decreases, ensuring convergence after a finite number of iterations following the same reasoning given for Case 1. In \cref{function:D-LOCAL}, the clustering loss of cluster assignment $ P $ is compared with all of the values of its adjacent elements. As a result, \cref{modified K-means algorithm (bregman)} converges to a D-local solution. This solution is also C-local and CJ-local by \cref{D in C,CF in Cf}.
\end{proof}

\begin{proof}[\bfseries{Proof of \cref{comp_comp}}]\text{ }

{\bf \cref{modified K-means algorithm (bregman)} \cref{line:update empty cluster}:} For a given empty cluster $a$, finding a valid point requires searching over the $N$ points, and for each point $g$ and its cluster $b$, checking if $s_b(P) > w_g$ and for any $i\in[d]$, if $x_g[i]\neq c_b[i]$. A valid point $g$ is then transferred from its cluster $b$ to $a$, with updates $s_a(P)=s_a(P)+w_g$ and $s_b(P)=s_b(P)-w_g$.

A valid point will be found after checking at most $K$ points, resulting in a time complexity of $ O(Kd) $ for each empty cluster since a point can belong to at most $K-1$ clusters given that there exists at least 1 empty cluster.
After checking $K$ points, at least two of the observed points must be assigned to the same cluster $b$. The cluster $b$ must then contain at least 2 points, hence $s_b(P) > w_g$ for all points $g$ assigned to cluster $b$.
Given that all points in $X$ are unique, $x_g=c_b$ can only hold for at most one of the observed points belonging to cluster $b$, hence after checking $K$ points, at least one must be valid. 

Given that there could be up to $K-1$ empty clusters, the total time complexity is $O(K^2d)=O(NKd)=O(NK\Gamma_\phi(d))$ given that $N>K$ by assumption.

{\bf Recalculating $c_{k_1}$ and $c_{k_2}$ in \cref{function:C-local,function:D-LOCAL}:}
In both functions, the cluster assignments are only changed by one entry. Following \cref{cluster_diff}, the new optimal center for cluster $k\in\{k_1,k_2\}$ can be computed as
$c^{\text{new}}_k = c_k + \frac{(p^{\text{new}}_{k,n} - p_{k,n}) w_n (x_n - c_k)}{s_k(P) + (p^{\text{new}}_{k,n} - p_{k,n}) w_n}$, i.e. 
$c^{\text{new}}_{k_1} = c_{k_1} - \frac{w_n (x_n - c_{k_1})}{s_{k_1}(P) - w_n}$ and 
$c^{\text{new}}_{k_2} = c_{k_2} + \frac{w_n (x_n - c_{k_2})}{s_{k_2}(P) + w_n}$.
Given that $s_k(P)$ has been precomputed in \cref{modified K-means algorithm (bregman)} at \cref{line:update P LO-K}, the time complexity of updating $c_k$ is $O(d)$.

When studying the time complexity of \cref{function:C-local,function:D-LOCAL} we consider the case where for $n \in [N]$ $\min(\arg\min_{k' \in [K]} D(x_n, c_{k'}))$ and $\max(\arg\max_{k' \in [K]} D(x_n, c_{k'}))$ (assumed to also be computed in \cref{modified K-means algorithm (bregman)} at \cref{line:update P LO-K} for Case 1) are stored in memory, or when $\arg\min_{k' \in [K]} D(x_n, c_{k'})$ for $n\in N$ needs to be recomputed.

Given that \cref{modified K-means algorithm (bregman)} will only enter Functions \ref{function:C-local} or \ref{function:D-LOCAL} if $P$ has not changed values, this implies that on \cref{line:update C LO-K} the centers will not have been changed, making the use of stored values of $\arg\min_{k' \in [K]} D(x_n, c_{k'})$ in Functions \ref{function:C-local} and \ref{function:D-LOCAL} valid. In addition, storing these values only requires $O(N)$ of memory, which does not increase the space complexity of \cref{modified K-means algorithm (bregman)}.

{\bf Time Complexity of \cref{function:C-local}:} Assuming for $n \in [N]$ the values of $\min(\arg\min_{k' \in [K]} D(x_n, c_{k'}))$ and $\max(\arg\max_{k' \in [K]} D(x_n, c_{k'}))$ have been stored in memory, at \cref{size_check}, checking if $\min(\arg\min_{k' \in [K]} D(x_n, c_{k'})) = \max(\arg\min_{k' \in [K]} D(x_n, c_{k'}))$ for $n\in[N]$ is $O(N)$. Since \cref{F1_centers} can be computed in $O(d)$, the overall time complexity is $O(N + d)$. If the values of $\arg\min_{k' \in [K]} D(x_n, c_{k'})$ must be recomputed, at \cref{size_check}, for each $n\in[N]$, we must compute the function $D$ $K$ times, resulting in a total time complexity of $O(NK\Gamma_\phi(d))$.

{\bf Time Complexity of Functions \ref{function:D-LOCAL} and \ref{min-d-lo algorithm}:} For \cref{function:D-LOCAL}, assuming for $n \in [N]$ the value of $\min(\arg\min_{k' \in [K]} D(x_n, c_{k'}))$ has been stored in memory, the time complexity of \cref{F2_min} equals $O(N)$. Otherwise,
computing $\min(\arg\min_{k' \in [K]} D(x_n, c_{k'}))$ for all $n\in [N]$ is $O(NK\Gamma_\phi(d))$. \cref{F2_delta} must be computed $O(NK)$ times. Examining \cref{eq:bregman change alpha}, the time complexity of computing $\Delta_{1}(n, k_1, k_2)$ is $O(d+\Gamma_\phi(d))=O(\Gamma_\phi(d))$ due to computing the new centers and $D$, hence the total time complexity of \cref{F2_delta}, and \cref{function:D-LOCAL} in total, is $O(NK\Gamma_\phi(d))$. For \cref{min-d-lo algorithm}, the same arguments can be applied to conclude that its time complexity is also $O(NK\Gamma_\phi(d))$.

\end{proof}

\section{Experimental Details}
\subsection{Details of the K-means Algorithm and Min-D-LO} 
\label{sec:details-K-means}
In the experiments, we used the following implementation of the K-means algorithm in \cref{ordinary K-means algorithm}, which is the LO-K-means algorithm excluding the new step. Its solution is guaranteed to have no empty clusters, with all cluster centers being distinct, as proven in \cref{monotonically decrease}. 

\cref{min-d-lo algorithm} gives a detailed implementation of Min-D-LO as described in \cref{var_fd}. In particular, this function can be called instead of \cref{function:D-LOCAL} in \cref{modified K-means algorithm (bregman)}, where instead of exiting after finding the first adjacent point which guarantees a clustering loss improvement, Min-D-LO finds the adjacent point which minimizes the clustering loss, while still guaranteeing convergence to a D-local solution.

\begin{algorithm}[H]
  \caption{K-means Algorithm}
  \label{ordinary K-means algorithm}
  \begin{algorithmic}[1]
    \Require {$X = \{x_n\}_{n \in [N]} \subset \text{int dom}(\phi) \subseteq \mathbb{R}^d$, $W = \{w_n\}_{n \in [N]} \subset \mathbb{R}_{++}$, number of clusters $K\in \mathbb{N}$}
      \State Sample without replacement $\{c_k\}_{k \in [K]} \subset \{x_n\}_{n \in [N]}$ \hfill \textbf{(\ref{Kmeans1})}
      \State Initialize $p_{k,n} \gets 0$ for all $k\in [K]$, $n \in [N]$
      \While {$P$ continues to change}        
      	\State $p_{k, n} \gets 0$ for all $k\in [K]$, $n \in [N]$
      	\State $p_{k, n} \gets 1$ for all $n \in [N]$, $k = \min(\arg\min_{k' \in [K]} D(x_n, c_{k'}))$ \hfill \textbf{(\ref{Kmeans2})} \label{line:update P}
            \If{an empty cluster $a\in[K]$ exists}\label{line:ord_if_empty}
            \State Move a point $g\in[N]$ from a cluster $b\in[K]$ to $a$, such that $s_b(P) > w_g$ and $x_g\neq c_b$. \label{line:ord_update empty cluster}
            \EndIf
            \State $c_k \gets \frac{\sum_{n=1}^N p_{k,n} w_n x_n}{\sum_{n=1}^N p_{k,n} w_n}$ for all $k \in [K]$ \hfill \textbf{(\ref{Kmeans3})} \label{line:update C}
      \EndWhile
     \Ensure $P$: \text{cluster assignment}, $C$: \text{cluster centers}
  \end{algorithmic}
\end{algorithm}

\begin{function}[H]
\caption{Variant of Function \ref{function:D-LOCAL} Minimizing $\Delta_1$}
\label{min-d-lo algorithm}
\begin{algorithmic}[1]
    \Function{Min-D-LO}{$X, W, P, C, D$}
        \State $\Delta_{\text{min}}=0$
        \State $(n^m, k^m_1, k^m_2)=(0,0,0)$
	\For{$n = 1, 2, \dots, N$}
		\State $k_1 \gets \text{min}(\arg\min_{k' \in [K]} D(x_n, c_{k'}))$\label{F2_min_min} 
            \For{$k_2 = 1, 2, \dots, k_1 - 1, k_1 + 1, \dots, K$}
    		\If{$\Delta_{1}(n, k_1, k_2) < \Delta_{\text{min}}$}\label{F2_delta_min} 
                    \State $\Delta_{\text{min}}=\Delta_{1}(n, k_1, k_2)$  
                    \State $(n^m, k^m_1, k^m_2)=(n, k_1, k_2)$    
                \EndIf                
            \EndFor             
        \EndFor
        \If{$\Delta_{\text{min}} < 0$}
            \State $p_{k^m_2, n^m} \gets 1$
            \State Recalculate $c_{k^m_2}$ \label{F2_centers2_min}  
            \If{$s_{k^m_1}(P) = w_{n^m}$}
                \State $p_{k^m_1, n^m} \gets 0$                      
            \Else
                \State $p_{k^m_1, n^m} \gets 0$
                \State Recalculate $c_{k^m_{1}}$\label{F2_centers1_min}  
            \EndIf
        \EndIf
    \EndFunction
\end{algorithmic}
\end{function}

\subsection{Details of the Real-World Datasets}
\label{subsec:dataset}
We conduct our experiments on the following five datasets.
\begin{itemize}
    \item \textbf{Iris \cite{fisher1936use}}: This dataset consists of 150 instances and 4 features, where each instance represents a plant. 
    \item \textbf{Wine Quality \cite{cortez2009modeling}}: A dataset with 6,497 instances and 11 features, where each instance corresponds to a wine sample with quality ratings.
    \item \textbf{Yeast \cite{nakai1991expert,nakai1992knowledge}}: This dataset contains 1,484 instances and 8 features, where each instance represents a protein sample with attributes related to its cellular localization.
    \item \textbf{Predict Students’ Dropout and Academic Success \cite{martins2021early}}: This dataset consists of 4,424 instances and 36 features related to students' academic performance and dropout risk.
    \item \textbf{News20 \cite{scikit-learn20newsgroups}}: This dataset contains 11,314 instances and 131,017 features, representing word frequencies in news articles. Since both the number of instances and features are large, experiments were conducted in the following two cases.
    \begin{enumerate}
        \item Using the first 2,000 instances, focusing on the 1,089 features with word frequencies between 2\% and 80\%. \label{dataset: news1}
        \item Using the first 200 instances and 131,017 features. \label{dataset: news2}
    \end{enumerate}
\end{itemize}

\section{Additional Experiments}
\label{sec:add exp}
This section presents experimental results not covered in \cref{cha:experiment}.
\subsection{Synthetic Datasets}
\label{subsec:add syn}
Recall that synthetic datasets were generated by uniformly sampling $N$ data points from the space $[1, 10]^{d}$, restricted to integer values. If the same point is selected multiple times, the number of times it is sampled is assigned as its weight.

\cref{fig:it ratio C,fig:new times C} present the percentage increase in the number of iterations and the number of times the new step (\cref{line:LO-K new step,line:LO-K new step C,line:LO-K new step D}) was invoked, respectively, when there is an improvement in the clustering loss using C-LO instead of K-means. As seen from \cref{fig:new times C}, the new step (\cref{line:LO-K new step,line:LO-K new step C,line:LO-K new step D}) only needs to be called once in most cases. 

\begin{figure}[H]
    \centering
    \includegraphics[width=0.24\textwidth]{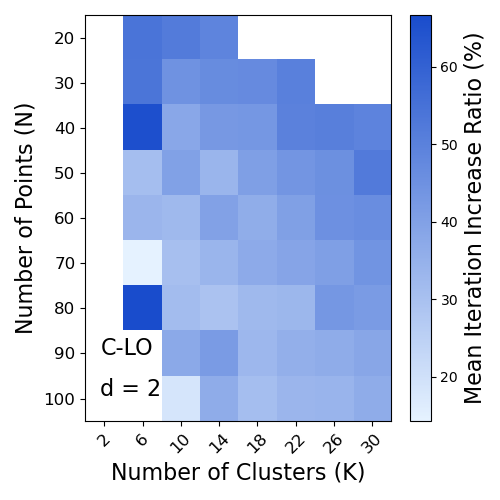}
    \includegraphics[width=0.24\textwidth]{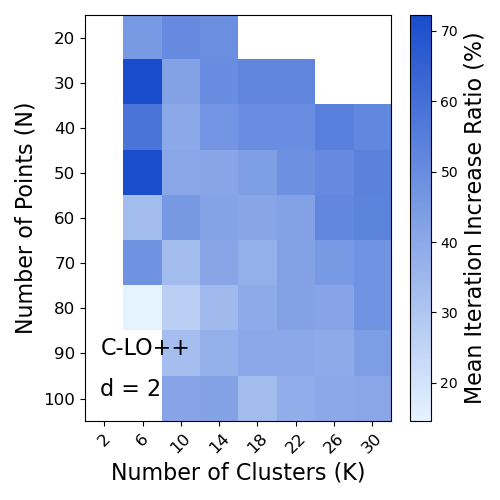}
    \includegraphics[width=0.24\textwidth]{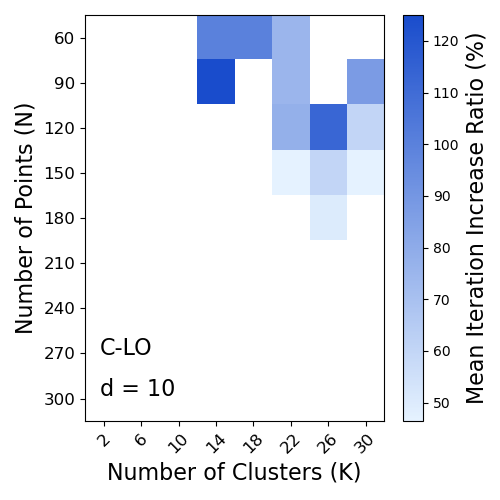}
    \includegraphics[width=0.24\textwidth]{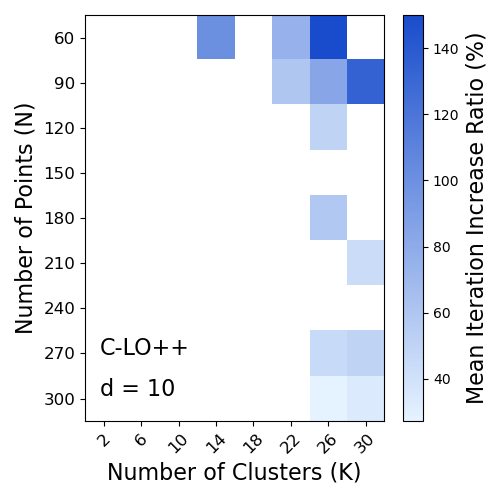}
    \caption{The average rate of increase in the number of iterations when C-LO improves over K-means across two different initialization methods and dimensions, with $D$ equal to the squared Euclidean distance. Each $N, K$ cell represents the results from 1{,}000 runs of both algorithms. The ratio is given by $(I_{\text{LO}} - I)/I$, where $I_{\text{LO}}$ is the number of iterations using C-LO and $I$ is the number of iterations using K-means. Darker colors indicate a higher percentage increase in iterations.}
    \label{fig:it ratio C}

    \vskip 0.2in

    \includegraphics[width=0.24\textwidth]{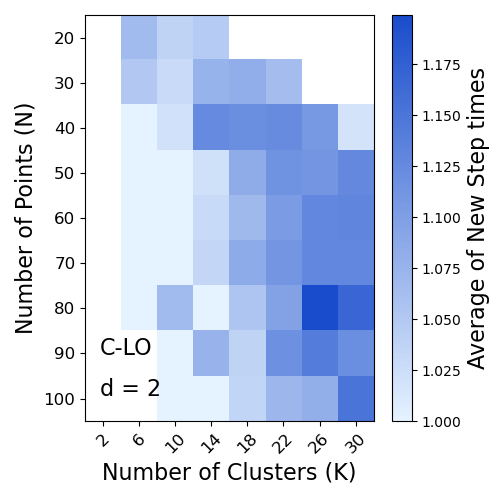}
    \includegraphics[width=0.24\textwidth]{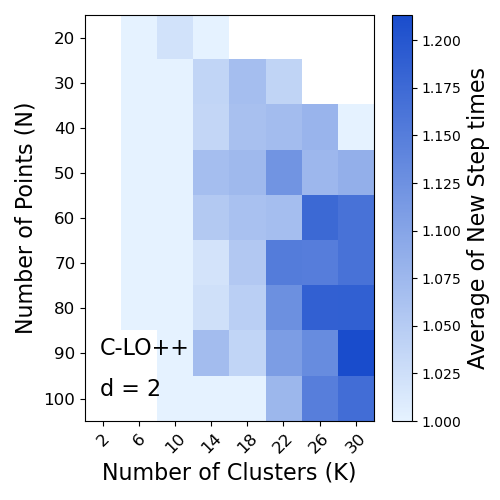}
    \includegraphics[width=0.24\textwidth]{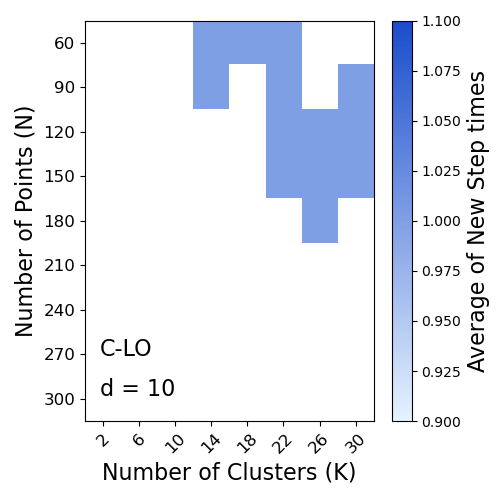}
    \includegraphics[width=0.24\textwidth]{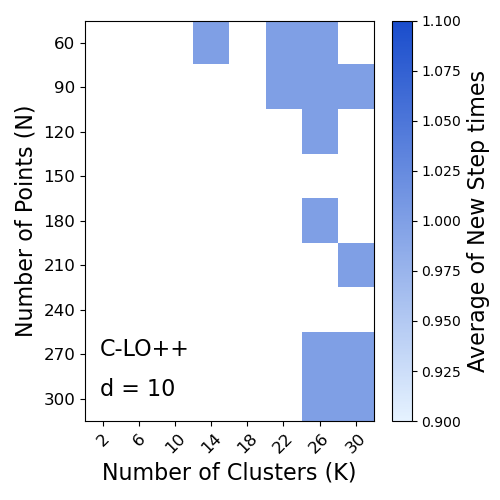}
    \caption{The average number of times the new step was invoked when C-LO improved over K-means across two different initialization methods and dimensions, with $D$ equal to the squared Euclidean distance. Each $N, K$ cell represents the results from 1{,}000 runs of all algorithms. Darker colors indicate a higher frequency of new step invocations.}
    \label{fig:new times C}
    \vskip -0.2in
\end{figure}

\cref{fig:diff ratio D,fig:imp ratio D,fig:it ratio D,fig:new times D} show the experimental results for D-LO. These results indicate that D-LO improves the clustering loss in many cases, regardless of the dimensionality. When the number of clusters $K$ is greater than 5, the clustering loss decreases in most cases. The percentage decrease in the clustering loss and the increase in iterations tend to be higher when the number of clusters $K$ is large relative to the number of data points $N$ (\cref{fig:imp ratio D,fig:it ratio D}). Additionally, the number of times the new step is invoked increases when both $N$ and $K$ are large (\cref{fig:new times D}).

\begin{figure}[H]
    \centering
    \includegraphics[width=0.24\textwidth]{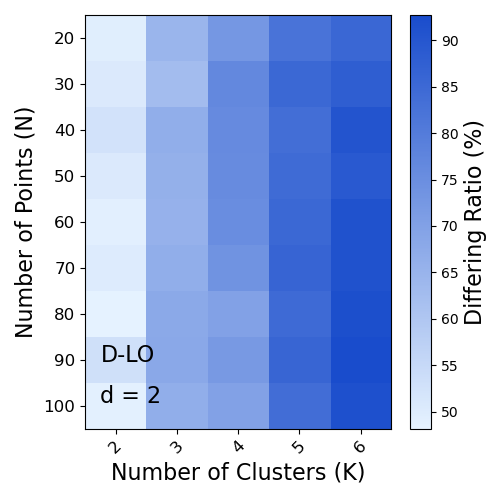}
    \includegraphics[width=0.24\textwidth]{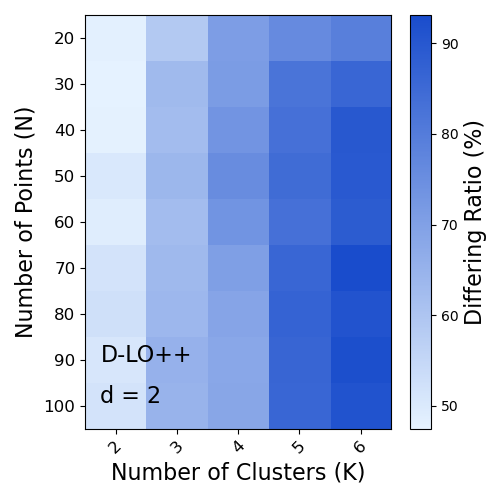}
    \includegraphics[width=0.24\textwidth]{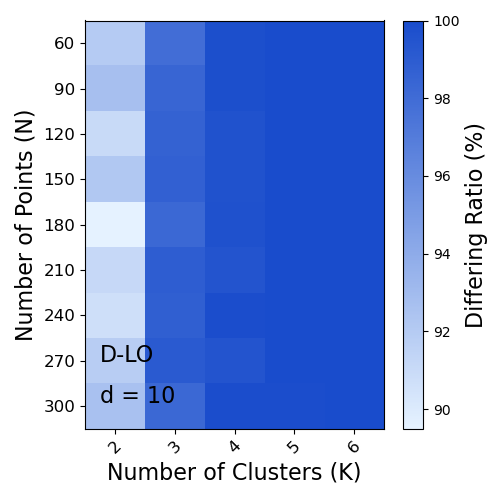}
    \includegraphics[width=0.24\textwidth]{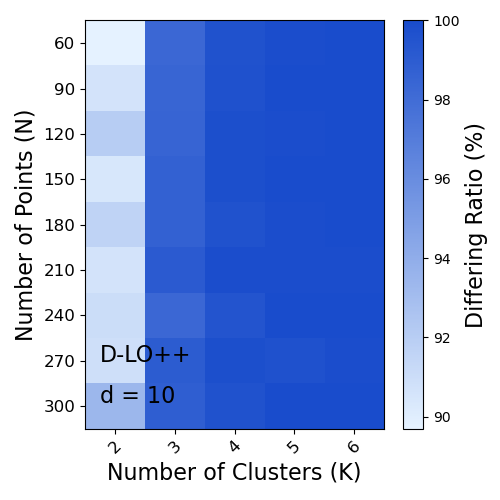}
    \caption{The proportion of cases where the clustering loss is improved over K-means by using D-LO  across two different initialization methods and dimensions, with $D$ equal to the squared Euclidean distance. Each $N, K$ cell represents the results from 1{,}000 runs of both algorithms. Darker colors indicate a higher frequency of clustering loss improvement.}
    \label{fig:diff ratio D}
    
    \vskip 0.2in
    
    \includegraphics[width=0.24\textwidth]{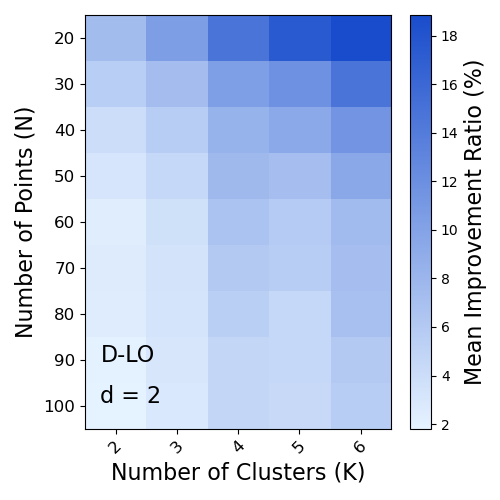}
    \includegraphics[width=0.24\textwidth]{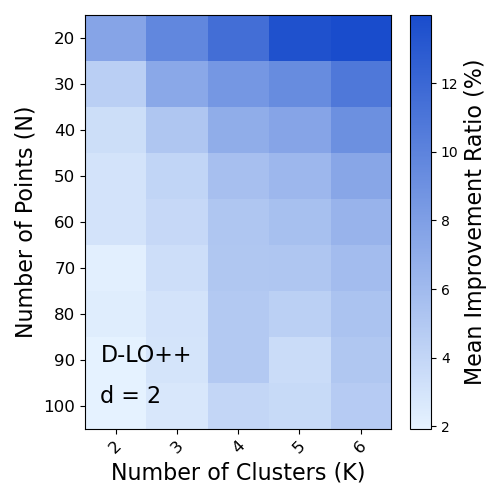}
    \includegraphics[width=0.24\textwidth]{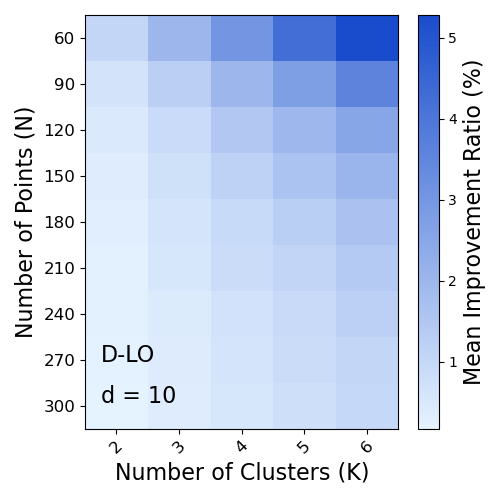}
    \includegraphics[width=0.24\textwidth]{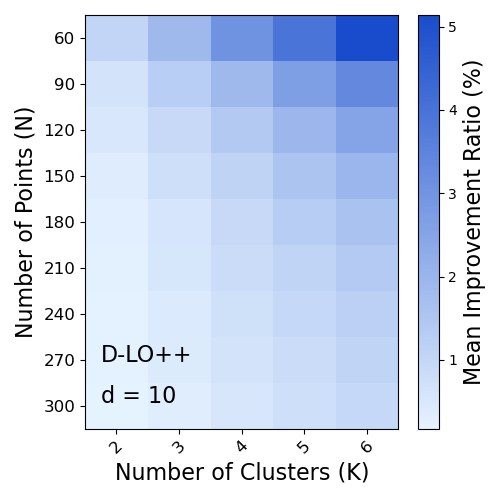}
    \caption{
    The average improvement rate of the clustering loss when D-LO improves over K-means across two different initialization methods and dimensions, with $D$ equal to the squared Euclidean distance. Each $N, K$ cell represents the results from 1{,}000 runs of both algorithms. The ratio is given by $(F(P) - F(P_{\text{LO}}))/F(P)$, where $P_{\text{LO}}$ is the output of D-LO and $P$ is the output of K-means. Darker colors indicate a higher percentage of clustering loss improvement.}
    \label{fig:imp ratio D}
    
    \vskip 0.2in
    
    \includegraphics[width=0.24\textwidth]{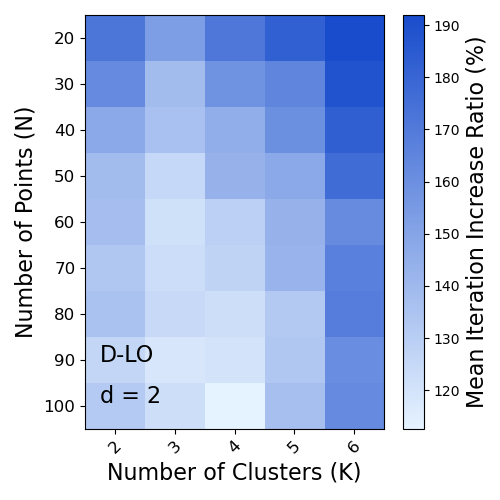}
    \includegraphics[width=0.24\textwidth]{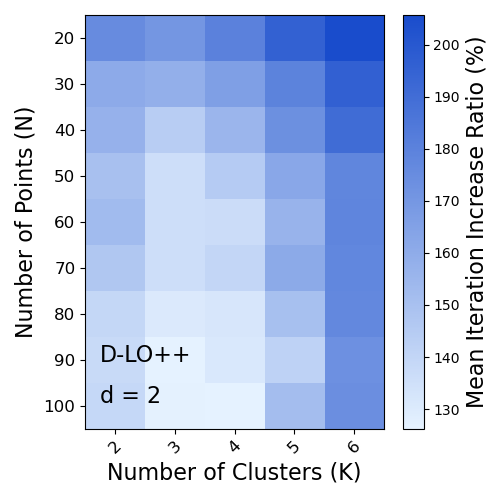}
    \includegraphics[width=0.24\textwidth]{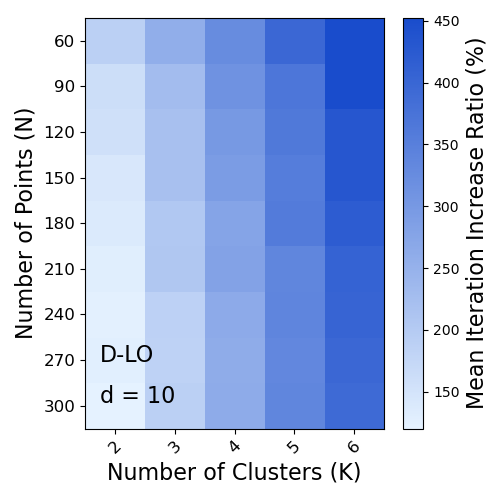}
    \includegraphics[width=0.24\textwidth]{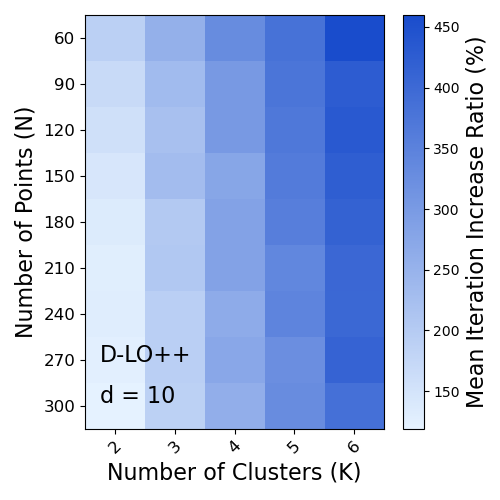}
    \caption{The average rate of increase in the number of iterations when D-LO improves over K-means across two different initialization methods and dimensions, with $D$ equal to the squared Euclidean distance. Each $N, K$ cell represents the results from 1{,}000 runs of both algorithms. The ratio is given by $(I_{\text{LO}} - I)/I$, where $I_{\text{LO}}$ is the number of iterations using D-LO and $I$ is the number of iterations using K-means. Darker colors indicate a higher percentage increase in iterations.}   
    \label{fig:it ratio D}
    \vskip -0.2in
\end{figure}

\begin{figure}[H]
    \centering
    \includegraphics[width=0.24\textwidth]{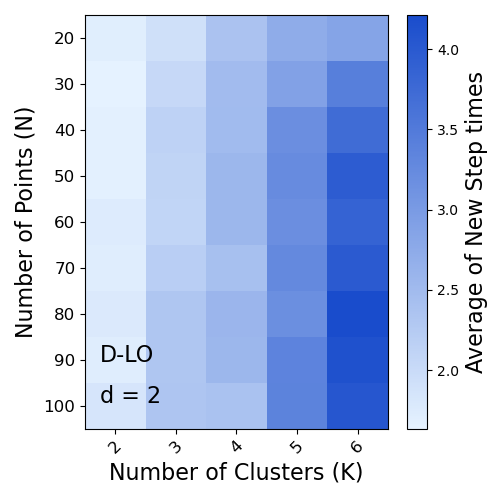}
    \includegraphics[width=0.24\textwidth]{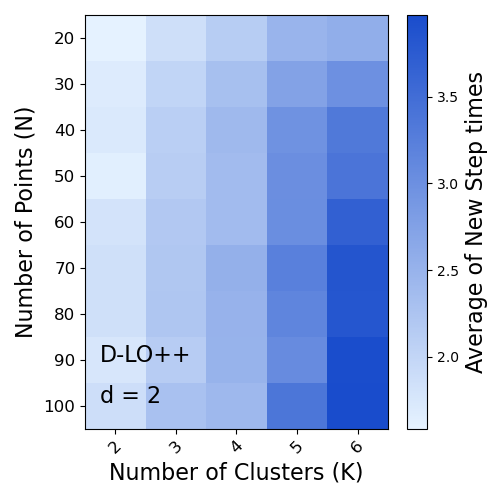}
    \includegraphics[width=0.24\textwidth]{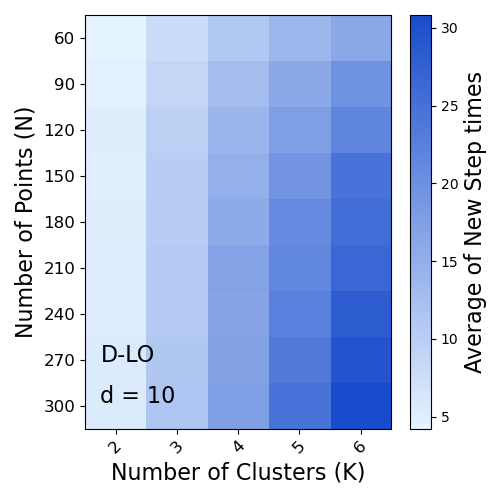}
    \includegraphics[width=0.24\textwidth]{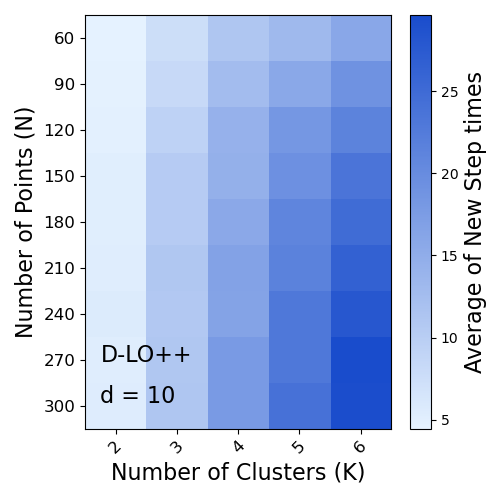}
    \caption{The average number of times the new step was invoked when D-LO improves over K-means across two different initialization methods and dimensions, with $D$ equal to the squared Euclidean distance. Each $N, K$ cell represents the results from 1{,}000 runs of all algorithms. Darker colors indicate a higher frequency of new step invocations.}
    \label{fig:new times D}
    \vskip -0.2in
\end{figure}

\subsection{Real-World Datasets}
\label{sec:realworld_res_appendix}
In this subsection, we present the results for real-world datasets that were not covered in \cref{subsec:real}.  

As shown in the following tables, the clustering loss of C-LO is identical to that of K-means in all cases. These findings suggest that although the K-means algorithm was not explicitly designed to guarantee this property, it still converges to a C-local solution in most real-world datasets. However, by comparing the computation time of K-means with C-LO, it is observed that the time to verify that the K-means algorithm has converged to a C-local solution using C-LO is negligible. 

Across all datasets, D-LO and Min-D-LO consistently outperform K-means in terms of both the mean and the minimum of the clustering loss, regardless of the initialization method. Notably, stronger improvements were observed in high-dimensional datasets such as in \cref{tab:news20-1 dataset,tab:news20-2 dataset}.

\begin{table}[H]
    \centering
    \caption{Iris dataset ($N = 150, d = 4$): Mean, variance, and minimum of the clustering loss, along with the average computation time and average number of iterations over 20 runs for each initialization method and number of clusters, for K-means, C-LO, D-LO, and Min-D-LO with $D$ chosen as the squared Euclidean distance.}
    \label{tab:Iris dataset}
    \vskip 0.15in
    \resizebox{1\textwidth}{!}{
    \begin{tabular}{|c|c|cccc|cccc|}
        \hline
        & Initialization & \multicolumn{4}{|c|}{Random} & \multicolumn{4}{|c|}{K-means++} \\
        \hline
        $K$ & Algorithm & Mean $\pm$ Variance & Minimum & Time(s) & Num Iter & Mean $\pm$ Variance & Minimum & Time(s) & Num Iter \\
        \hline
        \multirow{4}{*}{$5$}      & K-means & $57.54 \pm 9.78$ & $\mathbf{46.54}$ & $ < 0.001$ & $9$ & $50.58 \pm 4.74$ & $\mathbf{46.54}$ & $ < 0.001$ & $8$ \\
                                  & C-LO & $57.54 \pm 9.78$ & $\mathbf{46.54}$ & $ < 0.001$ & $9$ & $50.58 \pm 4.74$ & $\mathbf{46.54}$ & $ < 0.001$ & $8$ \\
                                  & D-LO & $\mathbf{57.32} \pm 9.83$ & $\mathbf{46.54}$ & $ < 0.001$ & $ 13$ & $\mathbf{50.30} \pm 4.70$ & $\mathbf{46.54}$ & $ < 0.001$ & $ 13$ \\
                                  & Min-D-LO & $\mathbf{57.32} \pm 9.83$ & $\mathbf{46.54}$ & $ < 0.001$ & $ 13$ & $\mathbf{50.30} \pm 4.70$ & $\mathbf{46.54}$ & $ < 0.001$ & $ 13$ \\
        \hline
        \multirow{4}{*}{$10$}     & K-means  & $31.55 \pm 5.42$ & $26.78$ & $ < 0.001$ & $8$ & $29.57 \pm 2.97$ & $26.01$ & $ < 0.001$ & $7$ \\
                                  & C-LO & $31.55 \pm 5.42$ & $26.78$ & $ < 0.001$ & $8$ & $29.57 \pm 2.97$ & $26.01$ & $ < 0.001$ & $7$ \\
                                  & D-LO & $\mathbf{30.53} \pm 5.00$ & $\mathbf{26.18}$ & $ < 0.001$ & $ 20$ & $\mathbf{28.92} \pm 3.00$ & $\mathbf{25.94}$ & $ < 0.001$ & $ 17$ \\
                                  & Min-D-LO & $30.55 \pm 4.99$ & $\mathbf{26.18}$ & $ < 0.001$ & $ 19$ & $28.93 \pm 3.00$ & $\mathbf{25.94}$ & $ < 0.001$ & $ 17$ \\
        \hline
        \multirow{4}{*}{$25$}     & K-means  & $15.98 \pm 1.64$ & $13.67$ & $ < 0.001$ & $7$ & $13.73 \pm 0.68$ & $12.70$ & $ < 0.001$ & $6$ \\
                                  & C-LO & $15.98 \pm 1.64$ & $13.67$ & $ < 0.001$ & $7$ & $13.73 \pm 0.68$ & $12.70$ & $ < 0.001$ & $6$ \\
                                  & D-LO & $14.59 \pm 1.68$ & $\mathbf{11.89}$ & $< 0.001$ & $ 36$ & $\mathbf{12.58} \pm 0.45$ & $\mathbf{11.83}$ & $ < 0.001$ & $ 31$ \\
                                  & Min-D-LO & $\mathbf{14.49} \pm 1.76$ & $12.02$ & $<0.001$ & $ 30$ & $12.61 \pm 0.43$ & $12.07$ & $ < 0.001$ & $27$ \\
        \hline
        \multirow{4}{*}{$50$}     & K-means  & $8.65 \pm 1.27$ & $7.05$ & $ < 0.001$ & $5$ & $6.40 \pm 0.34$ & $5.52$ & $ < 0.001$ & $5$ \\
                                  & C-LO & $8.65 \pm 1.27$ & $7.05$ & $ < 0.001$ & $5$ & $6.40 \pm 0.34$ & $5.52$ & $ < 0.001$ & $5$ \\
                                  & D-LO & $7.13 \pm 1.50$ & $5.66$ & $0.002$ & $ 48$ & $\mathbf{5.36} \pm 0.24$ & $\mathbf{5.04}$ & $0.002$ & $ 37$ \\
                                  & Min-D-LO & $\mathbf{7.09} \pm 1.35$ & $\mathbf{5.62}$ & $0.002$ & $ 39$ & $5.40 \pm 0.23$ & $\mathbf{5.04}$ & $0.002$ & $ 30$ \\
        \hline
    \end{tabular}
    }
\end{table}
\begin{table}[H]
    \centering
    \caption{Wine Quality dataset ($N = 6{,}497, d = 11$): Mean, variance, and minimum of the clustering loss, along with the average computation time and average number of iterations over 20 runs for each initialization method and number of clusters, for K-means, C-LO, D-LO, and Min-D-LO with $D$ chosen as the squared Euclidean distance.}
    \label{tab:Wine Quality dataset}
    \vskip 0.15in
    \resizebox{1\textwidth}{!}{
    \begin{tabular}{|c|c|cccc|cccc|}
        \hline
        & Initialization & \multicolumn{4}{|c|}{Random} & \multicolumn{4}{|c|}{K-means++} \\
        \hline
        $K$ & Algorithm & Mean $\pm$ Variance & Minimum & Time(s) & Num Iter & Mean $\pm$ Variance & Minimum & Time(s) & Num Iter \\
        \hline
        \multirow{4}{*}{$5$}      & K-means & $2{,}393{,}869 \pm 67$ & $2{,}393{,}751$ & $0.005$ & $40$ & $2{,}499{,}611 \pm 211{,}445$ & $2{,}393{,}754$ & $0.005$ & $40$ \\
                                  & C-LO & $2{,}393{,}869 \pm 67$ & $2{,}393{,}751$ & $0.006$ & $40$ & $2{,}499{,}611 \pm 211{,}445$ & $2{,}393{,}754$ & $0.005$ & $40$ \\
                                  & D-LO & $\mathbf{2{,}393{,}759} \pm 12$ & $\mathbf{2{,}393{,}742}$ & $0.006$ & $ 49$ & $\mathbf{2{,}393{,}753} \pm 11$ & $\mathbf{2{,}393{,}742}$ & $0.008$ & $ 63$ \\
                                  & Min-D-LO & $\mathbf{2{,}393{,}759} \pm 12$ & $\mathbf{2{,}393{,}742}$ & $0.006$ & $ 49$ & $\mathbf{2{,}393{,}753} \pm 11$ & $2{,}393{,}746$ & $0.008$ & $ 62$ \\
        \hline
        \multirow{4}{*}{$10$}     & K-means  & $1{,}378{,}087 \pm 6{,}569$ & $\mathbf{1{,}367{,}222}$ & $0.01$ & $51$ & $1{,}381{,}737 \pm 9{,}516$ & $1{,}365{,}020$ & $0.01$ & $52$ \\
                                  & C-LO & $1{,}378{,}087 \pm 6{,}569$ & $\mathbf{1{,}367{,}222}$ & $0.01$ & $51$ & $1{,}381{,}737 \pm 9{,}516$ & $1{,}365{,}020$ & $0.01$ & $52$ \\
                                  & D-LO & $\mathbf{1{,}377{,}711} \pm 6{,}571$ & $\mathbf{1{,}367{,}222}$ & $0.02$ & $ 74$ & $\mathbf{1{,}380{,}941} \pm 9{,}674$ & $\mathbf{1{,}364{,}944}$ & $0.02$ & $ 87$ \\
                                  & Min-D-LO & $\mathbf{1{,}377{,}711} \pm 6{,}571$ & $\mathbf{1{,}367{,}222}$ & $0.02$ & $ 74$ & $\mathbf{1{,}380{,}941} \pm 9{,}674$ & $\mathbf{1{,}364{,}944}$ & $0.02$ & $ 84$ \\
        \hline
        \multirow{4}{*}{$25$}     & K-means  & $681{,}397 \pm 24{,}123$ & $659{,}902$ & $0.04$ & $86$ & $654{,}219 \pm 15{,}173$ & $631{,}244$ & $0.03$ & $52$ \\
                                  & C-LO & $681{,}397 \pm 24{,}123$ & $659{,}902$ & $0.04$ & $86$ & $654{,}219 \pm 15{,}173$ & $631{,}244$ & $0.02$ & $52$ \\
                                  & D-LO & $\mathbf{665{,}882} \pm 7{,}059$ & $\mathbf{654{,}126}$ & $0.12$ & $ 235$ & $649{,}046 \pm 14{,}736$ & $\mathbf{630{,}546}$ & $0.06$ & $ 121$ \\
                                  & Min-D-LO & $666{,}641 \pm 7{,}082$ & $\mathbf{654{,}126}$ & $0.10$ & $ 202$ & $\mathbf{649{,}001} \pm 14{,}774$ & $631{,}157$ & $0.05$ & $ 107$ \\
        \hline
        \multirow{4}{*}{$50$}     & K-means  & $434{,}056 \pm 14{,}094$ & $402{,}580$ & $0.11$ & $63$ & $376{,}544 \pm 8{,}469$ & $367{,}108$ & $0.05$ & $45$ \\
                                  & C-LO & $434{,}056 \pm 14{,}094$ & $402{,}580$ & $0.08$ & $63$ & $376{,}544 \pm 8{,}469$ & $367{,}108$ & $0.06$ & $45$ \\
                                  & D-LO & $\mathbf{424{,}122} \pm 20{,}475$ & $\mathbf{381{,}856}$ & $0.28$ & $ 259$  & $\mathbf{372{,}716} \pm 5{,}701$ & $365{,}447$ & $0.21$ & $ 188$ \\
                                  & Min-D-LO & $424{,}435 \pm 20{,}610$ & $382{,}197$ & $0.29$ & $ 229$ & $373{,}075 \pm 5{,}877$ & $\mathbf{364{,}596}$ & $0.18$ & $ 164$ \\
        \hline
    \end{tabular}
    }
    \vspace{0.2in}

    \caption{Yeast dataset ($N = 1{,}484, d = 8$): Mean, variance, and minimum of the clustering loss, along with the average computation time and average number of iterations over 20 runs for each initialization method and number of clusters, for K-means, C-LO, D-LO, and Min-D-LO with $D$ chosen as the squared Euclidean distance.}
    \label{tab:Yeast dataset}
    \vskip 0.15in
    \resizebox{\columnwidth}{!}{
    \begin{tabular}{|c|c|cccc|cccc|}
        \hline
        & Initialization & \multicolumn{4}{|c|}{Random} & \multicolumn{4}{|c|}{K-means++} \\
        \hline
        $K$ & Algorithm & Mean $\pm$ Variance & Minimum & Time(s) & Num Iter & Mean $\pm$ Variance & Minimum & Time(s) & Num Iter \\
        \hline
        \multirow{4}{*}{$5$}      & K-means & $64.1811 \pm 0.5625$ & $\mathbf{63.3292}$ & $<0.001$ & $27$ & $64.7247 \pm 1.8557$ & $\mathbf{63.3292}$ & $<0.001$ & $19$ \\
                                  & C-LO & $64.1811 \pm 0.5625$ & $\mathbf{63.3292}$ & $<0.001$ & $27$ & $64.7247 \pm 1.8557$ & $\mathbf{63.3292}$ & $<0.001$ & $19$ \\
                                  & D-LO & $64.0656 \pm 0.2458$ & $\mathbf{63.3292}$ & $<0.001$ & $ 31$ & $64.7215 \pm 1.8564$ & $\mathbf{63.3292}$ & $<0.001$ & $ 21$ \\
                                  & Min-D-LO & $\mathbf{64.0655} \pm 0.2457$ & $\mathbf{63.3292}$ & $<0.001$ & $ 31$ & $\mathbf{64.7214} \pm 1.8564$ & $\mathbf{63.3292}$ & $<0.001$ & $ 21$ \\
        \hline
        \multirow{4}{*}{$10$}     & K-means  & $49.7885 \pm 2.9719$ & $45.9045$ & $0.001$ & $28$ & $47.4975 \pm 2.7205$ & $45.3131$ & $0.001$ & $28$ \\
                                  & C-LO & $49.7885 \pm 2.9719$ & $45.9045$ & $0.001$ & $28$ & $47.4975 \pm 2.7205$ & $45.3131$ & $0.001$ & $28$ \\
                                  & D-LO & $\mathbf{48.9709} \pm 3.0474$ & $\mathbf{45.7640}$ & $0.003$ & $ 68$ & $\mathbf{47.4400} \pm 2.6732$ & $\mathbf{45.3040}$ & $0.003$ & $ 56$ \\
                                  & Min-D-LO & $48.9734 \pm 3.0456$ & $\mathbf{45.7640}$ & $0.003$ & $ 61$ & $47.4409 \pm 2.6750$ & $\mathbf{45.3040}$ & $0.003$ & $ 52$ \\
        \hline
        \multirow{4}{*}{$25$}     & K-means  & $31.9759 \pm 1.0614$ & $30.2560$ & $0.003$ & $31$ & $31.4602 \pm 0.9611$ & $30.3055$ & $0.003$ & $29$ \\
                                  & C-LO & $31.9759 \pm 1.0614$ & $30.2560$ & $0.003$ & $31$ & $31.4602 \pm 0.9611$ & $30.3055$ & $0.003$ & $9$ \\
                                  & D-LO & $31.3294 \pm 1.0120$ & $30.1804$ & $0.02$ & $ 164$ & $\mathbf{31.1418} \pm 0.9025$ & $\mathbf{30.0612}$ & $0.02$ & $ 157$ \\
                                  & Min-D-LO & $\mathbf{31.3285} \pm 1.0552$ & $\mathbf{30.1802}$ & $0.02$ & $ 131$ & $31.1563 \pm 0.8695$ & $30.1331$ & $0.02$ & $ 126$ \\
        \hline
        \multirow{4}{*}{$50$}     & K-means  & $23.4076 \pm 0.2833$ & $22.7046$ & $0.005$ & $26$ & $22.9436 \pm 0.2097$ & $22.5824$ & $0.005$ & $24$ \\
                                  & C-LO & $23.4076 \pm 0.2833$ & $22.7046$ & $0.006$ & $26$ & $22.9436 \pm 0.2097$ & $22.5824$ & $0.006$ & $24$ \\
                                  & D-LO & $22.8217 \pm 0.2970$ & $22.2496$ & $0.07$ & $ 286$  & $\mathbf{22.4312} \pm 0.1982$ & $\mathbf{22.0767}$ & $0.06$ & $ 257$ \\
                                  & Min-D-LO & $\mathbf{22.7697} \pm 0.2797$ & $\mathbf{22.2465}$ & $0.06$ & $ 221$ & $22.4631 \pm 0.1899$ & $22.1379$ & $0.05$ & $ 182$ \\
        \hline
    \end{tabular}
    }
\end{table}

\begin{table}[H]
    \centering
    \caption{Predict Students' Dropout and Academic Success dataset ($N = 4{,}424, d = 36$): Mean, variance, and minimum of the clustering loss, along with the average computation time and average number of iterations over 20 runs for each initialization method and number of clusters, for K-means, C-LO, D-LO, and Min-D-LO with $D$ chosen as the squared Euclidean distance.}
    \label{tab:Predict dataset}
    \vskip 0.15in
    \resizebox{\columnwidth}{!}{
    \begin{tabular}{|c|c|cccc|cccc|}
        \hline
        & Initialization & \multicolumn{4}{|c|}{Random} & \multicolumn{4}{|c|}{K-means++} \\
        \hline
        $K$ & Algorithm & Mean $\pm$ Variance & Minimum & Time(s) & Num Iter & Mean $\pm$ Variance & Minimum & Time(s) & Num Iter \\
        \hline
        \multirow{4}{*}{$5$}      & K-means & $153{,}167{,}019 \pm 110{,}756{,}304$ & $\mathbf{38{,}258{,}341}$ & $0.002$ & $7$ & $\mathbf{44{,}852{,}897} \pm 13{,}571{,}603$ & $\mathbf{38{,}258{,}341}$ & $0.001$ & $4$ \\
                                  & C-LO & $153{,}167{,}019 \pm 110{,}756{,}304$ & $\mathbf{38{,}258{,}341}$ & $0.002$ & $7$& $\mathbf{44{,}852{,}897} \pm 13{,}571{,}603$ & $\mathbf{38{,}258{,}341}$ & $0.001$ & $4$ \\
                                  & D-LO & $\mathbf{153{,}166{,}788} \pm 110{,}756{,}446$ & $\mathbf{38{,}258{,}341}$ & $0.003$ & $8$ & $\mathbf{44{,}852{,}897} \pm 13{,}571{,}603$ & $\mathbf{38{,}258{,}341}$ & $0.001$ & $4$ \\
                                  & Min-D-LO & $\mathbf{153{,}166{,}788} \pm 110{,}756{,}446$ & $\mathbf{38{,}258{,}341}$ & $0.003$ & $8$ & $\mathbf{44{,}852{,}897} \pm 13{,}571{,}603$ & $\mathbf{38{,}258{,}341}$ & $0.001$ & $ 4$ \\
        \hline
        \multirow{4}{*}{$10$}     & K-means  & $18{,}215{,}696 \pm 3{,}940{,}334$ & $\mathbf{13{,}417{,}982}$ & $0.008$ & $13$ & $14{,}799{,}899 \pm 1{,}951{,}762$ & $\mathbf{11{,}998{,}592}$ & $0.005$ & $9$ \\
                                  & C-LO & $18{,}215{,}696 \pm 3{,}940{,}334$ & $\mathbf{13{,}417{,}982}$ & $0.008$ & $13$ & $14{,}799{,}899 \pm 1{,}951{,}762$ & $\mathbf{11{,}998{,}592}$ & $0.005$ & $9$ \\
                                  & D-LO & $\mathbf{18{,}149{,}464} \pm 3{,}957{,}643$ & $\mathbf{13{,}417{,}982}$ & $0.01$ & $ 16$ & $14{,}770{,}849 \pm 1{,}956{,}191$ & $\mathbf{11{,}998{,}592}$ & $0.007$ & $12$ \\
                                  & Min-D-LO & $\mathbf{18{,}149{,}464} \pm 3{,}957{,}643$ & $\mathbf{13{,}417{,}982}$ & $0.01$ & $ 16$ & $\mathbf{14{,}770{,}794} \pm 1{,}956{,}074$ & $\mathbf{11{,}998{,}592}$ & $0.007$ & $ 12$ \\
        \hline
        \multirow{4}{*}{$25$}     & K-means  & $7{,}892{,}024 \pm 1{,}036{,}157$ & $7{,}037{,}040$ & $0.03$ & $21$ & $7{,}129{,}832 \pm 331{,}980$ & $6{,}652{,}215$ & $0.03$ & $20$ \\
                                  & C-LO & $7{,}892{,}024 \pm 1{,}036{,}157$ & $7{,}037{,}040$ & $0.03$ & $21$ & $7{,}129{,}832 \pm 331{,}980$ & $6{,}652{,}215$ & $0.03$ & $20$ \\
                                  & D-LO & $\mathbf{7{,}852{,}829} \pm 1{,}032{,}327$ & $\mathbf{7{,}036{,}545}$ & $0.08$ & $ 57$ & $\mathbf{7{,}054{,}448} \pm 272{,}093$ & $\mathbf{6{,}648{,}003}$ & $0.06$ & $40$ \\
                                  & Min-D-LO & $7{,}854{,}002 \pm 1{,}031{,}819$ & $7{,}036{,}990$ & $0.08$ & $ 55$ & $7{,}055{,}679 \pm 274{,}184$ & $\mathbf{6{,}648{,}003}$ & $0.06$ & $ 38$ \\
        \hline
        \multirow{4}{*}{$50$}     & K-means  & $5{,}476{,}986 \pm 298{,}309$ & $5{,}053{,}050$ & $0.07$ & $26$ & $5{,}020{,}874 \pm 
 148{,}015$ & $4{,}796{,}299$ & $0.07$ & $25$ \\
                                  & C-LO & $5{,}476{,}986 \pm 298{,}309$ & $5{,}053{,}050$ & $0.07$ & $26$ & $5{,}020{,}874 \pm 148{,}015$ & $4{,}796{,}299$ & $0.07$ & $25$ \\
                                  & D-LO & $\mathbf{5{,}418{,}530} \pm 299{,}341$ & $5{,}045{,}559$ & $0.32$ & $ 113$ & $\mathbf{4{,}982{,}183} \pm 132{,}509$ & $\mathbf{4{,}787{,}903}$ & $0.28$ & $ 99$ \\
                                  & Min-D-LO & $5{,}419{,}034 \pm 294{,}567$ & $\mathbf{4{,}957{,}810}$ & $0.31$ & $ 106$ & $4{,}983{,}925 \pm 135{,}801$ & $4{,}788{,}567$ & $0.25$ & $ 86$ \\
        \hline
    \end{tabular}
    }
\end{table}

\subsection{Experiments with Other Dissimilarity Measures (KL Divergence \& Itakura-Saito Divergence)}
\label{subsec:other}
We also conducted experiments using dissimilarity measures $D$ other than the squared Euclidean distance. 

\cref{tab:Iris dataset (KL),tab:Yeast dataset (KL),tab:Iris dataset (Itakura),tab:Yeast dataset (Itakura)} present the experimental results for the Iris and Yeast datasets.  
In these experiments, KL divergence was used for the results in \cref{tab:Iris dataset (KL),tab:Yeast dataset (KL)}, while Itakura-Saito divergence was used for those in \cref{tab:Iris dataset (Itakura),tab:Yeast dataset (Itakura)}.

The KL divergence has $\text{dom}(\phi) = \mathbb{R}_{+}^d$, while the Itakura-Saito divergence has $\text{dom}(\phi) = \mathbb{R}_{++}^d$. Therefore, we preprocessed the datasets accordingly, noting that neither the Iris nor Yeast dataset contains negative values. When using these divergences, dimensions that do not belong to $\text{int dom}(\phi) = \mathbb{R}_{++}^d$ were excluded from consideration. As a result, the dimensionality of the Yeast dataset was reduced from 8 to 4.

As shown in \cref{tab:Iris dataset (KL),tab:Yeast dataset (KL),tab:Iris dataset (Itakura),tab:Yeast dataset (Itakura)}, D-LO and Min-D-LO consistently reduced the clustering loss regardless of the dissimilarity measure used.

\begin{table}[H]
    \vspace{-5pt}
    \centering
    \caption{Iris dataset ($N = 150, d = 4$): Mean, variance, and minimum of the clustering loss, along with the average computation time and average number of iterations over 20 runs for each number of clusters, for K-means, C-LO, D-LO, and Min-D-LO with $D$ chosen as the KL divergence.}
    \label{tab:Iris dataset (KL)}
    \vskip 0.15in
    \resizebox{0.65\columnwidth}{!}{
    \begin{tabular}{|c|c|cccc|}
        \hline
        $K$ & Algorithm & Mean $\pm$ Variance & Minimum & Time(s)  & Num Iter \\
        \hline
        \multirow{4}{*}{$5$}      & K-means &$10.3353 \pm 4.3956$ & $\mathbf{7.3918}$ & $<0.001$ & $7$ \\
                                  & C-LO & $10.3353 \pm 4.3956$ & $\mathbf{7.3918}$ & $<0.001$ & $7$ \\
                                  & D-LO & $\mathbf{10.2722} \pm 4.4126$ & $\mathbf{7.3918}$ & $0.002$ & $ 13$ \\
                                  & Min-D-LO & $\mathbf{10.2722} \pm 4.4126$ & $\mathbf{7.3918}$ & $0.002$ & $ 13$ \\
        \hline
        \multirow{4}{*}{$10$}     & K-means  & $5.0824 \pm 0.4933$ & $4.5094$ & $0.002$ & $9$ \\
                                  & C-LO & $5.0824 \pm 0.4933$ & $4.5094$ & $0.002$ & $9$ \\
                                  & D-LO & $\mathbf{4.9544} \pm 0.5108$ & $4.3596$ & $0.006$ & $ 22$ \\
                                  & Min-D-LO & $4.9588 \pm 0.5056$ & $\mathbf{4.3561}$ & $0.007$ & $ 20$ \\
        \hline
        \multirow{4}{*}{$25$}     & K-means  & $2.6667 \pm 0.2369$ & $2.3077$ & $0.003$ & $7$ \\
                                  & C-LO & $2.6667 \pm 0.2369$ & $2.3077$ & $0.003$ & $7$ \\
                                  & D-LO & $\mathbf{2.3793} \pm 0.1636$ & $2.1739$ & $0.03$ & $ 44$ \\
                                  & Min-D-LO & $2.3917 \pm 0.1726$ & $\mathbf{2.1579}$ & $0.04$ & $ 36$  \\
        \hline
        \multirow{4}{*}{$50$}     & K-means  & $1.4186 \pm 0.1209$ & $1.2542$ & $0.005$ & $5$ \\
                                  & C-LO & $1.4186 \pm 0.1209$ & $1.2542$ & $0.005$ & $5$ \\
                                  & D-LO & $\mathbf{1.1139} \pm 0.1230$ & $\mathbf{0.9836}$ & $0.09$ & $ 63$ \\
                                  & Min-D-LO & $1.1260 \pm 0.1274$ & $1.0009$ & $0.12$ & $ 51$  \\
        \hline
    \end{tabular}
    }
\end{table}

\begin{table}[H]
    \centering
    \caption{Yeast dataset ($N = 1{,}484, d = 4$): Mean, variance, and minimum of the clustering loss, along with the average computation time and average number of iterations over 20 runs for each number of clusters, for K-means, C-LO, D-LO, and Min-D-LO with $D$ chosen as the KL divergence.}
    \label{tab:Yeast dataset (KL)}
    \vskip 0.15in
    \resizebox{0.65\columnwidth}{!}{
    \begin{tabular}{|c|c|cccc|}
        \hline
        $K$ & Algorithm & Mean $\pm$ Variance & Minimum & Time(s)  & Num Iter \\
        \hline
        \multirow{4}{*}{$5$}      & K-means & $24.7930 \pm 0.1004$ & $24.7061$ & $0.02$ & $27$ \\
                                  & C-LO & $24.7930 \pm 0.1004$ & $24.7061$ & $0.02$ & $27$ \\
                                  & D-LO & $\mathbf{24.7698} \pm 0.0698$ & $24.7061$ & $0.05$ & $ 46$ \\
                                  & Min-D-LO & $24.7704 \pm 0.0695$ & $\mathbf{24.7058}$ & $0.05$ & $ 43$ \\
        \hline
        \multirow{4}{*}{$10$}     & K-means  & $17.1498 \pm 0.2249$ & $16.3044$ & $0.05$ & $33$ \\
                                  & C-LO & $17.1498 \pm 0.2249$ & $16.3044$ & $0.05$ & $33$ \\
                                  & D-LO & $\mathbf{17.1263} \pm 0.2197$ & $\mathbf{16.2683}$ & $0.11$ & $ 57$ \\
                                  & Min-D-LO & $17.1273 \pm 0.2207$ & $\mathbf{16.2683}$ & $0.12$ & $ 54$ \\
        \hline
        \multirow{4}{*}{$25$}     & K-means  & $9.6594 \pm 0.6781$ & $9.0472$ & $0.12$ & $31$ \\
                                  & C-LO & $9.6594 \pm 0.6781$ & $9.0472$ & $0.12$ & $31$ \\
                                  & D-LO & $\mathbf{9.2017} \pm 0.3940$ & $8.9245$ & $0.61$ & $ 121$ \\
                                  & Min-D-LO & $9.3429 \pm 0.5815$ & $\mathbf{8.9205}$ & $0.65$ & $ 99$ \\
        \hline
        \multirow{4}{*}{$50$}     & K-means  & $5.8949 \pm 0.2300$ & $5.5487$ & $0.24$ & $32$ \\
                                  & C-LO & $5.8949 \pm 0.2300$ & $5.5487$ & $0.24$ & $32$ \\
                                  & D-LO & $\mathbf{5.6315} \pm 0.2318$ & $\mathbf{5.2839}$ & $2.34$ & $ 213$ \\
                                  & Min-D-LO & $5.6817 \pm 0.2305$ & $5.3299$ & $2.45$ & $ 163$  \\
        \hline
    \end{tabular}
    }
    \vspace{0.2in}
    \caption{Iris dataset ($N = 150, d = 4$): Mean, variance, and minimum of the clustering loss, along with the average computation time and average number of iterations over 20 runs for each number of clusters, for K-means, C-LO, D-LO, and Min-D-LO with $D$ chosen as the Itakura-Saito divergence.}
    \label{tab:Iris dataset (Itakura)}
    \vskip 0.15in
    \resizebox{0.65\columnwidth}{!}{
    \begin{tabular}{|c|c|cccc|}
        \hline
        $K$ & Algorithm & Mean $\pm$ Variance & Minimum & Time(s)  & Num Iter \\
        \hline
        \multirow{4}{*}{$5$}      & K-means & $5.5113 \pm 1.3082$ & $\mathbf{3.4763}$ & $< 0.001$ & $7$ \\
                                  & C-LO & $5.5113 \pm 1.3082$ & $\mathbf{3.4763}$ & $< 0.001$ & $7$ \\
                                  & D-LO & $\mathbf{5.4170} \pm 1.3316$ & $\mathbf{3.4763}$ & $0.001$ & $ 10$ \\
                                  & Min-D-LO & $\mathbf{5.4170} \pm 1.3317$ & $\mathbf{3.4763}$ & $0.001$ & $ 10$ \\
        \hline
        \multirow{4}{*}{$10$}     & K-means  & $2.2542 \pm 0.5641$ & $\mathbf{1.7175}$ & $0.001$ & $9$ \\
                                  & C-LO & $2.2542 \pm 0.5641$ & $\mathbf{1.7175}$ & $0.001$ & $9$ \\
                                  & D-LO & $\mathbf{2.2213} \pm 0.5613$ & $\mathbf{1.7175}$ & $0.004$ & $ 15$ \\
                                  & Min-D-LO & $2.2222 \pm 0.5623$ & $\mathbf{1.7175}$ & $0.004$ & $ 15$ \\
        \hline
        \multirow{4}{*}{$25$}     & K-means  & $1.0272 \pm 0.1306$ & $0.7975$ & $0.003$ & $8$ \\
                                  & C-LO & $1.0272 \pm 0.1306$ & $0.7975$ & $0.003$ & $8$ \\
                                  & D-LO & $0.9031 \pm 0.0637$ & $\mathbf{0.7828}$ & $0.03$ & $ 37$ \\
                                  & Min-D-LO & $\mathbf{0.8997} \pm 0.0686$ & $\mathbf{0.7828}$ & $0.03$ & $ 31$  \\
        \hline
        \multirow{4}{*}{$50$}     & K-means  & $0.5015 \pm 0.0544$ & $0.3934$ & $0.005$ & $6$ \\
                                  & C-LO & $0.5015 \pm 0.0544$ & $0.3934$ & $0.005$ & $6$ \\
                                  & D-LO & $0.4067 \pm 0.0352$ & $\mathbf{0.3341}$ & $0.06$ & $ 51$ \\
                                  & Min-D-LO & $\mathbf{0.4063} \pm 0.0344$ & $0.3356$ & $0.08$ & $ 45$  \\
        \hline
    \end{tabular}
    }
\end{table}

\begin{minipage}{\textwidth}
\begin{table}[H]
    \centering
    \caption{Yeast dataset ($N = 1{,}484, d = 4$): Mean, variance, and minimum of the clustering loss, along with the average computation time and average number of iterations over 20 runs for each number of clusters, for K-means, C-LO, D-LO, and Min-D-LO with $D$ chosen as the Itakura-Saito divergence.}
    \label{tab:Yeast dataset (Itakura)}
    \vskip 0.15in
    \resizebox{0.65\columnwidth}{!}{
    \begin{tabular}{|c|c|cccc|}
        \hline
        $K$ & Algorithm & Mean $\pm$ Variance & Minimum & Time(s)  & Num Iter \\
        \hline
        \multirow{4}{*}{$5$}      & K-means & $52.6593 \pm 0.3565$ & $52.2224$ & $0.02$ & $28$ \\
                                  & C-LO & $52.6593 \pm 0.3565$ & $52.2224$ & $0.02$ & $28$ \\
                                  & D-LO & $\mathbf{52.5749} \pm 0.3242$ & $\mathbf{52.2220}$ & $0.04$ & $ 43$ \\
                                  & Min-D-LO & $\mathbf{52.5749} \pm 0.3242$ & $\mathbf{52.2220}$ & $0.04$ & $ 42$ \\
        \hline
        \multirow{4}{*}{$10$}     & K-means  & $35.5623 \pm 0.5344$ & $34.6253$ & $0.04$ & $31$ \\
                                  & C-LO & $35.5623 \pm 0.5344$ & $34.6253$ & $0.04$ & $31$ \\
                                  & D-LO & $\mathbf{35.4464} \pm 0.4127$ & $34.6202$ & $0.10$ & $ 65$ \\
                                  & Min-D-LO & $35.4539 \pm 0.4179$ & $\mathbf{34.6196}$ & $0.11$ & $ 59$ \\
        \hline
        \multirow{4}{*}{$25$}     & K-means  & $20.2199 \pm 1.0484$ & $19.1600$ & $0.09$ & $30$ \\
                                  & C-LO & $20.2199 \pm 1.0484$ & $19.1600$ & $0.09$ & $30$ \\
                                  & D-LO & $19.6271 \pm 0.9731$ & $18.7461$ & $0.48$ & $ 111$ \\
                                  & Min-D-LO & $\mathbf{19.5247} \pm 0.8298$ & $\mathbf{18.7344}$ & $0.57$ & $ 106$ \\
        \hline
        \multirow{4}{*}{$50$}     & K-means  & $12.4824 \pm 0.4306$ & $11.6719$ & $0.18$ & $29$ \\
                                  & C-LO & $12.4824 \pm 0.4306$ & $11.6719$ & $0.18$ & $29$ \\
                                  & D-LO & $\mathbf{12.0247} \pm 0.3544$ & $\mathbf{11.2726}$ & $1.58$ & $ 175$ \\
                                  & Min-D-LO & $12.0712 \pm 0.4092$ & $11.3139$ & $1.74$ & $ 145$ \\
        \hline
    \end{tabular}
    }
\end{table}
\vspace{0.2in}
\subsection{Comparison with the D-local Algorithm Proposed by \citet{peng2005cutting}}
\label{subsec:peng}
\vspace{0.09in}
To compare the computation time of D-LO and Min-D-LO with the D-local algorithm proposed in  \cite[Section 3.2.1]{peng2005cutting}, we evaluated their algorithm, hereafter referred to as D-LO-P\&X, on the most computationally challenging datasets, news20 \ref{dataset: news1} \& \ref{dataset: news2}. 

\vspace{0.09in}
As shown in \cref{tab:news20-1 dataset,tab:news20-2 dataset}, we observe that D-LO, Min-D-LO, and D-LO-P\&X consistently achieved similar levels of clustering loss, given that they all converge to D-local solutions, while D-LO-P\&X is significantly slower than all other methods (in red). This is because D-LO-P\&X is not based on the K-means algorithm, but instead on directly examining whether moving to an adjacent extreme point decreases the clustering loss.
\end{minipage}

\begin{table}[H]
    \centering
    \caption{News20 dataset \ref{dataset: news1} ($N = 2{,}000, d = 1{,}089$): Mean, variance, and minimum of the clustering loss, along with the average computation time and average number of iterations over 20 runs for each initialization method and number of clusters, for K-means, C-LO, D-LO, Min-D-LO, and D-LO-P\&X with $D$ chosen as the squared Euclidean distance.}
    \label{tab:news20-1 dataset}
    \vskip 0.15in
    \resizebox{1\textwidth}{!}{
    \begin{tabular}{|c|c|cccc|cccc|}
        \hline
        & Initialization & \multicolumn{4}{|c|}{Random} & \multicolumn{4}{|c|}{K-means++} \\
        \hline
        $K$ & Algorithm & Mean $\pm$ Variance & Minimum & Time(s) & Num Iter & Mean $\pm$ Variance & Minimum & Time(s) & Num Iter \\
        \hline
        \multirow{5}{*}{$5$}      & K-means & $984{,}655 \pm 55{,}781$ & $864{,}073$ & $0.31$ & $31$ & $870{,}899 \pm 80{,}013$ & $808{,}450$ & $0.20$ & $18$ \\
                                  & C-LO & $984{,}655 \pm 55{,}781$ & $864{,}073$ & $0.33$ & $31$ & $870{,}899 \pm 80{,}013$ & $808{,}450$ & $0.18$ & $18$ \\
                                  & D-LO & $808{,}391 \pm 0$ & $808{,}391$ & $1.94$ & $ 168$ & $\mathbf{806{,}236} \pm 3{,}292$ & $ \mathbf{801{,}207} $ & $0.82$ & $74$ \\
                                  & Min-D-LO & $\mathbf{808{,}032} \pm 1{,}566$ & $\mathbf{801{,}207}$ & $0.97$ & $ 81$  & $806{,}595 \pm 3{,}111$ & $ \mathbf{801{,}207} $ & $0.53$ & $48$ \\
                                  & D-LO-P\&X & $ 808{,}391 \pm 0 $ & $ 808{,}391 $ & $\mathbf{\textcolor{red}{26.92}} $ & $ 2{,}669 $ & $807{,}673 \pm 2{,}155$ & $\mathbf{801{,}207}$ & $\mathbf{\textcolor{red}{9.22}}$ & $859$\\
        \hline
        \multirow{5}{*}{$10$}     & K-means  & $919{,}058 \pm 73{,}594$ & $734{,}571$ & $0.71$ & $35$ & $697{,}527 \pm 32{,}211$ & $643{,}583$ & $0.48$ & $23$ \\
                                  & C-LO & $919{,}058 \pm 73{,}594$ & $734{,}571$ & $0.70$ & $35$ & $697{,}527 \pm 32{,}211$ & $643{,}583$ & $0.46$ & $23$ \\
                                  & D-LO & $637{,}004 \pm 4{,}319$ & $\mathbf{625{,}281}$ & $19.52$ & $ 870$ & $634{,}216 \pm 5{,}596$ & $625{,}467$ & $6.18$ & $288$ \\
                                  & Min-D-LO & $642{,}980 \pm 7{,}605$ & $637{,}400$ & $6.65$ & $ 317$ & $634{,}293 \pm 6{,}477$ & $625{,}468$ & $2.55$ & $125$ \\
                                  & D-LO-P\&X & $\mathbf{636{,}937} \pm 6{,}825 $ & $ \mathbf{625{,}281} $ & $\mathbf{\textcolor{red}{145.20}} $ & $ 7{,}310 $ & $\mathbf{632{,}331}
                                  \pm 5{,}715$ & $\mathbf{623{,}701}$ & $\mathbf{\textcolor{red}{40.77}}$ & $2{,}084$\\
        \hline
        \multirow{5}{*}{$25$}     & K-means  & $790{,}822 \pm 88{,}437$ & $650{,}038$ & $1.65$ & $34$ & $529{,}028 \pm 32{,}301$ & $487{,}823$ & $1.25$ & $26$ \\
                                  & C-LO & $790{,}822 \pm 88{,}437$ & $650{,}038$ & $1.67$ & $34$ & $529{,}028 \pm 32{,}301$ & $487{,}823$ & $1.32$ & $26$ \\
                                  & D-LO & $481{,}983 \pm 5{,}198$ & $475{,}651$ & $155.39$ & $ 3{,}016$ & $475{,}299 \pm 3{,}831$ & $468{,}201$ & $35.96$ & $705$ \\
                                  & Min-D-LO & $485{,}809 \pm 6{,}874$ & $473{,}159$ & $40.17$ & $ 787$ & $\mathbf{474{,}431} \pm 4{,}508$ & $\mathbf{467{,}745}$ & $15.77$ & $316$ \\
                                  & D-LO-P\&X & $\mathbf{480{,}415} \pm 6{,}816 $ & $\mathbf{468{,}503}$ & $\mathbf{\textcolor{red}{631.20}} $ & $ 12{,}799 $ & $475{,}962 \pm 4{,}071$ & $469{,}631$ & $\mathbf{\textcolor{red}{225.36}}$ & $4{,}682$\\
        \hline
        \multirow{5}{*}{$50$}     & K-means  & $731{,}980  \pm 91{,}535$ & $552{,}717$ & $2.72$ & $28$ & $439{,}029 \pm 10{,}015$ & $418{,}754$ & $3.02$ & $31$ \\
                                  & C-LO & $731{,}980  \pm 91{,}535$ & $552{,}717$ & $2.72$ & $28$ & $439{,}029 \pm 10{,}015$ & $418{,}754$ & $2.96$ & $31$ \\
                                  & D-LO & $400{,}826 \pm 2{,}920$ & $395{,}833$ & $314.36$ & $ 3{,}133$  & $\mathbf{392{,}016} \pm 1{,}513$ & $\mathbf{388{,}746}$ & $157.97$ & $1{,}228$ \\
                                  & Min-D-LO & $402{,}716 \pm 2{,}469$ & $398{,}453$ & $103.30$ & $ 1{,}040$ & $392{,}146 \pm 2{,}080$ & $388{,}990$ & $60.41$ & $533$ \\
                                  & D-LO-P\&X & $\mathbf{400{,}027} \pm 3{,}553$ & $\mathbf{393{,}731}$ & $\mathbf{\textcolor{red}{1{,}214.16}}$ & $12{,}563$ & $393{,}227 \pm 2{,}203$ & $390{,}815$ & $\mathbf{\textcolor{red}{596.58}}$ & $6{,}220$ \\
        \hline
    \end{tabular}
    }
    
    \vspace{0.2in}
    
    \caption{News20 dataset \ref{dataset: news2} ($N = 200, d = 130{,}107$): Mean, variance, and minimum of the clustering loss, along with the average computation time and average number of iterations over 20 runs for each initialization method and number of clusters, for K-means, C-LO, D-LO, Min-D-LO, and D-LO-P\&X with $D$ chosen as the squared Euclidean distance.}
    \label{tab:news20-2 dataset}
    \vskip 0.15in
    \resizebox{\textwidth}{!}{
    \begin{tabular}{|c|c|cccc|cccc|}
        \hline
        & Initialization & \multicolumn{4}{|c|}{Random} & \multicolumn{4}{|c|}{K-means++} \\
        \hline
        $K$ & Algorithm & Mean $\pm$ Variance & Minimum & Time(s) & Num Iter & Mean $\pm$ Variance & Minimum & Time(s) & Num Iter \\
        \hline
        \multirow{5}{*}{$5$}      & K-means & $108{,}473 \pm 16{,}501$ & $97{,}481$ & $1.80$ &  $13$ & $104{,}391 \pm 17{,}022$ & $97{,}522$ & $0.87$ & $6$\\
                                  & C-LO & $108{,}473 \pm 16{,}501$ & $97{,}481$ & $1.77$ & $13$ & $104{,}391 \pm 17{,}022$ & $97{,}522$ & $0.86$ & $6$ \\
                                  & D-LO & $\mathbf{97{,}473} \pm 3$ & $\mathbf{97{,}472}$ & $3.20$ & $ 21$ & $\mathbf{97{,}944} \pm 2{,}046$ & $\mathbf{97{,}472}$ & $2.67$ & $ 17$\\
                                  & Min-D-LO & $97{,}944 \pm 2{,}046$ & $\mathbf{97{,}472}$ & $3.46$ & $ 21$ & $99{,}134 \pm 3{,}323$ & $\mathbf{97{,}472}$ & $1.91$ & $ 12$ \\
                                  & D-LO-P\&X & $97{,}947 \pm 2{,}045$ & $\mathbf{97{,}472}$ & $\mathbf{\textcolor{red}{90.99}}$ & $622$ & $97{,}947 \pm 2{,}045$ & $\mathbf{97{,}472}$ & $\mathbf{\textcolor{red}{11.04}}$ & $79$ \\
        \hline
        \multirow{5}{*}{$10$}     & K-means  & $91{,}420 \pm 1{,}988$ & $85{,}537$ & $3.09$ & $11$ & $82{,}519 \pm 3{,}638$ & $76{,}319$ & $2.07$ & $7$ \\
                                  & C-LO & $91{,}420 \pm 1{,}988$ & $85{,}537$ & $2.97$ & $11$ & $82{,}519 \pm 3{,}638$ & $76{,}319$ & $2.08$ & $7$ \\
                                  & D-LO & $77{,}046 \pm 1{,}540$ & $74{,}083$ & $44.97$ & $ 144$ & $75{,}730 \pm 1{,}500$ & $74{,}044$ & $25.54$ & $ 77$\\
                                  & Min-D-LO & $76{,}064 \pm 915$ & $74{,}653$ & $23.71$ & $ 80$ & $\mathbf{75{,}359} \pm 1{,}082$ & $74{,}044$ & $14.46$ & $ 45$\\
                                  & D-LO-P\&X & $\mathbf{76{,}025} \pm 1{,}456$ & $\mathbf{74{,}064}$ & $\mathbf{\textcolor{red}{206.48}}$ & $763$ & $76{,}472 \pm 1{,}564$ & $\mathbf{73{,}840}$ & $\mathbf{\textcolor{red}{51.92}}$ & $181$\\
        \hline
        \multirow{5}{*}{$25$}     & K-means  & $80{,}707 \pm 3{,}564$ & $73{,}203$ & $5.43$ & $9$ & $61{,}430 \pm 3{,}570$ & $54{,}931$ & $4.34$ & $7$ \\
                                  & C-LO & $80{,}707 \pm 3{,}564$ & $73{,}203$ & $5.37$ & $9$ & $61{,}430 \pm 3{,}570$ & $54{,}931$ & $4.35$ & $7$ \\
                                  & D-LO & $\mathbf{50{,}773} \pm 642$ & $\mathbf{49{,}960}$ & $317.12$ & $ 464$ & $\mathbf{50{,}220} \pm 443$ & $\mathbf{49{,}503}$ & $144.74$ & $ 210$ \\
                                  & Min-D-LO & $51{,}224 \pm 787$ & $50{,}{,}235$ & $104.04$ & $ 153$ & $50{,}591 \pm 362$ & $49{,}820$ & $61.32$ & $88$ \\
                                  & D-LO-P\&X & $51{,}018 \pm 756$ & $50{,}117$ & $\mathbf{\textcolor{red}{762.70}}$ & $1{,}190$ & $50{,}394 \pm 349$ & $49{,}575$ & $\mathbf{\textcolor{red}{228.14}}$ & $353$ \\
        \hline
        \multirow{5}{*}{$50$}     & K-means  & $66{,}502 \pm 4{,}580$ & $58{,}491$ & $9.66$ & $8$ & $41{,}926 \pm 3{,}762$ & $35{,}524$ & $7.35$ & $6$ \\
                                  & C-LO & $66{,}502 \pm 4{,}580$ & $58{,}491$ & $9.71$ & $8$ & $41{,}926 \pm 3{,}762$ & $35{,}524$ & $7.29$ & $6$\\
                                  & D-LO & $\mathbf{32{,}108} \pm 134$ & $\mathbf{31{,}923}$ & $1{,}014.97$ & $ 769$ & $\mathbf{31{,}945} \pm 102$ & $\mathbf{31{,}819}$ & $470.17$ & $ 356$\\
                                  & Min-D-LO & $32{,}391 \pm 193$ & $32{,}032$ & $298.15$ & $ 227$ & $32{,}205 \pm 182$ & $31{,}897$ & $154.54$ & $ 118$ \\
                                  & D-LO-P\&X & $32{,}197 \pm 138$ & $31{,}995$ & $\mathbf{\textcolor{red}{1{,}684.39}}$ & $1{,}322$ & $32{,}006 \pm 97$ & $31{,}827$ & $\mathbf{\textcolor{red}{661.54}}$ & $518$\\
        \hline
    \end{tabular}
    }
\end{table}
\end{document}